\definecolor{forestgreen}{rgb}{0,0.5,0}
\title{\Large Balancing Performance and Costs in Best Arm Identification}
\author{%
	Michael O.~Harding \\
	Department of Statistics\\
	University of Wisconsin-Madison\\
	\texttt{moharding@wisc.edu} \\
	\And
	Kirthevasan Kandasamy \\
	Department of Computer Science\\
	University of Wisconsin-Madison\\
	\texttt{kandasamy@cs.wisc.edu} \\
}
\begin{document}

\maketitle

\newcommand{\insertAlgoMain}{%
    \begin{algorithm}[!ht]
        \caption{\textbf{D}ynamically \textbf{B}udgeted \textbf{C}ost-\textbf{A}dapted \textbf{R}isk-minimizing \textbf{E}limination}
        \label{alg:our-alg}
        \begin{algorithmic}[1]
            \Require{%sub-Gaussian parameter $\sigma\,$, number of arms $K\,$
        Dynamic budget function $\Nstar\,$, Confidence $\delta$}
            \State \emph{Initialization}: $\muhat_k(0) = 0\;\forall\;k\in[K]\,, e_0=0\,, t=0\,, n=0\,, S = [K]$
            \While{$n\leq \Nstar(\abs{S})$ AND $\abs{S}>1$}
            \State $n\gets n+1$
            \For{$k\in S$}
            \State $t\gets t+1$
            \State $A_t\gets k\,$, Observe $X_t\sim\nu_{A_t}$
            \EndFor
            \State $\muhat_k(n)\gets \frac{1}{n}\sum_{s=1}^{t}\1_{\{k\}}(A_s)X_s\,,$ for $k\in S$ 
            \State $e_n\gets\sqrt{4\sigma^2n^{-1}\log(Kn\delta^{-1})}$.
            \State $S\gets S\setminus\left\{k\in S: \max\limits_{\ell\in
S}\muhat_\ell(n)-\muhat_k(n) > e_n\right\}$.
            \EndWhile\\
            \Return $\argmax\limits_{a\in S}\muhat_a(n)$ (breaking ties randomly)
        \end{algorithmic}
    \end{algorithm}
}

\newcommand{\insertTwoArmBounds}{%
\begin{figure}[t]
	\centering
	\includegraphics{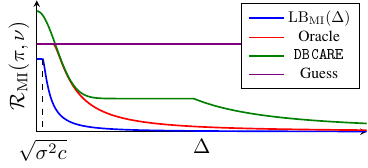}
	\hspace{0.05\linewidth}
	\includegraphics{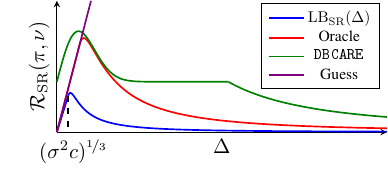}
	\caption{Illustrations of the lower and upper bounds on the risk for $\riskone$ (on the left) and $\risktwo$ (on the right) in the 2-arm case presented throughout \S~\ref{sec:2-arm}, with the performance of the policy which guesses an arm at random without pulling at all (Guess) included as a point of reference.}
	\label{fig:boundillus}
\end{figure}
}

\newcommand{\insertMainSimulation}{%
	\begin{figure}[t]
		\hspace*{\fill}
		\begin{minipage}[t]{0.2435\linewidth}
			\centering
			\vspace{0pt}
			\includegraphics[width=0.95\linewidth]{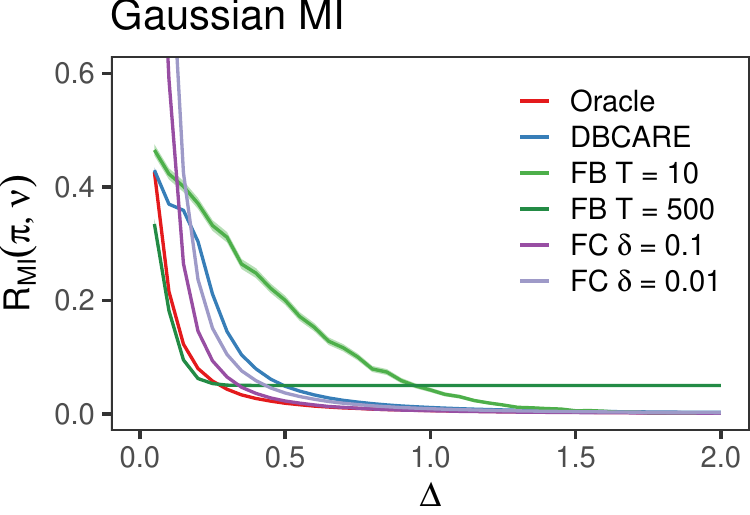}
		\end{minipage}
		\hfill
		\begin{minipage}[t]{0.2435\linewidth}
			\centering
			\vspace{0pt}
			\includegraphics[width=0.95\linewidth]{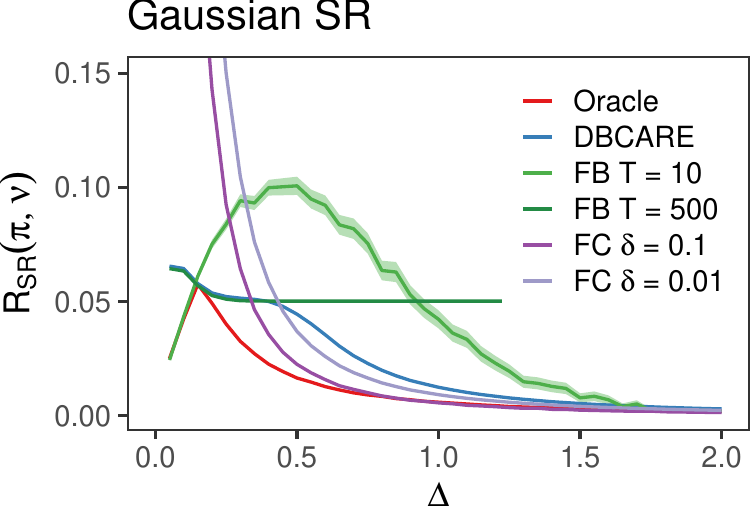}
		\end{minipage}
		\hfill
%		\hspace*{\fill}
%		
%		\hspace*{\fill}
		\begin{minipage}[t]{0.2435\linewidth}
			\centering
			\vspace{0pt}
			\includegraphics[width=0.95\linewidth]{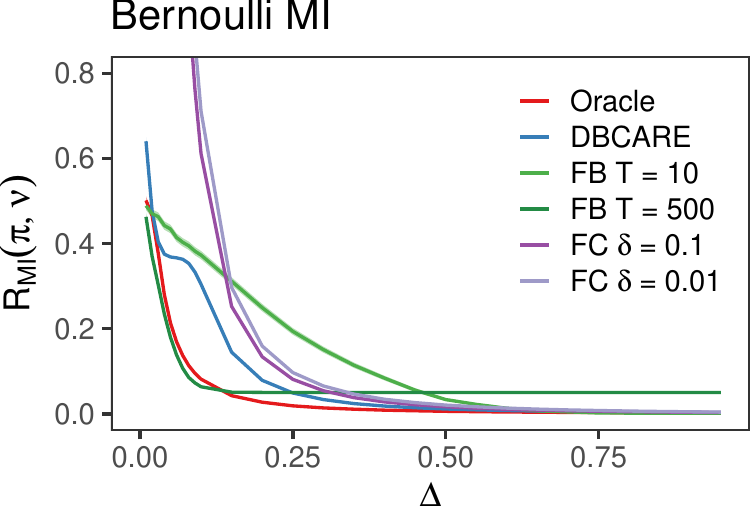}
		\end{minipage}
		\hfill
		\begin{minipage}[t]{0.2435\linewidth}
			\centering
			\vspace{0pt}
			\includegraphics[width=0.95\linewidth]{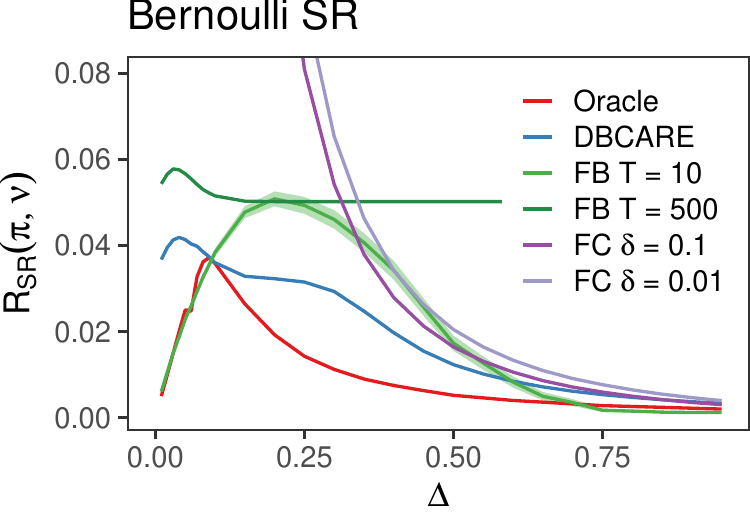}
		\end{minipage}
		\hspace*{\fill}
		\caption{Comparisons between the oracular policy, \algname, and fixed budget and confidence algorithms for $\riskone$ and $\risktwo$. $Y$-axes are adjusted per setting to highlight problem-specific behavior. Confidence regions represent empirical average risk $\pm$ 2 SE.}
		\label{fig:exp-two-arm}
	\end{figure}
}

\newcommand{\insertKSparse}{%
	\begin{figure}[t]
		\hspace*{\fill}
		\begin{minipage}[t]{0.32\linewidth}
			\centering
			\vspace{0pt}
			\includegraphics[width=0.95\linewidth]{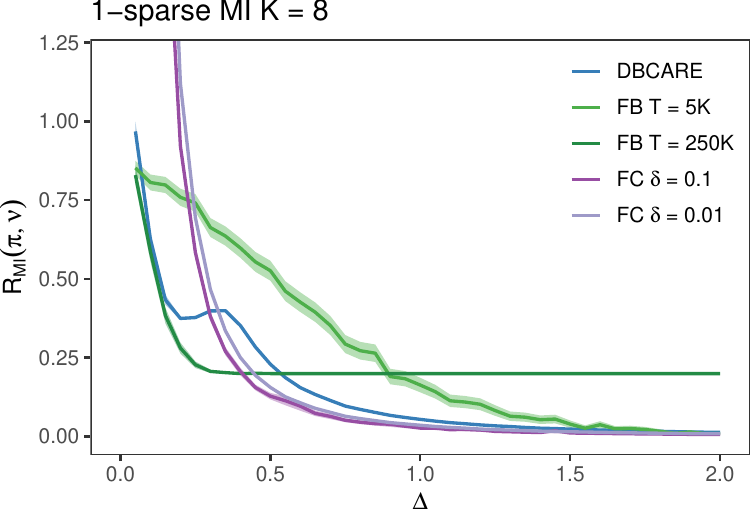}
		\end{minipage}
		\hfill
		\begin{minipage}[t]{0.32\linewidth}
			\centering
			\vspace{0pt}
			\includegraphics[width=0.95\linewidth]{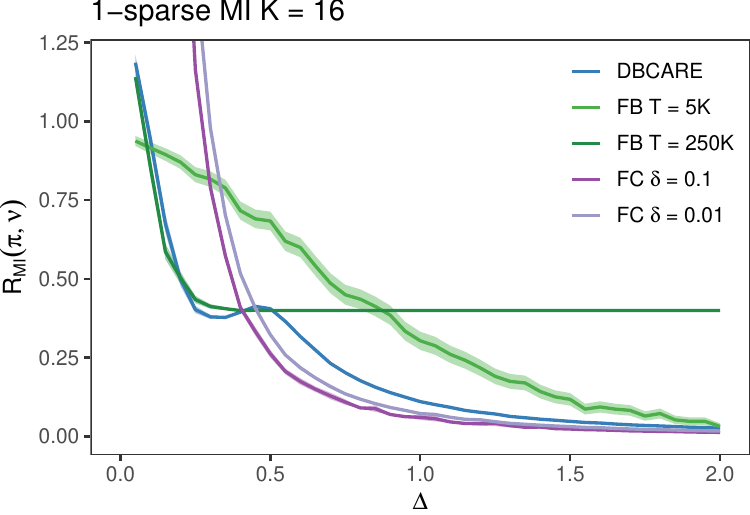}
		\end{minipage}
		\hfill
		\begin{minipage}[t]{0.32\linewidth}
			\centering
			\vspace{0pt}
			\includegraphics[width=0.95\linewidth]{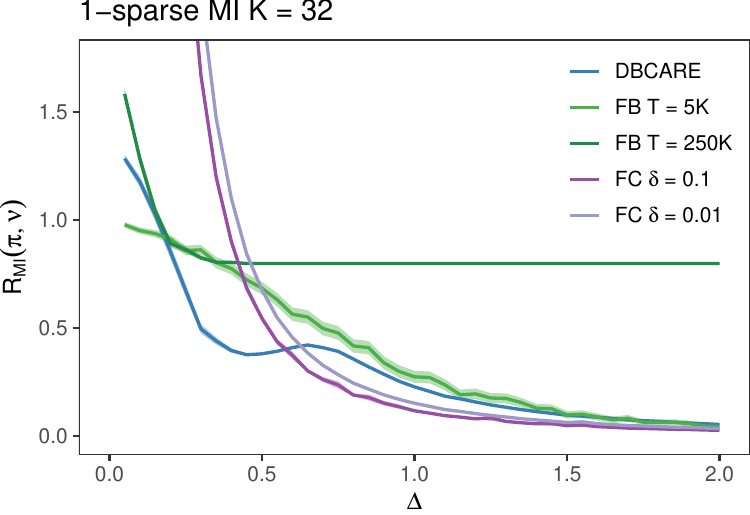}
		\end{minipage}
		\hspace*{\fill}
		
		\hspace*{\fill}
		\begin{minipage}[t]{0.32\linewidth}
			\centering
			\vspace{0pt}
			\includegraphics[width=0.95\linewidth]{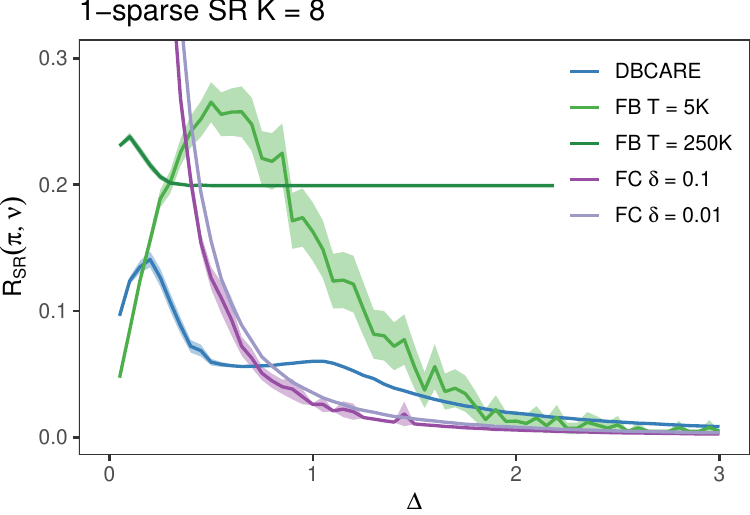}
		\end{minipage}
		\hfill
		\begin{minipage}[t]{0.32\linewidth}
			\centering
			\vspace{0pt}
			\includegraphics[width=0.95\linewidth]{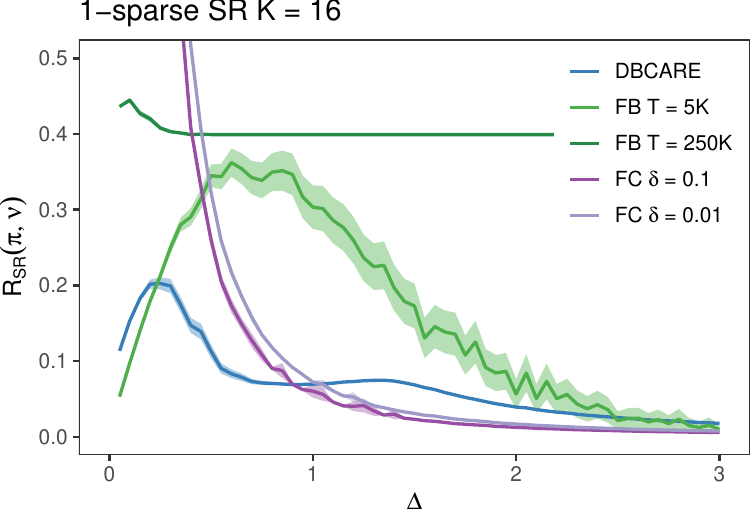}
		\end{minipage}
		\hfill
		\begin{minipage}[t]{0.32\linewidth}
			\centering
			\vspace{0pt}
			\includegraphics[width=0.95\linewidth]{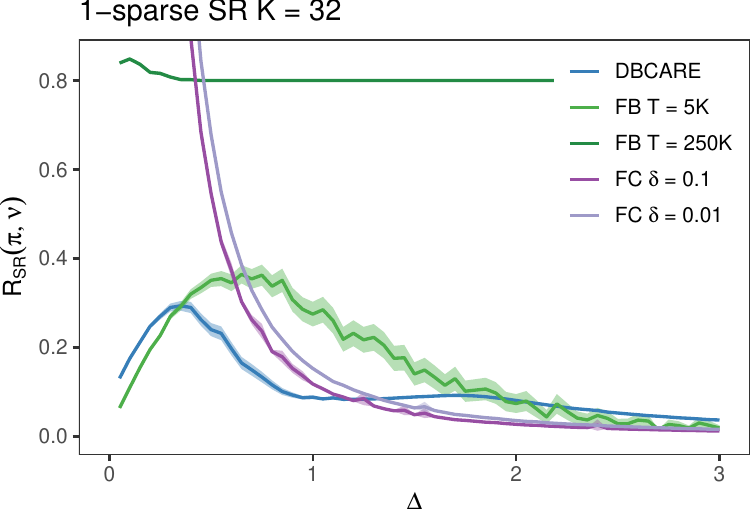}
		\end{minipage}
		\hspace*{\fill}
		\caption{Comparisons between \algname{} and fixed budget and confidence algorithms for $\riskone$ and $\risktwo$ in the $K$-arm 1-sparse setting. $Y$-axes are adjusted per setting to highlight problem-specific behavior. Confidence regions represent empirical average risk $\pm$2 SE.}
		\label{fig:exp-k-sparse}
	\end{figure}
}

\newcommand{\insertKLinear}{%
	\begin{figure}[t]
		\hspace*{\fill}
		\begin{minipage}[t]{0.32\linewidth}
			\centering
			\vspace{0pt}
			\includegraphics[width=0.95\linewidth]{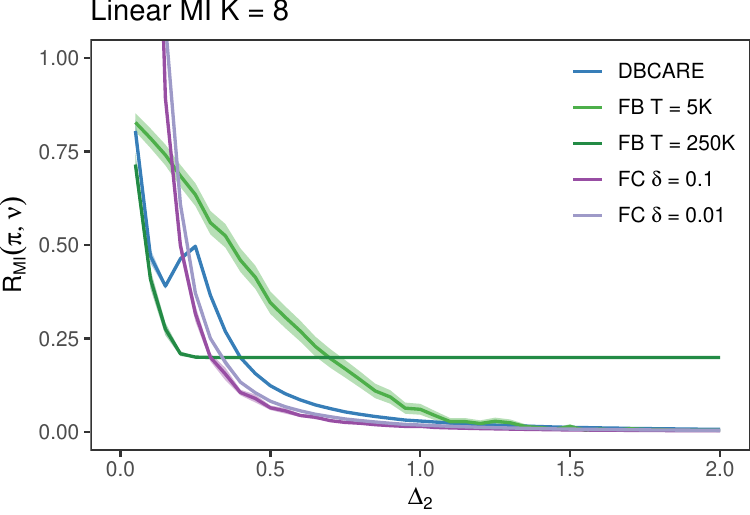}
		\end{minipage}
		\hfill
		\begin{minipage}[t]{0.32\linewidth}
			\centering
			\vspace{0pt}
			\includegraphics[width=0.95\linewidth]{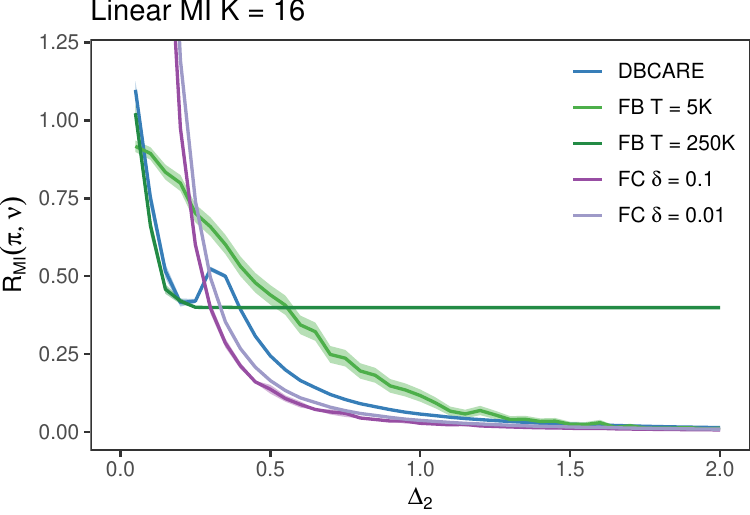}
		\end{minipage}
		\hfill
		\begin{minipage}[t]{0.32\linewidth}
			\centering
			\vspace{0pt}
			\includegraphics[width=0.95\linewidth]{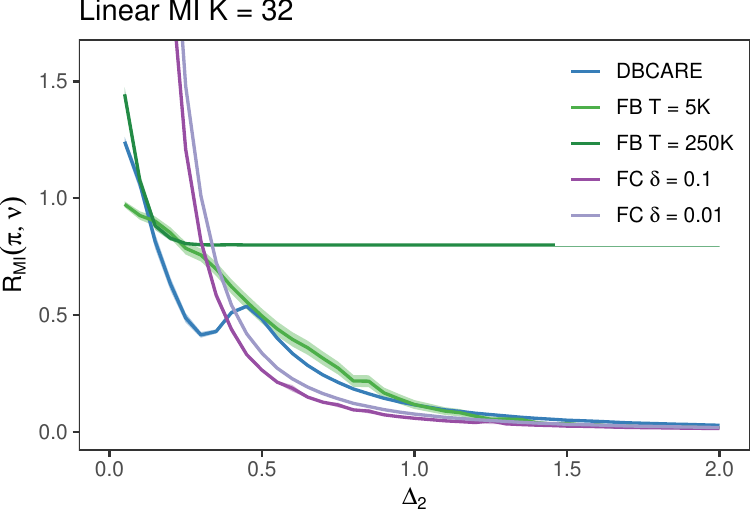}
		\end{minipage}
		\hspace*{\fill}
		
		\hspace*{\fill}
		\begin{minipage}[t]{0.32\linewidth}
			\centering
			\vspace{0pt}
			\includegraphics[width=0.95\linewidth]{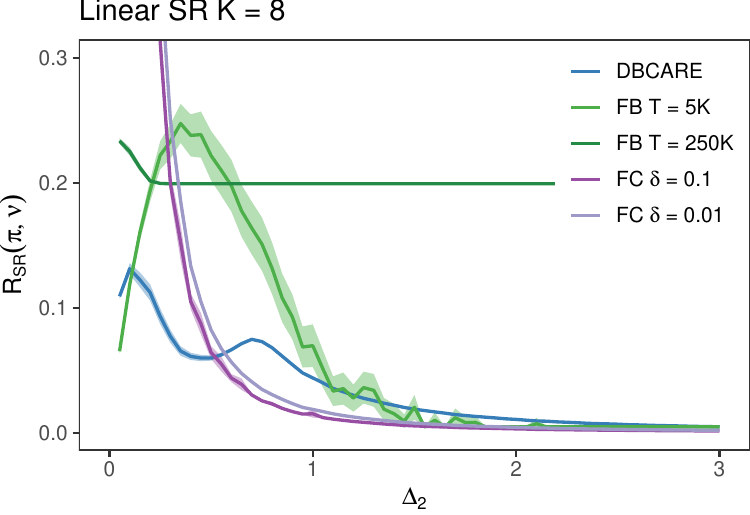}
		\end{minipage}
		\hfill
		\begin{minipage}[t]{0.32\linewidth}
			\centering
			\vspace{0pt}
			\includegraphics[width=0.95\linewidth]{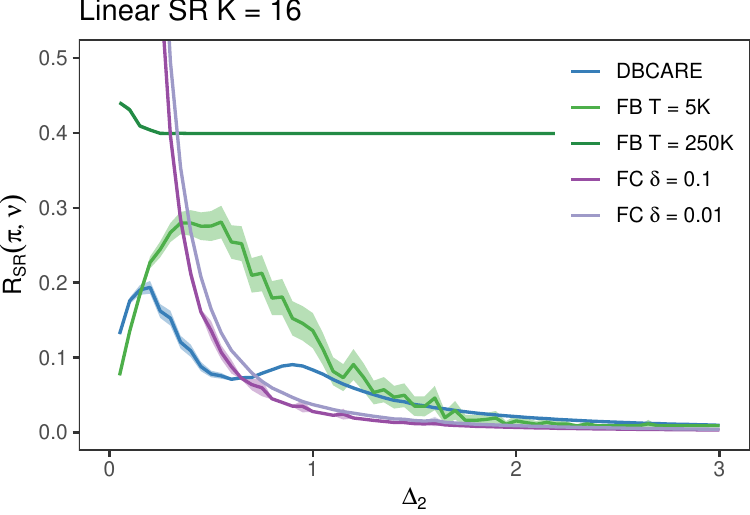}
		\end{minipage}
		\hfill
		\begin{minipage}[t]{0.32\linewidth}
			\centering
			\vspace{0pt}
			\includegraphics[width=0.95\linewidth]{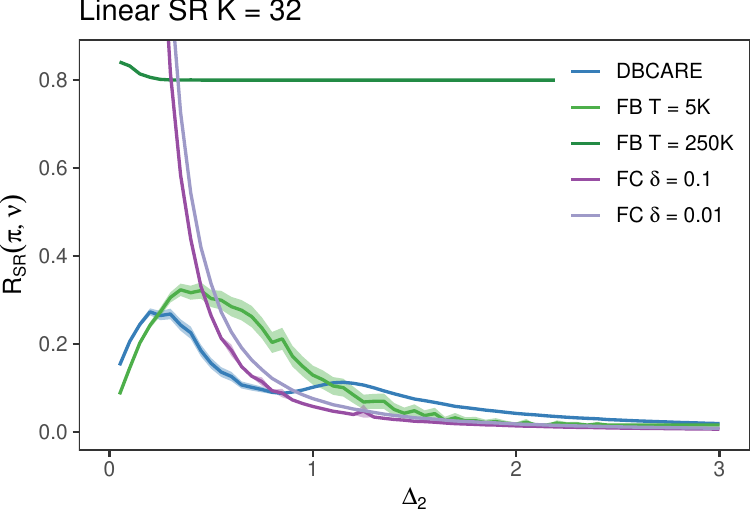}
		\end{minipage}
		\hspace*{\fill}
		\caption{Comparisons between \algname{} and fixed budget and confidence algorithms for $\riskone$ and $\risktwo$ in the $K$-arm linear decay setting. $Y$-axes are adjusted per setting to highlight problem-specific behavior. Confidence regions represent empirical average risk $\pm$2 SE.}
		\label{fig:exp-k-linear}
	\end{figure}
}

\newcommand{\insertDrugDiscovery}{%
	\begin{figure}[t]
		\hspace*{\fill}
		\begin{minipage}[t]{0.449\linewidth}
			\centering
			\vspace{0pt}
			\includegraphics[width=0.95\linewidth]{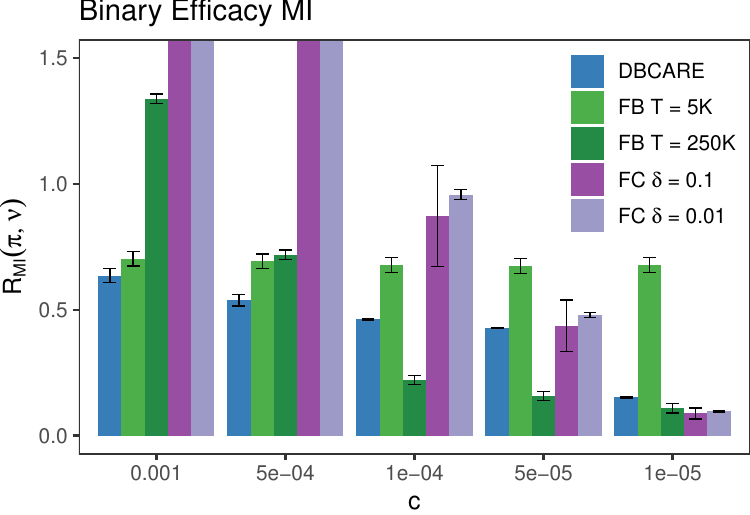}
		\end{minipage}
		\hfill
		\begin{minipage}[t]{0.449\linewidth}
			\centering
			\vspace{0pt}
			\includegraphics[width=0.95\linewidth]{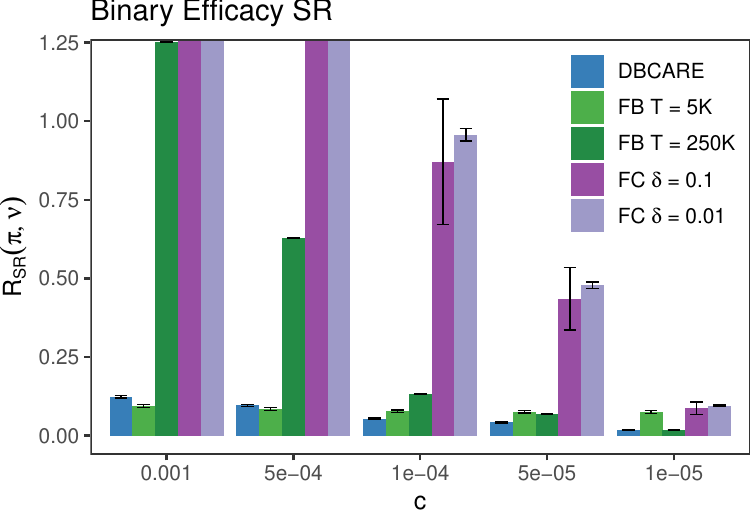}
		\end{minipage}
		%\hfill
		\hspace*{\fill}
		
		\hspace*{\fill}
		\begin{minipage}[t]{0.449\linewidth}
			\centering
			\vspace{0pt}
			\includegraphics[width=0.95\linewidth]{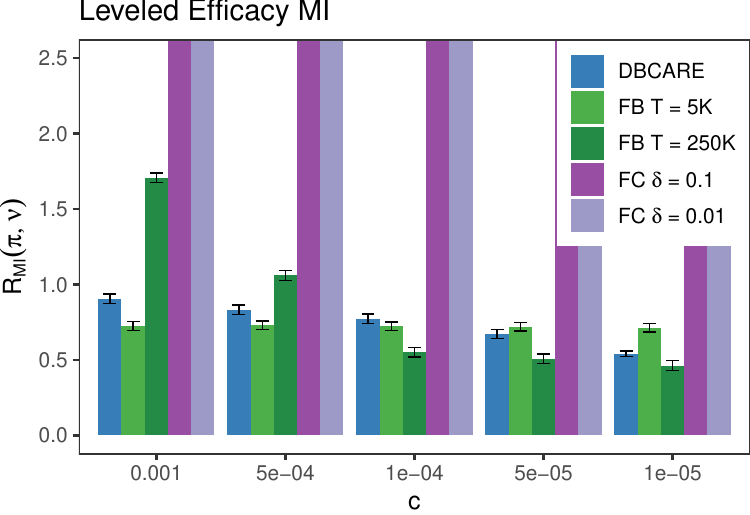}
		\end{minipage}
		\hfill
		\begin{minipage}[t]{0.449\linewidth}
			\centering
			\vspace{0pt}
			\includegraphics[width=0.95\linewidth]{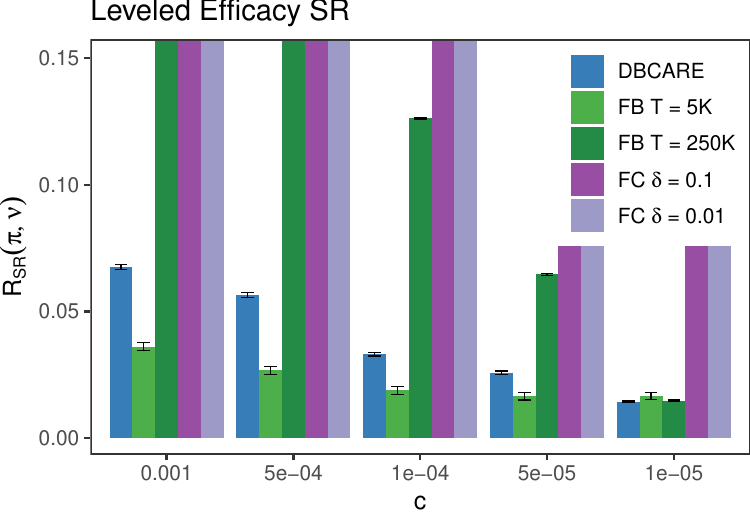}
		\end{minipage}
		\hspace*{\fill}
		\caption{Comparisons between \algname{} and fixed budget and confidence algorithms for $\riskone$ and $\risktwo$ on a drug discovery dataset. $Y$-axes are adjusted per setting to highlight problem-specific behavior. Error bars represent empirical average risk $\pm$2 SE.}
		\label{fig:exp-drug}
	\end{figure}
}

	\begin{abstract}
    We consider the problem of identifying the best arm in a multi-armed bandit model.
Despite a wealth of literature in the traditional fixed budget and fixed confidence
regimes of the best arm identification problem, it still remains a mystery to most
practitioners as to how to choose an approach and corresponding budget or confidence
parameter. We propose a new formalism to avoid this dilemma altogether by minimizing a
risk functional which explicitly balances the performance of the recommended arm and the
cost incurred by learning this arm. In this
framework, a cost is incurred for each observation during the sampling phase, and upon
recommending an arm, a performance penalty is incurred for identifying a suboptimal arm.
The learner's goal is to minimize the sum of the penalty and cost. This new regime
mirrors the priorities of many practitioners, \eg maximizing profit in an A/B
testing framework, better than classical fixed budget or confidence
settings. We derive theoretical lower bounds for the risk of each of two
choices for the performance penalty, the probability of misidentification and the simple
regret, and propose an algorithm called \algname{} to match
these lower bounds up to polylog factors on nearly all problem instances. We then
demonstrate the performance of \algname{} on a number of simulated models, comparing to
fixed budget and confidence algorithms to show the shortfalls of existing BAI paradigms on this problem.
	\end{abstract}

	\section{Introduction}

Best Arm Identification (BAI) in multi-armed bandits is a fundamental problem in
decision-making under uncertainty. The objective is to identify the arm with the highest
expected reward by adaptively drawing samples from distributions associated with each
arm.
BAI arises in many real-world applications.
In advertising, arms represent different ads, and the aim is to find the ad which
maximizes revenue generated~\citep{geng2021comparison}.
In statistical model selection, arms represent different
hyperparameter configurations, and the aim is to find the best-performing one with minimal
computational resources~\cite{Jamieson2016NonstochasticBest}.
Traditionally, BAI has been studied under two paradigms: the \emph{fixed budget}
setting~\citep{Audibert2010,Bubeck2011PureExploration},
which seeks to maximize performance---\ie the ability of a policy to recover the optimal arm---within
a given sampling budget, and the
\emph{fixed performance}  (e.g., fixed confidence~\citep{Mannor_Tsitsiklis2004,EvenDar2006}) setting,
which aims to minimize the number of samples needed to meet a target performance level.
While algorithms for these settings have been successfully deployed in many
real-world
settings~\citep{li2018hyperband,misra2021rubberband,zhang2024humor,geng2021comparison},
these settings are not a natural fit for all use cases.
For instance, while determining the best arm is desirable,
a slightly suboptimal choice may be acceptable if the cost of distinguishing
between top candidates is prohibitively high. On the other hand, it is often unnecessary
to
continue sampling until reaching some pre-specified horizon when there is already enough
evidence to determine the optimal arm.

To this end, we propose a novel paradigm for BAI, in which a
policy should explicitly balance performance and sampling cost on the fly,
without being constrained by a
fixed performance level or a pre-specified sampling budget. This framework allows policies
to \textit{adaptively} terminate according to the difficulty of the problem.
The following is an example where such a framework would be natural.

\begin{example}[Advertising]
	Consider a firm choosing among $K$ versions of an ad.
	To inform its choice, the firm may show versions to
	participants in a focus group (arm pull), incurring a cost $c$ per showing.
	The firm wishes to choose an algorithm to maximize the expected profit, \ie
	the expected revenue of the selected ad ($\Ihat$) minus the expected cost of
	the sampling procedure:  
$ \E[\text{revenue}_{\Ihat}] - c\E[\#\text{ arm pulls}]$.
Letting $\Istar$ be the ad with the highest expected revenue,
then maximizing expected profit can be equivalently stated as minimizing
$\E[\text{revenue}_{\Istar}-\text{revenue}_{\Ihat}] + c\E[\#\text{ arm pulls}]$.
Traditional fixed budget or confidence algorithms would be a poor fit for this
problem, as it is unclear how one should choose the budget or confidence level to
optimize the objective.
\end{example}

\subsection{Model}
\label{sec:model}
We will now formally introduce our setting.
A learner has access to a
MAB model $\nu = \{\nu_a\}_{a\in[K]}$, which consists of $K$ arms, each associated with a
probability distribution $\nu_a$.
Let $\mu_a = \mathbb{E}_{\nu_a}[X]$ denote the expected reward of arm $a\,.$ Following common conventions in the BAI literature,
we assume without loss of generality that the arms are ordered so that $\muone\geq\mutwo\geq\cdots\geq\muK$
(the learner is unaware of this ordering).
We will assume that for each arm $a\in[K]$,
the distribution $\nu_a$ is
$\sigma$-sub-Gaussian and that $\mu_a \in [0, \rewrange]$.
The learner is aware of $\sigma$ and $B$.

A learner interacts with the bandit model over a sequence of rounds $t=1,2,\dots$.
On round $t$, the learner selects an arm $A_t \in [K]$ according to a policy $\pi$
and observes an independent sample $X_t$ drawn from $\nu_{A_t}$. The choice of $A_t$ may
depend on the history $\{(A_s, X_s)\}_{s=1}^{t-1}$ of previous actions and observations.
Upon termination,
the policy recommends an arm $\Ihat \in [K]$ as the
estimated best arm.

\parahead{Prior work}
Traditionally, BAI has been studied under two main regimes:
\emph{(1) Fixed budget:} The learner is
allowed at most $T \in \mathbb{N}$ samples and aims to minimize either the
\emph{probability of
misidentification}~\citep{Audibert2010} $\Prob(\muone\neq\muIhat)$ or the
\emph{simple regret}~\citep{Bubeck2011PureExploration} $\E[\muone-\muIhat]$,
\ie the expected gap between the
optimal and selected arms. 
\emph{(2) Fixed performance:} The learner must satisfy a specified performance goal
while minimizing the number of samples.
The most common instantiation is
\emph{fixed-confidence} BAI~\citep{Bechhofer11958SequentialMultipleDecision,gabillon2012best}, where
the probability of misidentification $\Prob(\muone\neq\muIhat)$ is at most
a given goal $\delta$.

\parahead{This work}
Both the fixed-budget and fixed-performance formulations neglect practical situations
where
one may not have a pre-specified budget or performance goal,
but would like to trade-off between performance and sampling cost based on problem
difficulty.
Motivated by such considerations, we propose a new setting, where the goal is to
minimize a risk functional that captures both a performance penalty and
the cumulative sampling cost.
Choosing either the probability of
misidentification or the simple regret as the penalty, we study the
following two risk measures:
\begin{align}
\begin{split}
\label{eqn:risk}
\riskone(\pi, \nu) &:=
\Enupi\left[ \indfone\left(\muone \neq \muIhat\right)
    + c\tau \right] =
\Pnupi\left(\muone \neq \muIhat\right) +
c\,\mathbb{E}_{\nu,\pi}[\tau]\,,
\\
\risktwo(\pi, \nu) &:= 
\Enupi\left[\left(\muone - \muIhat\right)
    + c\tau \right] 
=
\mathbb{E}_{\nu,\pi}\left[\muone - \muIhat\right] +
c\,\mathbb{E}_{\nu,\pi}[\tau]\,.
\end{split}
\end{align}
Here, $c > 0$ is the (known) cost required to collect a sample, relative to the performance 
penalty, and 
 $\tau$ is 
the stopping time (total number of samples) of the policy $\pi$.
Moreover, $\Pnupi$ and $\Enupi$ denote the probability and expectation with respect to all randomness arising from the interaction between the policy $\pi$ and the bandit model $\nu$.

\subsection{Summary of our contributions and results}

\parahead{Novel problem formalism}
To the best of our knowledge, we are the first to study this risk-based formalism
for BAI which trades off between performance and sampling costs.
We design policies for both risk measures in~\eqref{eqn:risk},
upper bound the risk, and provide nearly matching lower bounds.

\parahead{Lower bounds}
To summarize our lower bounds,
let $\Delta_k = \muone - \mu_k$ denote the sub-optimality gap of arm $k$,
and let  $H\defeq\sum_{k=2}^K\Delta_k^{-2}$ be a problem complexity parameter~\citep{Mannor_Tsitsiklis2004,EvenDar2006,kalyanakrishnan2012LUCB,jamieson2014lil,LogLog_Karnin2013,Kaufmann2016ComplexityBAI}.
We show that the problem difficulty 
exhibits a phase transition depending on the magnitude of $H$ and the smallest gap $\Delta_2$.
Specifically, in the case of $\riskone$,
when $H\in\bigO((\sigma^2c)^{-1})$, we show that $\riskone \in
\Omega\left(c\sigma^2 H \log\left((c\sigma^2 H)^{-1}\right)\right)$, and otherwise, $\riskone\in\Omega(1)$. In the case of $\risktwo$, when $H\Delta_2^{-1}\in\bigO((\sigma^2c)^{-1})$, we show that
$\risktwo\in \Omega\left(c\sigma^2 H \log\left(\Delta_2(c\sigma^2 H)^{-1}\right)\right)$, and otherwise, $\risktwo\in\Omega(\Delta_2)$.
This phase transition---absent in classical fixed-confidence or fixed-budget
settings---underscores the trade-off between performance and costs inherent to our
setting: probabilistically distinguishing sub-Gaussian arms scales inversely with the size
of the gaps between them, so with small enough gaps it becomes optimal to simply guess the
best arm without incurring the cost of sampling.

\emph{Proof ideas.}
Our proof employs change-of-measure arguments to lower bound the risk associated with any particular algorithm via
an auxiliary function of problem parameters and the expected stopping time of the algorithm, $\Enupi[\tau]$. Crucially, this function is convex in $\Enupi[\tau]$, and minimizing it with respect to $\Enupi[\tau]$ yields lower bounds on the performance of \emph{any} algorithm while additionally revealing the phase transition behavior, via the regions where $\Enupi[\tau]=0$ is optimal.

\parahead{Algorithm}
We propose \algname{} \algdesc{} for this setting. 
\algname{} maintains a subset $S \subset [K]$ of surviving arms and confidence intervals for the
mean values of these arms. 
It takes as input a function $\Nstar:\N\rightarrow \N$ of the size of $S$, which determines the maximum number of times each arm in $S$ may be pulled.
It proceeds in epochs, where in each epoch, every surviving arm is pulled once.
At the end of each epoch, \algname{} eliminates arms that can be confidently identified
as suboptimal based on the confidence intervals.
If any arms are eliminated, the budget for each surviving arm is updated based on
$\Nstar$.
If the budget of arm pulls is exhausted before there is a clear winner, \ie only one
surviving arm, it recommends the surviving arm with the highest empirical mean.
However, if a clear winner emerges before the current budget, it terminates early and
recommends this arm.

\algname{} combines ideas from both fixed-budget and fixed-confidence algorithms for
BAI.
However, unlike fixed budget algorithms, the budget is not given in advance;
rather, the total number of times an arm can be pulled is determined
by the function $\Nstar$
which depends on
the risk~\eqref{eqn:risk}, the cost $c$, and the size of the current surviving set $S$. 
Similarly,  unlike algorithms for fixed confidence
BAI~\citep{jun2016top,jamieson2014lil}, the confidence intervals are carefully
chosen based on problem parameters, and not via
a prespecified failure probability target $\delta$.
This design allows \algname{} to adapt to the problem difficulty with respect to the gaps
and cost,
while simultaneously ensuring control over the worst-case risk.

\parahead{Upper bound}
We show that the above algorithm, with carefully chosen parameters, matches
the lower bounds in almost all regimes.
Specifically, for $\riskmi$, our algorithm matches the lower bound up to polylog factors for all values of the complexity parameter $H$.
For $\risksr$, we similarly match the lower bound up to polylog factors when $H$ is not too large. 
However, when $H\to\infty\,$, our upper bound scales as $\bigO(\log(K)(K\sigma^2 c)^{1/3})$, while the lower bound is $\Omega(\Delta_2)$, leaving an additive gap.

Despite this discrepancy in the $\risktwo$ case, we make two important observations.
First, we show that our algorithm is \emph{minimax optimal}; that is, the worst-case risk over all problem instances matches the worst-case lower bound up to logarithmic factors.
Second, the lower bound in the large $H$ regime is tight and cannot be improved: a
naive guessing algorithm---one that selects an arm without pulling any---achieves the
lower bound on certain problem instances in this region. 
However, such a policy performs poorly when $H$ is small, underscoring the value of our adaptive strategy.

\emph{Proof ideas.}
Our use of an elimination-style procedure allows us to guarantee that
we never eliminate the optimal arm with high probability, and also identify precisely when
highly suboptimal arms are guaranteed to be eliminated. Then, by choosing
$\Nstar(|S|)\asymp\bigO((\abs{S}c)^{-1})$ for $\riskone$ and
$\Nstar(|S|)\asymp\bigO(\sigma^{\nicefrac{2}{3}}(\abs{S}c)^{-\nicefrac{2}{3}})$ for
$\risktwo$, we ensure that \algname{} can both match the worst-case behavior of the
lower bound and adapt to easier problem settings where there are relatively few good
candidate arms.

\parahead{Empirical evaluation}
We corroborate these theoretical findings in simulations and in a real-world experiment on a drug discovery dataset.
We compare  to
fixed budget and confidence algorithms to show the deficiencies of naive
adaptations of existing BAI paradigms on this problem.

\subsection{Related work}
\parahead{BAI}
The multi-armed bandit (MAB) problem, first introduced by
Thompson~\cite{thompson1933likelihood}, has become a foundational framework for studying
the exploration-exploitation trade-off in sequential decision-making under uncertainty.
Within this framework, Best Arm Identification (BAI) focuses on identifying the arm with
the highest expected
reward~\citep{bubeck2009pure,kalyanakrishnan2010efficient,gabillon2012best,Bubeck2013Multi,LogLog_Karnin2013,jamieson2014lil,russo2016simple}.

BAI has primarily been studied under two paradigms: the fixed-budget and fixed-performance settings. In the fixed-budget setting, the objective is to
minimize the probability of
misidentification~\citep{Audibert2010,Kaufmann2012,Kaufmann2016ComplexityBAI,Carpentier2016_FBLB,Barrier2023_FBNP}, or alternatively, to
minimize the simple regret~\citep{bubeck2009pure,Bubeck2011PureExploration,Zhao2023_SR}.
In the fixed performance setting, the majority of the literature has focused
on achieving a target probability of misidentification (a.k.a fixed confidence
BAI)~\citep{EvenDar2002,Mannor_Tsitsiklis2004,EvenDar2006,jamieson2014lil,Garivier2016,jun2016top,Jourdan2022_FC}.
To the best of our knowledge, there is no prior work on minimizing the number of pulls
subject to a performance goal on the simple regret.

Our work builds on the extensive literature in this area. In particular, our algorithm
draws inspiration from racing-style methods developed for fixed-confidence
BAI~\citep{Maron1997_Racing,jamieson2014lil,jun2016top}, while our lower bounds rely on technical lemmas
from~\citet{Kaufmann2016ComplexityBAI}. Nevertheless, the problem we study departs meaningfully
from existing formulations, requiring new conceptual insights and analytical tools.

\parahead{Cost of arm pulls in MAB}
Several works have explored sampling costs in BAI.
\citet{CostRatioBAI_Xia} and \citet{CostRatioBAI_Qin2020} study identifying the arm with highest
reward-to-cost ratio, assuming both reward and cost are observed per sample, both in
fixed-budget and fixed-confidence settings.
In contrast, in our setting, once a final arm is selected, only its
expected reward---not its sampling cost---remains relevant.
\citet{RegretBAI_Degenne2019} and \citet{Yang2024BestArm} consider minimizing the cumulative regret~\citep{Robbins1952aspectssequential} while performing BAI, but this approach is not applicable when sampling costs are exogenous to rewards, as we consider in our setting.
\citet{CostAwareBAI_Kanarios2024} study minimizing cumulative cost (instead of the number
of pulls) in a fixed confidence setting, when the learner observes a stochastic cost
on each arm pull in addition to the reward.
Recent work in 
multi-Fidelity BAI~\citep{poiani2022_MF,wang2023_MF,Poiani2024OptimalMultiFidelity} allows a learner to choose to
incur different costs for varying magnitudes of accuracy.
The last two problem settings are distinctly different from ours.
Finally, some works~\citep{Bananidiyuru2018_BwK,CostCumRegret_Sinha2021}
address costs in the cumulative
regret setting, which is also distinct from our focus on BAI.

\parahead{Bayesian sequential testing in classical statistics}
\citet{BayesSolutions_Arrow1949} and \citet{BayesSolutions_Wald1950} study Bayesian
formulations of sequential binary hypothesis testing problems (e.g., $H_1: \mu_1 - \mu_2
=\Delta$ vs.\ $H_2: \mu_1 - \mu_2 =-\Delta$), where the learner must balance the cost of
incorrect decisions against the cost of continued testing. They show that the
Bayes-optimal procedure for such problems is the sequential probability ratio test (SPRT)
of \citet{SPRT_Wald1945}, with optimal thresholds determined by solving complex implicit
equations that depend on the specific problem parameters. A number of works
\citep{sobel1953essentially,chernoff1965sequential,Bather_Walker_1962,SequentialReview_Lai}
have extended this study to the more general composite hypothesis testing framework ($H_1:
\mu_1-\mu_2>0$ vs.\ $H_2: \mu_1-\mu_2\leq0$).
While there are similarities to our proposed setting,
their analyses have been restricted to developing procedures that are only
asymptotically Bayes-optimal and only hold in the case of exponential families.

\parahead{Paper organization}
The remainder of this paper is organized as follows.
In~\S\ref{sec:2-arm}, we study the problem in the 2-arm setting.
This new formalism for BAI introduces novel intuitions which are best illustrated
in the two arm setting.
In~\S\ref{sec:karm}, we present our algorithm and main results in the
$K$-arm setting.
Finally, in~\S\ref{sec:experiments}, we evaluate our methods on simulations and show that it outperforms traditional BAI methods on this problem.

\section{Two-Arm Setting}\label{sec:2-arm}

To build intuition for this problem, we first study the $K=2$ setting.
Let $\cP(\R)$ denote all probability measures on $\R$, and let $G_\sigma = \big\{\lambda
\in \cP(\R) : \forall\, t > 0,\;$ $ \Prob_\lambda\left(X - \E_\lambda[X] > t\right) \leq
\exp\left(-\nicefrac{t^2}{2\sigma^2}\right) \big\}$ denote all
$\sigma$-sub-Gaussian probability distributions.
Let $\cM$, defined below in~\eqref{eq:2armclass}, denote the class of two-armed bandit
models with $\sigma$-sub-Gaussian rewards; recall that $\mu_i = \E_{\nui}[X]$. For a
given gap $\Delta \geq 0$, let $\cM_\Delta$, defined below, denote the subclass of models with
sub-optimality gap $\Delta$. We have:
\begin{align}\label{eq:2armclass}
\hspace{-0.06in}
    \cM \defeq \left\{\nu = (\nuone, \nutwo) : \nuone, \nutwo \in G_\sigma; \muone,\mutwo\in[0,B] \right\}, \quad
    \;
    \cM_\Delta \defeq \left\{ \nu \in \cM : \muone - \mutwo = \Delta \right\}.
\end{align}
In~\S\ref{sec:2armmi}, we begin by studying $\riskmi$ in~\eqref{eqn:risk}, which uses the
probability of misidentification as the performance criterion. In~\S\ref{sec:2armsr},
we then consider $\risksr$, which instead uses the simple regret.
Unless otherwise stated, 
all results in this section will be corollaries of more general results
in~\S\ref{sec:karm}.

\subsection{Probability of misidentification in the two-arm setting}\label{ss:2-arm-pm}
\label{sec:2armmi}

\parahead{Lower bound}
We begin with a gap-dependent lower bound applicable to any policy on this problem.

	\begin{corollary}<thm:k-arm-lb-pm>[Corollary of Theorem~\ref{thm:k-arm-lb-pm}, Lower bound on $\riskmi$]
%	\begin{corollary}[Corollary of Theorem~\ref{thm:k-arm-lb-pm}, Lower bound on $\riskmi$]
		\label{cor:2-arm-lb-pm}
		Fix a gap $\Delta>0$ and the cost $c$ per arm pull.
     Then, for any policy $\pi\,,$ we have
		\begin{align}\label{eq:2-arm-lb-pm}
			\sup_{\nu\in\cM_\Delta} \riskmi(\pi,\nu) \geq \hardmi(\Delta) 
    \defeq \begin{cases}
				\frac{\sigma^2c}{4\Delta^2}\log\left(\frac{e\Delta^2}{\sigma^2c}\right)\,,
& \text{if }\; \Delta\geq\sqrt{\sigma^2c}, \\
				\nicefrac{1}{4}\,, &\text{if }\; \Delta<\sqrt{\sigma^2c}.
			\end{cases}
		\end{align}
	\end{corollary}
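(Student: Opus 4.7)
The statement is explicitly labelled a corollary of the $K$-arm lower bound Theorem~\ref{thm:k-arm-lb-pm}, so the mechanical plan is to instantiate that theorem at $K=2$ with $\Delta_2 = \Delta$ and $H = \Delta^{-2}$, and then check that the two regimes of the general bound ($H\in\bigO((\sigma^2 c)^{-1})$ vs.\ its complement) collapse to the claimed two-branch form of $\hardmi(\Delta)$. Before invoking the general theorem I would sketch the argument from scratch in the two-arm case, as this is the cleanest setting in which the phase transition advertised in the introduction becomes visible, and in fact the sketch recovers the exact constants of $\hardmi$.

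The approach is a two-point change-of-measure argument combined with a convex minimization in the expected number of pulls. Fix the reference instance $\nu = (\mathcal{N}(\muone,\sigma^2),\mathcal{N}(\muone-\Delta,\sigma^2)) \in \cM_\Delta$ (with $\muone\in[\Delta, B-\Delta]$ so both means lie in $[0,B]$) and the swapped instance $\nu' = (\mathcal{N}(\muone,\sigma^2),\mathcal{N}(\muone+\Delta,\sigma^2)) \in \cM_\Delta$, in which arm~2 is now optimal. The two models differ only in arm~2, so the KL chain rule gives $\mathrm{KL}(\Pnupi, \mathbb{P}_{\nu',\pi}) = \E_\nu[N_2(\tau)]\cdot \mathrm{KL}(\nutwo, \nutwo') = 2\Delta^2 \E_\nu[N_2(\tau)]/\sigma^2$, where $N_2(\tau)$ is the number of arm-2 pulls before the stopping time. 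Since $\{\Ihat\neq 1\}$ under $\nu$ and $\{\Ihat\neq 2\}$ under $\nu'$ are complementary events on the same sample space, Bretagnolle--Huber yields
\[
\Prob_\nu(\Ihat\neq 1) + \Prob_{\nu'}(\Ihat\neq 2) \;\geq\; \tfrac{1}{2}\exp\!\left(-2\Delta^2 \E_\nu[N_2(\tau)]/\sigma^2\right).
\]

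Bounding the supremum over $\cM_\Delta$ by the average of the two risks, using $\E_\nu[\tau] \geq \E_\nu[N_2(\tau)] =: x$, and discarding the nonnegative contribution from $\E_{\nu'}[\tau]$, I obtain
\[
\sup_{\nu\in\cM_\Delta}\riskmi(\pi,\nu) \;\geq\; \inf_{x\geq 0} f(x), \qquad f(x) \;:=\; \tfrac{1}{4}\exp\!\left(-2\Delta^2 x/\sigma^2\right) + \tfrac{c}{2}\,x.
\]
The function $f$ is strictly convex, with unconstrained stationary point $x^\star = (\sigma^2/(2\Delta^2))\log(\Delta^2/(\sigma^2 c))$, which lies in $[0,\infty)$ exactly when $\Delta \geq \sqrt{\sigma^2 c}$. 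In that regime direct substitution gives $f(x^\star) = \tfrac{\sigma^2 c}{4\Delta^2}\log(e\Delta^2/(\sigma^2 c))$, matching the first branch of $\hardmi(\Delta)$; in the complementary regime $f$ is increasing on $[0,\infty)$, so $\inf_{x\geq 0} f(x) = f(0) = 1/4$, which is the second branch.

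The main obstacle is not the change-of-measure step---which is essentially textbook---but rather recognizing that the phase transition in $\hardmi$ is exactly the boundary between the regime where the unconstrained minimizer of the auxiliary convex function $f$ falls inside the feasible set $[0,\infty)$ and the regime where it does not. This is precisely the mechanism highlighted in the ``Proof ideas'' paragraph for the general theorem; in the $K$-arm version $f$ is replaced by an analogous convex function indexed by total sample effort and involving $H$ in place of $\Delta^{-2}$, so once Theorem~\ref{thm:k-arm-lb-pm} is established, Corollary~\ref{cor:2-arm-lb-pm} is a direct specialization at $K=2$.
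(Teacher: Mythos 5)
Your proposal is correct and takes essentially the same route as the paper: the paper's own proof of this corollary is the one-line specialization of Theorem~\ref{thm:k-arm-lb-pm} at $K=2$ (where $H=\Delta^{-2}$, $\Delta_2=\Delta$), and your from-scratch sketch is exactly that theorem's argument instantiated for two arms --- the same swapped alternative instance with arm 2 shifted up by $2\Delta$, the same change-of-measure inequality, and the same convex minimization of $f(x)=\tfrac14 e^{-2\Delta^2x/\sigma^2}+\tfrac{c}{2}x$ whose constrained/unconstrained minimizer boundary yields the phase transition at $\Delta=\sqrt{\sigma^2c}$, recovering the exact constants.
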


It is instructive to compare the above result with lower bounds for fixed confidence BAI.
As in the fixed confidence setting~\citep{Kaufmann2016ComplexityBAI}, we observe that
for large $\Delta$, the lower bound exhibits a familiar dependence on
$\sigma^2\Delta^{-2}$, indicating that the problem becomes easier as the gap increases.
Our bound also depends on the cost $c$ and
includes a logarithmic term in $\Delta^2(\sigma^2 c)^{-1}$. Notably, when the gap is
small, our setting departs from fixed confidence behavior: the lower bound undergoes a
phase transition and saturates at a constant value of $\nicefrac{1}{4}$, rather than
continuing to increase with $\Delta^{-2}$.

\insertTwoArmBounds

\parahead{An oracular policy}
To build intuition towards designing a policy, it is worth considering the behavior of
an ``oracular'' policy which knows the gap $\Delta$ but does not know which of the
two arms is optimal.
Recall that it requires approximately $N(\Delta,\delta) \in
\bigO(\sigma^2\Delta^{-2}\log(\nicefrac{1}{\delta}))$
samples to separate
two sub-Gaussian distributions whose means are $\Delta$ apart~\citep{SPRT_Wald1945,Jennison1982_Elim,Kaufmann2016ComplexityBAI} with probability
at least $1-\delta$.
Hence, if we pull both arms $N(\Delta, \delta)$ times, we will incur a penalty of
$\delta + \bigO(\sigma^2c\Delta^{-2}\log(\nicefrac{1}{\delta}))$.
By optimally choosing $\delta \in \bigO(\sigma^2c\Delta^{-2})$, we find that we need to
pull each arm $\N(\Delta) \in \bigO(\sigma^2\Delta^{-2}\log(\Delta^2(\sigma^2c)^{-1}))$ times.
However, the above expression is non-negative only when $\Delta
\geq \bigOmega(\sqrt{\sigma^2c})$.
Intuitively, if $\Delta$ is very small, the policy will need to incur a large cost to
separate the two arms.
If the policy knows $\Delta$ it is better off randomly guessing an arm instead of
incurring this large cost.
This intuition leads to the following policy and theoretical result.
Its proof, which is straightforward, is given in Appendix~\ref{app:oracle-pf}.

	\begin{proposition}\label{prop:2-arm-oracle-pm}
		Let $\piDelta$ be the policy which pulls each arm
$\max\left\{0,\ceil{\frac{4\sigma^2}{\Delta^2}\log\left(\frac{\Delta^2}{8\sigma^2c}\right)}\right\}$
times.
If it pulls $0$ times, it will choose an arm uniformly at random, and otherwise,
it outputs the empirically largest
arm (breaking ties arbitrarily). Then, letting $\hardmi(\Delta)$ be as in
\eqref{eq:2-arm-lb-pm}, we have,
		\begin{align*}
			\sup_{\nu\in\cM_\Delta}\riskmi(\piDelta,\nu) \leq 32\hardmi(\Delta) + 2c 
    \in \bigO(\hardmi(\Delta))
		\end{align*}
	\end{proposition}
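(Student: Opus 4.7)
The plan is to decompose the analysis on whether the prescribed sample count $N := \max\{0,\lceil 4\sigma^2\Delta^{-2}\log(\Delta^2/(8\sigma^2 c))\rceil\}$ vanishes, which happens precisely when $\Delta \leq \sqrt{8\sigma^2 c}$. I would handle the two regimes separately and then reconcile them against the phase transition of $\hardmi$ at $\sqrt{\sigma^2 c}$, which does \emph{not} coincide with the policy's activation threshold.

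\emph{Small-gap regime} ($\Delta \leq \sqrt{8\sigma^2 c}$). Here $\piDelta$ takes $\tau = 0$ samples and returns a uniformly random arm, so $\riskmi(\piDelta,\nu) = 1/2$ exactly. It then suffices to check that $1/2 \leq 32\hardmi(\Delta) + 2c$ throughout this range. When $\Delta \leq \sqrt{\sigma^2 c}$ this is immediate since $32\hardmi(\Delta) = 8$. For the intermediate window $\sqrt{\sigma^2 c} \leq \Delta \leq \sqrt{8\sigma^2 c}$ I would set $x := \Delta^2/(\sigma^2 c) \in [1,8]$ and note that $x \mapsto x^{-1}\log(ex)$ is decreasing on $[1,\infty)$, so $32\hardmi(\Delta)$ is minimized at $x = 8$, where it equals $\log(8e) > 1/2$.

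\emph{Large-gap regime} ($\Delta > \sqrt{8\sigma^2 c}$). Here $\piDelta$ pulls each arm $N$ times and outputs the empirical maximizer, so $\tau = 2N$. By sub-Gaussianity, $\muhat_1(N) - \muhat_2(N)$ is $\sqrt{2\sigma^2/N}$-sub-Gaussian with mean $\Delta$, which gives
\[
    \Pnupi(\muone \neq \muIhat) \;\leq\; \exp\!\left(-\tfrac{N\Delta^2}{4\sigma^2}\right) \;\leq\; \frac{8\sigma^2 c}{\Delta^2},
\]
the second inequality using the definition of $N$. The sampling cost satisfies $c\tau = 2cN \leq 8\sigma^2 c\Delta^{-2}\log(\Delta^2/(8\sigma^2 c)) + 2c$ after absorbing the ceiling via $\lceil y\rceil \leq y+1$. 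Summing the two contributions gives
\[
    \riskmi(\piDelta,\nu) \;\leq\; \frac{8\sigma^2 c}{\Delta^2}\!\left(1 + \log\frac{\Delta^2}{8\sigma^2 c}\right) + 2c,
\]
and the elementary inequality $1 + \log(y/8) \leq \log(ey)$ applied with $y = \Delta^2/(\sigma^2 c)$ collapses this to $32\hardmi(\Delta) + 2c$, as claimed.

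The main (and essentially only) obstacle is the bookkeeping around the mismatch between the policy's activation threshold $\sqrt{8\sigma^2 c}$ and the branch boundary $\sqrt{\sigma^2 c}$ of $\hardmi$: on the intermediate window, the policy is still ``inactive'' but the lower-bound formula has already switched to its logarithmic branch, so a constant-to-constant comparison does not go through directly. The monotonicity observation in Case 1 is the step that cleanly bridges this gap; everything else is standard sub-Gaussian concentration plus algebra.
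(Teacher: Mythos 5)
Your proposal is correct and follows essentially the same route as the paper's proof: a Hoeffding bound on the $\sqrt{2}\sigma$-sub-Gaussian gap estimate giving $\exp(-N\Delta^2/(4\sigma^2)) \le 8\sigma^2 c\Delta^{-2}$ in the active regime, the exact value $\nicefrac{1}{2}$ in the inactive regime, and a case-by-case comparison against $32\hardmi(\Delta)+2c$. The only difference is that you spell out the intermediate window $\sqrt{\sigma^2 c}\le\Delta\le\sqrt{8\sigma^2 c}$ via the monotonicity of $x\mapsto x^{-1}\log(ex)$, a verification the paper compresses into ``clearly upper bounds this quantity.''
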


As we see, and illustrated in~Fig~\ref{fig:boundillus},
this bound matches the lower bound up to constant factors.%
\footnote{Proposition~\ref{prop:2-arm-oracle-pm} includes an additive penalty corresponding to
	the cost of two extra pulls, and a similar
	additive term appears in all upper bounds. 
	This is unavoidable in general, as even as 
	$\Delta \rightarrow \infty$, each arm must be pulled at least once to identify it. While
	this can be formally incorporated in the lower bound, we omit it for simplicity.}
To design a policy when $\Delta$ is unknown, we will leverage the above intuition.
We will also draw inspiration from prior work on
racing-style algorithms~\citep{Paulson_1964,Maron1997_Racing},
which have shown that sequentially pulling arms and eliminating them based on
confidence intervals can match oracular policies up to log factors in the fixed confidence
setting.

\parahead{A policy for $\riskmi$}
We will let $\delta$ be a confidence hyperparameter, aiming to output the
optimal arm with probability at least $1-\delta$.
However, to avoid over-pulling when the gap $\Delta$ is too small, we also incorporate a
hyperparameter
$\Nstar$, which is a limit on the total amount of times we are willing to pull
\emph{each} arm.
Intuitively, we know the cost grows linearly in the number of pulls, but the probability
of misidentification decays exponentially, so there is a point where the trade-off between
the cost of pulling and the increased precision these pulls provide no longer favors
continuing to pull.

Our approach proceeds in epochs of sampling both arms once and comparing the difference
between the empirical averages of the two arms against a Hoeffding confidence bound at the
end of each epoch to test for separation. If the observed difference on any epoch is
larger than the confidence bound, it will exit and recommend the larger arm. Otherwise, it
will continue to sample each arm until reaching the $\Nstar$-th and epoch, where it will
return the arm with the larger empirical average even though they have not statistically
separated.
In the case of the 2-arm probability of
misidentification setting, we use $\Nstar = (2ec)^{-1}$ and $\delta =
c(1+2c\Nstar)^{-1}$.
Here,
we set $\Nstar$ to be the maximum number of times the oracular policy would
ever pull each arm for any $\Delta$.
The confidence
parameter $\delta$ is used to control the penalty of the policy on the event that the
policy's confidence interval for the gap does not contain the true gap.
We have described this
algorithm formally in the $K$-arm setting in Algorithm~\ref{alg:our-alg}.

As the corollary below demonstrates,
by careful choice of $\Nstar$ and $\delta$, we show that we can match the lower bound
in Corollary~\ref{cor:2-arm-racing-pm} up to $\log(\nicefrac{1}{c})$ factors,
for all values of $\Delta$.
Based on the relationship between algorithmic performance and lower bounds in the BAI
literature, we conjecture that this logarithmic gap is largely unavoidable, and could at
best be reduced to a log-log factor~\citep{LogLog_Karnin2013,jamieson2014lil,Kaufmann2016ComplexityBAI}.

	\begin{corollary}<thm:k-arm-racing-pm>[Corollary of Theorem~\ref{thm:k-arm-racing-pm}, \algname{} under $\riskone$]
%	\begin{corollary}
		\label{cor:2-arm-racing-pm}
		Let $\pi$ be the policy described
above using $N^* = (2ec)^{-1}$ and $\delta=c(1+2cN^*)^{-1}\,$. Then, letting $\hardmi(\Delta)$ be as in \eqref{eq:2-arm-lb-pm},
		\begin{align*}
			\sup_{\nu\in\cM_\Delta}\riskone(\pi,\nu) \leq
128\log\left(\frac{e+1}{(ec)^2}\right)\hardmi(\Delta) + 3c \in
\bigO\left(\log\left(\frac{1}{c}\right)\hardmi(\Delta)\right).
		\end{align*}
	\end{corollary}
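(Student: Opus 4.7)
The plan is to treat the claim as a direct corollary of Theorem~\ref{thm:k-arm-racing-pm} specialized to $K=2$; below I sketch the self-contained argument. Let $\mathcal{E}$ denote the good event that $|\muhat_k(n) - \mu_k| \le e_n$ for all $k \in \{1,2\}$ and every epoch $n \le \Nstar$. A sub-Gaussian tail bound combined with a union bound over arms and epochs shows $\Pnupi(\mathcal{E}^c) = O(\delta^2)$. I decompose
\[
\riskone(\pi,\nu) \;\le\; \Pnupi(\mathrm{misid}\cap\mathcal{E}) + \Pnupi(\mathcal{E}^c)\bigl(1 + 2c\Nstar\bigr) + 2c\,\Enupi\bigl[\tau \cdot \mathbf{1}_\mathcal{E}\bigr],
\]
and handle each piece separately. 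The $\Pnupi(\mathcal{E}^c)$ contribution, with the specified $\delta = ce/(e+1)$ and $\Nstar = (2ec)^{-1}$, is already $O(c)$ and gets absorbed into the additive $3c$.

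On $\mathcal{E}$ the empirical gap satisfies $\muhat_1(n) - \muhat_2(n) \ge \Delta - 2e_n$, which ensures that arm $1$ is never eliminated and that arm $2$ is eliminated by epoch $n_0(\Delta) \in O\bigl(\sigma^2\Delta^{-2}\log(\Nstar/\delta)\bigr)$, obtained by inverting $3e_n \le \Delta$. The specific choices $\Nstar = (2ec)^{-1}$ and $\delta = c(1+2c\Nstar)^{-1}$ are engineered so that $\log(\Nstar/\delta)$ is exactly of order $\log((e+1)/(ec)^2)$, matching the prefactor in the claimed bound. The argument then splits by the two regimes of $\hardmi(\Delta)$. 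When $\Delta \ge \sqrt{\sigma^2 c}$, one checks that $n_0(\Delta) \le \Nstar$, so on $\mathcal{E}$ the algorithm returns the correct arm and $\Enupi[\tau\mathbf{1}_\mathcal{E}] \le 2n_0(\Delta)$; plugging in and comparing with $\hardmi(\Delta) = \tfrac{c\sigma^2}{4\Delta^2}\log\bigl(e\Delta^2/(\sigma^2 c)\bigr)$ (using that the inner log is at least $1$ in this regime) yields the leading $\log((e+1)/(ec)^2)\,\hardmi(\Delta)$ term. When $\Delta < \sqrt{\sigma^2 c}$, we have $n_0(\Delta) > \Nstar$, the budget is exhausted, and the total risk is at most $1 + 2c\Nstar = 1 + 1/e = O(1)$, matching $\hardmi(\Delta) = 1/4$ up to a constant.

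The main obstacle is not conceptual but a careful constant-tracking exercise: the choices of $\Nstar$ and $\delta$ must be calibrated so that $n_0(\Delta) \le \Nstar$ precisely in the regime $\Delta \ge \sqrt{\sigma^2 c}$, aligning the two phases of the algorithmic upper bound with those of $\hardmi$, while simultaneously producing the right $\log(1/c)$ blow-up relative to the lower bound. The oracle-inspired choice $\Nstar = (2ec)^{-1}$---capping the number of pulls at the maximum any oracular policy would ever need (per Proposition~\ref{prop:2-arm-oracle-pm})---is precisely what makes both requirements line up simultaneously, and is the only nontrivial piece of the proof beyond standard concentration and case analysis.
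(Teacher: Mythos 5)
There is a genuine gap in your case analysis: you claim that the choices $\Nstar=(2ec)^{-1}$ and $\delta=c(1+2c\Nstar)^{-1}$ are calibrated so that $n_0(\Delta)\le\Nstar$ precisely when $\Delta\ge\sqrt{\sigma^2c}$, i.e.\ that the algorithm's phase transition coincides with that of $\hardmi$. It does not. Inverting $n_0(\Delta)\in O\bigl(\sigma^2\Delta^{-2}\log(\Nstar/\delta)\bigr)\le\Nstar$ gives the threshold $\Deltastar\asymp\sqrt{\sigma^2c\log(\nicefrac{1}{c})}$ (the paper's proof uses $\Deltastar=\sqrt{32e\sigma^2c\log\bigl(\frac{e+1}{(ec)^2}\bigr)}$), which is strictly larger than $\sqrt{\sigma^2c}$ for small $c$. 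Consequently there is a ``moderate'' regime $\Delta\in(\sqrt{\sigma^2c},\Deltastar)$ in which the lower bound has already entered its decaying phase $\hardmi(\Delta)=\frac{\sigma^2c}{4\Delta^2}\log\bigl(\frac{e\Delta^2}{\sigma^2c}\bigr)\ll\nicefrac14$, yet on the good event the algorithm is \emph{not} guaranteed to separate the arms before exhausting its budget: it pays the full sampling cost $2c\Nstar=\nicefrac{1}{e}$ plus a nonnegligible misidentification probability. Your two cases (``correct arm returned with $\tau\le 2n_0(\Delta)$'' vs.\ ``budget exhausted, risk $O(1)$ matching $\hardmi=\nicefrac14$'') leave this interval uncovered, and it is exactly where the multiplicative $\log(\nicefrac{1}{c})$ factor is consumed --- the paper explicitly notes (see the discussion after the corollary and Fig.~\ref{fig:boundillus}) that the two transitions do not align and that the log factor is incurred in this moderate regime.

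The paper closes this gap by writing an explicit piecewise upper bound $\riskoneub(\Delta)$ with breakpoint at $\Deltastar$ (not at $\sqrt{\sigma^2c}$), handling $\Delta\le\sqrt{e\sigma^2c}$ and $\Delta>\Deltastar$ directly, and then treating $\Delta\in(\sqrt{e\sigma^2c},\Deltastar]$ by verifying the inequality $128\log\bigl(\frac{e+1}{(ec)^2}\bigr)\hardmi(\Delta)\ge\riskoneub(\Delta)$ at the single endpoint $\Deltastar$ and showing the difference is monotone decreasing on that interval via a derivative computation. Some version of this interpolation argument (or at least a pointwise comparison of $\exp\bigl(-\Delta^2/(8e\sigma^2c)\bigr)+\nicefrac{1}{e}$ against $\frac{\sigma^2c}{4\Delta^2}\log\bigl(\frac{e\Delta^2}{\sigma^2c}\bigr)$ over the whole moderate range) is needed; it is not ``standard concentration and case analysis.'' Two minor additional points: your claim $\Pnupi(\mathcal{E}^c)=O(\delta^2)$ is unsupported --- the union bound over arms and epochs used in Lemma~\ref{lem:conf-elim} yields $\Pnupi(G^c)\le\delta$, which is what the choice $\delta=c(1+2c\Nstar)^{-1}$ is designed for so that $\delta\,(1+2c\Nstar)\le c$; and your good event controls $|\muhat_k(n)-\mu_k|$ per arm rather than the gap estimate, a harmless but different normalization from the paper's.
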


This bound and its comparison to the lower bound are illustrated in Fig~\ref{fig:boundillus}. As we can see in Fig~\ref{fig:boundillus}, by our choice of $\Nstar$, our policy actually performs within a constant factor of the lower bound for small $\Delta$, and the $\log(\nicefrac{1}{c})$ factor is incurred mostly in the ``moderate'' $\Delta$ regime. After the sharp transition at the midpoint of the plot in Fig~\ref{fig:boundillus}, representing the point at which our algorithm is guaranteed to output the optimal arm before reaching $\Nstar$ epochs with high probability, we can also see that the comparison to the lower bound quickly improves until we again reach a constant factor mismatch.

\subsection{Simple regret in the two-arm setting}\label{ss:2-arm-l2r}
\label{sec:2armsr}

\parahead{Lower bound} We again begin by presenting a lower bound on this problem.

	\begin{corollary}<thm:k-arm-lb-sr>[Corollary of Theorem~\ref{thm:k-arm-lb-sr}, Lower bound on $\risktwo$]
%	\begin{corollary}[Corollary of Theorem~\ref{thm:k-arm-lb-sr}, Lower bound on $\risktwo$]
		\label{cor:2-arm-lb-sr}
		Fix a gap $\Delta>0$ and the cost $c$ per arm pull. Then, for any policy $\pi\,,$
		\begin{align}\label{eq:2-arm-lb-sr}
			\sup_{\nu\in\cM_\Delta}\risktwo(\pi,\nu) \geq \hardsr(\Delta) = \begin{cases}
				\frac{\sigma^2c}{4\Delta^2}\log\left(\frac{e\Delta^3}{\sigma^2c}\right)\,, & \text{if }\; \Delta\geq(\sigma^2c)^{\nicefrac{1}{3}} \\
				\nicefrac{\Delta}{4}\,, & \text{if }\;\Delta < (\sigma^2c)^{\nicefrac{1}{3}}
			\end{cases}
		\end{align}
		Additionally, taking the worst-case over all $\Delta\,,$ we have, for any policy $\pi\,,$
		\begin{align}\label{eq:2-arm-lb-mm}
			\sup_{\nu\in\cM}\risktwo(\pi,\nu) \geq \hardsrstar =  \frac{3}{8}\left(\frac{\sigma^2c}{e}\right)^{\nicefrac{1}{3}}
		\end{align}
	\end{corollary}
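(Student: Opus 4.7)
The plan is to derive both parts of Corollary~\ref{cor:2-arm-lb-sr} by first specializing Theorem~\ref{thm:k-arm-lb-sr} to $K=2$, and then performing a one-variable optimization over $\Delta$ to obtain the minimax statement.

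For the gap-dependent bound~\eqref{eq:2-arm-lb-sr}, I would observe that in the two-arm case the complexity parameter $H = \sum_{k=2}^K \Delta_k^{-2}$ collapses to $H = \Delta^{-2}$, and the smallest gap $\Delta_2$ equals $\Delta$. Substituted into the phase-transition condition $H \Delta_2^{-1} \in \bigO((\sigma^2 c)^{-1})$ described in the introduction, this becomes $\Delta^{-3} \lesssim (\sigma^2 c)^{-1}$, equivalently $\Delta \geq (\sigma^2 c)^{1/3}$, which is exactly the branching threshold in~\eqref{eq:2-arm-lb-sr}. Feeding $H = \Delta^{-2}$ and $\Delta_2 = \Delta$ into the two general-$K$ branches $\Omega\bigl(c\sigma^2 H \log(\Delta_2(c\sigma^2 H)^{-1})\bigr)$ and $\Omega(\Delta_2)$ then reproduces $\frac{\sigma^2 c}{\Delta^2}\log(\Delta^3/(\sigma^2 c))$ and $\Delta$ respectively, with the $\nicefrac{1}{4}$ prefactor and the $e$ inside the logarithm coming from the explicit change-of-measure bookkeeping in the proof of Theorem~\ref{thm:k-arm-lb-sr}.

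For the minimax bound~\eqref{eq:2-arm-lb-mm}, I would maximize $\hardsr(\Delta)$ over $\Delta > 0$. On the small-gap branch, $\Delta/4$ is strictly increasing in $\Delta$, so its supremum over that branch is attained at the boundary and equals $(\sigma^2 c)^{1/3}/4$. On the large-gap branch, the substitution $x = \Delta^3/(\sigma^2 c) \geq 1$ rewrites $\hardsr$ as $\frac{(\sigma^2 c)^{1/3}}{4} \cdot \log(ex)/x^{2/3}$; setting the derivative of the $x$-dependent factor to zero yields the first-order condition $\log(ex) = 3/2$, so $x^\star = e^{1/2}$ and, after substitution, $\hardsr(\Delta^\star) = \frac{3}{8}(\sigma^2 c / e)^{1/3}$. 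Since $3/(8 e^{1/3}) > 1/4$, this dominates the small-gap supremum, giving the claimed $\hardsrstar$.

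The main obstacle is the first step: one must inherit from Theorem~\ref{thm:k-arm-lb-sr} (or, for a self-contained two-arm argument, re-derive via a Bretagnolle--Huber style change-of-measure between $\nu$ and its swap, combined with Wald-style accounting for the expected number of pulls) the precise constants and the $e$ inside the log. The optimization step itself is a routine calculus exercise, made transparent by the unimodality of $\log(ex)/x^{2/3}$ on $x \geq 1$.
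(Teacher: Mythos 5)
Your proposal is correct and follows essentially the same route as the paper: the paper's proof of Corollary~\ref{cor:2-arm-lb-sr} simply substitutes $H=\Delta^{-2}$ and $\Delta_2=\Delta$ into Theorem~\ref{thm:k-arm-lb-sr}, and the optimization over $\Delta$ that you carry out for the minimax bound (substituting $x=\Delta^3/(\sigma^2c)$ and finding $x^\star=e^{1/2}$, i.e.\ $\Delta^\star=(\sqrt{e}\sigma^2c)^{1/3}$) is exactly the computation the paper performs inside the proof of~\eqref{eq:k-arm-lb-mm}, specialized to $K=2$. Your constants and the placement of $e$ inside the logarithm all check out.
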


As in Corollary~\ref{cor:2-arm-lb-pm}, we observe a phase transition in the lower bound:
it is $\nicefrac{\Delta}{4}$ when the gap is small, and scales as $\bigOmega(\Delta^{-2})$ when the gap is
large. For what follows, we also state the minimax (worst-case) value of this
lower bound as a function of $\Delta$.
As we see, this minimax lower bound decreases as the arm-pull cost $c$ decreases.
In contrast, for $\riskmi$, the minimax lower bound is $\nicefrac{1}{4}\,$, and even a naive policy
that guesses an arm without any pulls incurs a penalty of only $\nicefrac{1}{2}\,$. However, for $\risksr$, even achieving the minimax lower bound
requires a well-designed policy.

	\parahead{An oracular policy}
To design such a policy, let us again consider  the behavior of an oracular policy
which knows $\Delta$.
The motivation behind the chosen number of samples is the same as before, but when pulling
the arms $N(\Delta,\delta)$ times, we now incur a penalty of
$\delta\Delta+\bigO(\sigma^2c\Delta^{-2}\log(\nicefrac{1}{\delta}))\,.$ Because of this change, we now wish to use
$\delta\in\bigO(\sigma^2c\Delta^{-3})\,,$ leading to the following result, mirroring
that of Proposition~\ref{prop:2-arm-oracle-pm}. Its proof
which is straightforward, is given in
Appendix~\ref{app:oracle-pf}.
	\begin{proposition}
		\label{prop:2-arm-oracle-sr}
		Let $\pistar$ be the policy which pulls each arm $\max\left\{0,\ceil{\frac{4\sigma^2}{\Delta^2}\log\left(\frac{\Delta^3}{8\sigma^2c}\right)}\right\}$ times. If it pulls them 0 times, it will choose an arm uniformly at random, and otherwise, outputs the empirically largest arm (breaking ties arbitrarily). Then, letting $\hardsr(\Delta)$ be as in \eqref{eq:2-arm-lb-sr} and $\hardsrstar$ as in \eqref{eq:2-arm-lb-mm},
		\begin{align*}
			\sup_{\nu\in\cM_\Delta}\risktwo(\pistar,\nu) \leq 32\hardsr(\Delta) + 2c\in\bigO(\hardsr(\Delta))\,,\quad
			\sup_{\nu\in\cM}\hardsr(\pistar,\nu) \leq 8\hardsrstar + 2c
		\end{align*}
	\end{proposition}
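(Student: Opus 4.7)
The plan is to split the analysis according to whether the per-arm sample count $n_\Delta := \max\{0, \ceil{\frac{4\sigma^2}{\Delta^2}\log(\frac{\Delta^3}{8\sigma^2 c})}\}$ is zero---equivalently $\Delta^3 \leq 8\sigma^2 c$---or strictly positive. Since the stopping time $\tau = 2n_\Delta$ is deterministic, the risk factors as $\risktwo(\pistar,\nu) = \Delta \cdot \mathbb{P}(\muIhat \neq \muone) + 2c n_\Delta$, and I bound each piece separately in the two regimes.

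When $n_\Delta = 0$, the policy guesses uniformly at random and incurs no sampling cost, so $\risktwo(\pistar,\nu) = \Delta/2$. In the sub-case $\Delta < (\sigma^2 c)^{1/3}$ this equals $2\hardsr(\Delta)$ by definition, and the bound is immediate. In the sub-case $(\sigma^2 c)^{1/3} \leq \Delta \leq (8\sigma^2 c)^{1/3}$, I verify $\Delta/2 \leq 32\hardsr(\Delta) = \frac{8\sigma^2 c}{\Delta^2}\log(e\Delta^3/(\sigma^2 c))$ by a routine endpoint check: the right-hand side is unimodal in $\Delta$ on this interval with values $8(\sigma^2 c)^{1/3}$ and $2(\sigma^2 c)^{1/3}\log(8e)$ at the two endpoints, both of which exceed the maximum value of $\Delta/2$ on the interval, $(\sigma^2 c)^{1/3}$.

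When $n_\Delta \geq 1$, the difference of the two empirical means is sub-Gaussian with parameter $\sqrt{2\sigma^2/n_\Delta}$, so a Hoeffding tail bound yields a misidentification probability of at most $\exp(-\Delta^2 n_\Delta/(4\sigma^2))$. Plugging in $n_\Delta \geq \frac{4\sigma^2}{\Delta^2}\log(\Delta^3/(8\sigma^2 c))$ from the ceiling collapses this to $8\sigma^2 c/\Delta^3$, so the regret contribution is at most $8\sigma^2 c/\Delta^2$. Bounding $n_\Delta$ above by its argument plus $1$ controls the cost contribution by $\frac{8\sigma^2 c}{\Delta^2}\log(\Delta^3/(8\sigma^2 c)) + 2c$; summing and absorbing the extra $1$ into the logarithm yields $\frac{8\sigma^2 c}{\Delta^2}\log(e\Delta^3/(8\sigma^2 c)) + 2c \leq 32\hardsr(\Delta) + 2c$, where the last inequality is just $\log(x/8) \leq \log x$.

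For the minimax bound, the $n_\Delta = 0$ contribution $\Delta/2$ is at most $(\sigma^2 c)^{1/3}$, which is already dominated by $8\hardsrstar = 3 e^{-1/3}(\sigma^2 c)^{1/3}$ since $3e^{-1/3} > 1$, so it suffices to maximize the large-$\Delta$ bound $f(\Delta) := \frac{8\sigma^2 c}{\Delta^2}\log(e\Delta^3/(8\sigma^2 c))$ over $\Delta > (8\sigma^2 c)^{1/3}$. Substituting $u = \Delta^3/(8\sigma^2 c)$ rewrites $f$ as $2(\sigma^2 c)^{1/3}\,g(u)$ with $g(u) = u^{-2/3}\log(eu)$; solving $g'(u) = 0$ gives $u = \sqrt{e}$ with $g(\sqrt{e}) = \tfrac{3}{2}e^{-1/3}$, so $\sup_\Delta f(\Delta) = 3(\sigma^2 c/e)^{1/3} = 8\hardsrstar$, matching the claim exactly. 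The only genuine obstacle is the bookkeeping across the three sub-regimes of $\Delta$ and checking the Case-B inequality; once those are in hand, both bounds follow from standard sub-Gaussian concentration together with a single one-variable optimization.
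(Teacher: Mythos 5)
Your proposal is correct and follows essentially the same route as the paper's proof: the same deterministic-stopping-time decomposition of the risk, the same Hoeffding bound $\Delta\exp(-n\Delta^2/(4\sigma^2))$ on the regret term, and the same one-variable maximization at $\Delta=(8\sqrt{e}\sigma^2 c)^{1/3}$ for the minimax claim. The only difference is that you spell out the sub-regime comparisons (in particular the endpoint check on $[(\sigma^2c)^{1/3},(8\sigma^2c)^{1/3}]$) that the paper dismisses as "clearly" upper bounded, which is a welcome addition but not a different argument.
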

	
	\parahead{A policy for $\risktwo$}
	Our policy will proceed exactly as before, performing rounds of equal sampling until
either we reach a prespecified number of epochs or we are able to identify the optimal arm
with high probability. Like the oracular policy, though, the change in risk requires
updating our hyperparameters $\Nstar$ and $\delta$ to ensure that our algorithm still
performs well in this setting. We again motivate our choice of $\Nstar$ via the behavior
of the oracular policy, choosing
$\Nstar=(\nicefrac{3}{2e})(\nicefrac{\sigma}{c})^{\nicefrac{2}{3}}\,$. We also still use
$\delta$ as a tool to control the worst-case penalty when our confidence interval does not
contain the true gap, and thus we set $\delta=c(B+2c\Nstar)^{-1}\,$.

	\begin{corollary}<thm:k-arm-racing-sr>[Corollary of Theorem~\ref{thm:k-arm-racing-sr}, \algname{} under $\risktwo$]
%	\begin{corollary}[Corollary of Theorem~\ref{thm:k-arm-racing-sr}, Algorithm performance on $\risktwo$]
		\label{cor:2-arm-racing-sr}
		Let $\pi$ be the policy described above using $N^* = (\nicefrac{3}{2e})(\nicefrac{\sigma}{c})^{\nicefrac{2}{3}}$ and $\delta = c(B+2c\Nstar)^{-1}\,$. Then, letting $\hardsr(\Delta)$ be as in \eqref{eq:2-arm-lb-sr}, when $\Delta\geq(\sigma^2c)^{\nicefrac{1}{3}}$, we have,
		\begin{align*}
			\sup_{\nu\in\cM_\Delta}\risktwo(\pi,\nu) \leq
				128\log\left(\frac{3B\sigma^{\nicefrac{4}{3}}}{c^{\nicefrac{5}{3}}}\right)\hardsr(\Delta) + 3c \in \bigO\left(\log\left(\frac{1}{c}\right)\hardsr(\Delta)\right)
		\end{align*}
		When $\Delta<(\sigma^2c)^{\nicefrac{1}{3}}$, we instead have,
		\begin{align*}
			\sup_{\nu\in\cM_\Delta}\risktwo(\pi,\nu) \leq 4\hardsr(\Delta) + 2(\sigma^2c)^{\nicefrac{1}{3}}+3c \in\bigO\left(\hardsr(\Delta)+\poly(c)\right)
		\end{align*}
		Finally, letting $\hardsrstar$ be as in~\eqref{eq:2-arm-lb-mm}, taking the worst case over all $\Delta$, we have,
		\begin{align*}
			\sup_{\nu\in\cM}\risktwo(\pi,\nu) \leq 9\hardsrstar + 3c \in \bigO(\hardsrstar)
		\end{align*}
	\end{corollary}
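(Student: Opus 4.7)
The plan is to deduce the corollary by specializing Theorem~\ref{thm:k-arm-racing-sr} to $K = 2$ and then substituting the prescribed parameters $\Nstar = (\nicefrac{3}{2e})(\nicefrac{\sigma}{c})^{\nicefrac{2}{3}}$ and $\delta = c(B + 2c\Nstar)^{-1}$. Conditionally on the high-probability ``clean'' event that every empirical mean $\muhat_k(n)$ lies within the confidence radius $e_n$ of its true mean for all $n \leq \Nstar$, the algorithm never eliminates arm $1$, and the suboptimal arm is eliminated as soon as $n \gtrsim \sigma^2 \Delta^{-2} \log(Kn/\delta)$. For $K = 2$ and $n \leq \Nstar$, this logarithm reduces to $\log(2\Nstar/\delta)$, which after substitution is of order $\log(3B\sigma^{\nicefrac{4}{3}}/c^{\nicefrac{5}{3}})$.

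For the large-gap regime $\Delta \geq (\sigma^2 c)^{\nicefrac{1}{3}}$, the elimination criterion fires strictly before the $\Nstar$-th epoch on the clean event, and the expected number of samples per arm scales as $\sigma^2 \Delta^{-2} \log(3B\sigma^{\nicefrac{4}{3}}/c^{\nicefrac{5}{3}})$. The misidentification contribution off the clean event is at most $\delta \cdot \Delta \leq \delta B \leq c$. Combining these with the constants produced by Theorem~\ref{thm:k-arm-racing-sr} yields the first claim. For the small-gap regime $\Delta < (\sigma^2 c)^{\nicefrac{1}{3}}$, the radius $e_{\Nstar}$ exceeds $\Delta/2$, so on the clean event no elimination is possible within the budget, and the algorithm necessarily draws $2\Nstar$ samples at deterministic cost $2c\Nstar = (3/e)(\sigma^2 c)^{\nicefrac{1}{3}} \leq 2(\sigma^2 c)^{\nicefrac{1}{3}}$. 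Since the expected simple regret is always at most $\Delta = 4\hardsr(\Delta)$ in this regime, summing gives the second claim.

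The minimax bound requires a separate argument, since taking the supremum of the gap-dependent bounds is loose near the phase-transition point because of the logarithmic factor. Instead, I would bound the cost deterministically by $2c\Nstar = (3/e)(\sigma^2 c)^{\nicefrac{1}{3}}$, and bound the expected simple regret uniformly in $\Delta$ by $\Delta \exp(-\Nstar \Delta^2 / (4\sigma^2))$ via sub-Gaussian concentration of the difference of empirical means after $\Nstar$ pulls. Maximizing this bound over $\Delta$ (the optimizer lies near $\Delta \asymp \sigma/\sqrt{\Nstar}$) yields a contribution of order $(\sigma^2 c)^{\nicefrac{1}{3}}$; summing with $2c\Nstar$ and the $3c$ slack inherited from Theorem~\ref{thm:k-arm-racing-sr} produces the claimed $9\hardsrstar + 3c$. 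The main obstacle here is the bookkeeping of numerical constants, ensuring that $(3/e)$ from the cost and the optimized constant from the concentration bound compose correctly into the coefficient $9$ when rewritten in terms of $\hardsrstar = (3/8)(\sigma^2 c/e)^{\nicefrac{1}{3}}$.
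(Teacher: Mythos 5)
Your small-gap and minimax arguments track the paper's proof closely: the paper likewise bounds the cost by $2c\Nstar=(\nicefrac{3}{e})(\sigma^2c)^{\nicefrac{1}{3}}$, bounds the regret on the good event by $\Delta\exp\bigl(-\nicefrac{3\Delta^2}{(8e(\sigma^2c)^{\nicefrac{2}{3}})}\bigr)$, and maximizes over $\Delta$ at $\sqrt{\nicefrac{4e}{3}}(\sigma^2c)^{\nicefrac{1}{3}}$ to get $(\sqrt{\nicefrac{4}{3}}+\nicefrac{3}{e})(\sigma^2c)^{\nicefrac{1}{3}}+3c\leq 9\hardsrstar+3c$. (One aside: the paper does \emph{not} obtain this corollary by specializing Theorem~\ref{thm:k-arm-racing-sr} to $K=2$ --- that would give the looser $550\log(K)\log(\cdot)$ prefactor and a different $\delta$; it explicitly redoes the two-arm analysis to get sharper constants, which is what you in effect end up doing anyway.)

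The genuine gap is in your first claim. You assert that for all $\Delta\geq(\sigma^2c)^{\nicefrac{1}{3}}$ ``the elimination criterion fires strictly before the $\Nstar$-th epoch on the clean event,'' so that the cost scales as $\sigma^2\Delta^{-2}\log(\nicefrac{2\Nstar}{\delta})$. That is false on a nontrivial range: guaranteed elimination within the budget requires $\Delta>\Deltastar=(\sigma^2c)^{\nicefrac{1}{3}}\sqrt{(\nicefrac{8e}{3})\log(\nicefrac{2\Nstar}{\delta})}$, which exceeds the lower bound's phase-transition point $(\sigma^2c)^{\nicefrac{1}{3}}$ by a $\sqrt{\log(\nicefrac{1}{c})}$ factor. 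For $\Delta\in[(\sigma^2c)^{\nicefrac{1}{3}},\Deltastar]$ the algorithm may run to the full budget, so the risk is of order $\Delta\exp\bigl(-\nicefrac{3\Delta^2}{(8e(\sigma^2c)^{\nicefrac{2}{3}})}\bigr)+(\nicefrac{3}{e})(\sigma^2c)^{\nicefrac{1}{3}}$ rather than $\bigO(\sigma^2c\Delta^{-2}\log(\nicefrac{1}{c}))$, and one must separately verify that this is still dominated by $128\log\bigl(\nicefrac{3B\sigma^{\nicefrac{4}{3}}}{c^{\nicefrac{5}{3}}}\bigr)\hardsr(\Delta)+3c$. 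This moderate-gap regime is exactly where the multiplicative $\log(\nicefrac{1}{c})$ factor in the statement is actually needed, and it is where the paper spends most of its effort: it splits the interval at $(\sqrt{e}\sigma^2c)^{\nicefrac{1}{3}}$, compares the upper and lower bounds at the single endpoint $\Deltastar$, and then shows via a derivative computation that $128\log(\cdot)\hardsr(\Delta)-\risktwoub(\Delta)$ is decreasing on $((\sqrt{e}\sigma^2c)^{\nicefrac{1}{3}},\Deltastar]$, so the endpoint comparison propagates to the whole interval. Without some version of this argument your proof of the first display does not go through.
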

	Here we see, when $\Delta\geq(\sigma^2c)^{\nicefrac{1}{3}}$, these results closely
mirror that of Corollary~\ref{cor:2-arm-racing-pm}, though the log-factor now
additionally scales with $B\sigma^2$. As illustrated in Fig~\ref{fig:boundillus}, this
log-factor primarily plays a role in the moderate $\Delta$ regime like in the case of
$\riskone$. Our bound and Fig~\ref{fig:boundillus} also further highlight the inherent
difficulty of designing a simultaneously minimax- and instance-optimal policy for
$\risktwo$, as it is impossible to match the lower bound as $\Delta\to0$ without performing fewer pulls even as the problem becomes more difficult. Illustrating why the instance-based lower bound cannot be improved in this regime, however, is the
policy which guesses an arm without any pulls in purple in Fig~\ref{fig:boundillus}.

\section{K-arm Setting}\label{sec:karm}
We now generalize our results to the $K$-arm setting. We begin by adapting the notation formalities for $K$ arms. We now let $\cM$, defined in~\eqref{eq:class-k-arm}, denote the class of $K$-armed bandit models with $\sigma$-sub-Gaussian rewards. Further, for a bandit model $\nu\in\cM$, assuming WLOG that we have $\mu_1\geq\mu_2\geq\cdots\mu_K$, we define the complexity measure $\cH(\nu)\defeq\sum_{k=2}^K\Delta_k^{-2}$, where $\Delta_k=\mu_1-\mu_k$ is the $k$-th largest suboptimality gap. For a given complexity $H>0$, let $\cM_H$, defined below, denote the subclass of models having complexity at most $H$. Thus, we define:
\begin{align}
	\label{eq:class-k-arm}
	\cM = \left\{\nu = (\nu_a)_{a=1}^K : \nu_a\in G_\sigma\,,\mu_a\in[0,B]\;\forall\;a\in[K]\right\}\,, \quad \cM_H = \left\{\nu\in\cM : \cH(\nu) \leq H\right\}
\end{align}
% Following the structure of \S~\ref{sec:2-arm}, we study $\riskone$ in \S~\ref{ss:k-arm-mi}
% and $\risktwo$ in \S~\ref{ss:k-arm-sr}. Here we state our theorems as general results
% holding for all choices of $K$, and formally present our proposed algorithm, \algname.
% the $K$-arm setting provides additional challenges for
% designing and analyzing an algorithm, but we are still able to achieve near-optimal
% performance.
% KK: The first two sentences do not say much. The remainder of the para says there is a
% challenge but  doesn't say what it is. How about this? 

As we will see,
while our hardness results extend naturally from two to $K$ arms, 
extending the intuitions for the algorithm design requires a more careful design
of the budget parameter $\Nstar$.

\subsection{Probability of misidentification in the K-arm setting}
\label{ss:k-arm-mi}
\parahead{Lower bound}
We now present the general $K$-arm lower bound result for $\riskone$.
\begin{theorem}
	\label{thm:k-arm-lb-pm}
	Fix a complexity $H>0$ and a cost per arm pull $c>0\,.$ Then, for any policy $\pi$,
	\begin{align}\label{eq:k-arm-lb-pm}
		\sup_{\nu\in\cM_H}\riskone(\pi,\nu) \geq \hardmi(H) = \begin{cases}
			\frac{\sigma^2cH}{4}\log\left(\frac{e}{\sigma^2cH}\right)\,, & \text{if }\; H\leq (\sigma^2c)^{-1} \\
			\nicefrac{1}{4}\,, & \text{if }\; H>(\sigma^2c)^{-1}
		\end{cases}
	\end{align}
\end{theorem}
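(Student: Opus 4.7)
The plan is to adapt the change-of-measure machinery of \citet{BAILB_Kaufmann2016} to the joint error-plus-cost risk $\riskone$. The key idea is to reduce the lower bound, via a weighted transportation argument, to minimizing a one-dimensional convex function $f(x)$ in an auxiliary variable $x$ (playing the role of a worst-case error probability); the phase transition in $\hardmi(H)$ then emerges from whether the unconstrained minimizer of $f$ lies inside the feasible range $(0,1)$ or on its boundary.

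First I would construct a family of hard Gaussian instances (all of which lie in $G_\sigma$). Fix gaps $\Delta_2 \geq \cdots \geq \Delta_K > 0$ with $\sum_{k=2}^K \Delta_k^{-2} = H$, and take a base model $\nu^{(0)}$ with $\mu_1 - \mu_k = \Delta_k$, variance $\sigma^2$, and $\mu_1$ chosen so that all means lie in $[0,B]$. For each $k \geq 2$, let $\nu^{(k)}$ be identical to $\nu^{(0)}$ except that arm $k$'s mean is boosted by $2\Delta_k$, making $k$ the new best arm; a routine check shows $\nu^{(k)} \in \cM_H$ as well. Applying the transportation inequality of \citet{BAILB_Kaufmann2016} to each pair $(\nu^{(0)}, \nu^{(k)})$ with event $\{\hat I = 1\}$, together with the Gaussian identity $\mathrm{KL}(\nu^{(0)}_k, \nu^{(k)}_k) = 2\Delta_k^2/\sigma^2$, yields
\[
N_k \cdot \frac{2\Delta_k^2}{\sigma^2} \;\geq\; \mathrm{kl}\bigl(\Pr_{\nu^{(0)}}(\hat I = 1),\, \Pr_{\nu^{(k)}}(\hat I = 1)\bigr),
\]
where $N_k := \mathbb{E}_{\nu^{(0)}}[N_k(\tau)]$. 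Let $x$ denote the largest misidentification probability across the $K$ instances; then $\Pr_{\nu^{(0)}}(\hat I = 1) \geq 1 - x$ and $\Pr_{\nu^{(k)}}(\hat I = 1) \leq x$ (since $\hat I = 1$ is a misidentification under $\nu^{(k)}$), so standard estimates bound the right-hand side below by $\gtrsim \log(1/x)$.

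The crucial step introduces $H$ by taking a weighted sum over $k \geq 2$ with weights $w_k = \sigma^2/(2\Delta_k^2)$, which exactly cancel the Gaussian $\mathrm{KL}$ factor on the left and collapse the right-hand side into $(\sigma^2 H/2)\log(1/x)$. This yields $\mathbb{E}_{\nu^{(0)}}[\tau] \geq \sum_{k \geq 2} N_k \gtrsim (\sigma^2 H/2)\log(1/x)$. Combining with $\sup_{\nu \in \cM_H} \riskone(\pi,\nu) \geq \max\{x,\, c\,\mathbb{E}_{\nu^{(0)}}[\tau]\}$ (which, up to a factor of $2$, yields $x + c\,\mathbb{E}_{\nu^{(0)}}[\tau]$), I obtain
\[
\sup_{\nu \in \cM_H} \riskone(\pi, \nu) \;\gtrsim\; f(x) \;:=\; x + \tfrac{c\sigma^2 H}{2}\log(1/x).
\]
The function $f$ is convex on $(0,1)$, and $f'(x)=0$ gives minimizer $x^\star \asymp c\sigma^2 H$. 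When $c\sigma^2 H \lesssim 1$, this $x^\star$ is in-range and $f(x^\star) = \Theta(c\sigma^2 H \log(1/(c\sigma^2 H)))$, matching the first branch of $\hardmi(H)$. When $c\sigma^2 H \gtrsim 1$, the unconstrained minimum leaves the feasible range; $f$ then decreases on $(0,1)$ and the minimum over the feasible set is bounded below by a positive constant, matching the $\tfrac{1}{4}$ saturation branch.

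The hard part will be numerical, not conceptual: recovering the exact form of $\hardmi(H)$ — in particular the prefactor $\tfrac{1}{4}$ on $c\sigma^2 H$, the $e$ inside the logarithm, and the $\tfrac{1}{4}$ saturation constant — requires sharper choices than the schematic above. Specifically, (i) the perturbation in $\nu^{(k)}$ should be optimized rather than fixed at the convenient $2\Delta_k$, and (ii) the estimate $\mathrm{kl}(1-x, x)$ must be controlled quantitatively near the boundary in order to pin down the saturation constant. Both refinements are standard but delicate.
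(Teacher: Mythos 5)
Your proposal is correct and would deliver the theorem up to constants, but it takes a genuinely different route from the paper's proof. The paper constructs only \emph{one} alternative instance: a pigeonhole argument shows that some arm $a\in\{2,\dots,K\}$ satisfies $\E_{\nu,\pi}[N_a(\tau)]\leq \E_{\nu,\pi}[\tau]/(\Delta_a^2 H)$, and only that arm's mean is boosted by $2\Delta_a$. It then applies the exponential-form two-point testing bound (Lemma~\ref{lem:testing-lb}, packaged with the change of measure as Lemma~\ref{lem:our-com}) to the \emph{average} of the two risks, obtaining $\sup_{\nu\in\cM_H}\riskone(\pi,\nu) \geq \tfrac14\exp\left(-2\E_{\nu,\pi}[\tau]/(\sigma^2H)\right) + \tfrac{c}{2}\left(\E_{\nu,\pi}[\tau]+\E_{\nu',\pi}[\tau]\right)$, which is convex in $\E_{\nu,\pi}[\tau]$; minimizing exactly in that variable yields the stated constants and the phase transition. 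You instead keep all $K-1$ alternatives, apply the $\mathrm{kl}$-form transportation inequality arm by arm, and sum with weights $\sigma^2/(2\Delta_k^2)$ to lower bound $\E_{\nu^{(0)}}[\tau]$ by a function of the worst-case error probability $x$, then minimize the convex $f(x)=x+\tfrac{c\sigma^2H}{2}\log(1/x)$. Both parameterizations expose the same interior-versus-boundary dichotomy. What the paper's route buys is the exact constants essentially for free: the $\tfrac12 e^{-\mathrm{KL}}$ form sidesteps any need to control $\mathrm{kl}(1-x,x)$ near $x=\tfrac12$, which is precisely the delicate step you flag (you would also need to dispose of $x>\tfrac12$ separately, where the kl bound degenerates but the risk is trivially at least $\tfrac12$). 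What your route buys is a bound that charges the sampling effort of every suboptimal arm rather than only the single least-sampled one --- the standard shape of fixed-confidence lower bounds --- though for a supremum over $\cM_H$ the two are equally strong. One shared loose end: both arguments silently assume the boosted means remain in $[0,B]$.
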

Comparing this result to its Corollary~\ref{cor:2-arm-lb-pm} in the 2-arm setting, we observe the same phase transition, now in terms of the complexity, $H$. Using the definition of $H$, we note that it still occurs when $\Delta_k\asymp\bigO((\sigma^2c)^{-1})$, and it provides the same intuition: when at least some of the gaps are sufficiently close to zero (or if there are very many arms), the cost of separating them outweighs the decrease in the probability of misidentification, and it becomes optimal to guess the best arm without pulling.

\parahead{A policy for $\riskone$}
We present our proposed algorithm, \algname{}, in its full $K$-arm generality in
Algorithm~\ref{alg:our-alg}. To account for there now being $K$ arms, \algname{} maintains
a ``surviving set'' $S$ of arms that have not yet been determined to be sub-optimal, and
performs rounds of equal sampling of all arms in $S$. At the end of each round, it
compares the difference between the current largest empirical average in $S$
and each other arm in $S$, and eliminates them based on Hoeffding confidence intervals.
% removes those that are more than a Hoeffding confidence
% bound less than the maximum.
This continues until either there is only one arm remaining,
or the remaining arms have reached their maximum per-arm budget, at which point the arm
with the largest empirical average is returned.

In moving from the two arm to $K$-arm regimes, we once again encounter the issue of balancing performance and costs when selecting our per-arm budget. On
one hand, if we naively replace the division by 2 in $\Nstar$ in Corollary~\ref{cor:2-arm-racing-pm} with a division by $K$, then
we will fall short on performance when
there are many highly sub-optimal arms. However, if we keep the same budget for each arm
from the 2-arm setting, we will perform too many total pulls when there are many
near-optimal arms. 

To this end, we allow the per-arm budgets to \textit{adapt} to the problem complexity by letting $\Nstar$ increase as $\abs{S}$ decreases. This allows \algname{} to dedicate additional resources to separating the remaining arms as some are determined to be sub-optimal, but prevents the total possible number of pulls from scaling too quickly in $K$.  Inspired by the 2-arm setting, we let $\Nstar(k)=(kec)^{-1}$. Further, we still use the confidence $\delta$ to control the worst-case penalty when the confidence intervals do not contain the true gap, so we set $\delta=c(1+2c\log(K)\Nstar(2))^{-1}$.
The following theorem summarizes the key properties of \algname{} when applied to
$\riskone$.

\begin{theorem}
	\label{thm:k-arm-racing-pm}
	Let $\pi_{\rm \algname}$ be the policy defined in Algorithm \ref{alg:our-alg} using $\Nstar(k)=(kec)^{-1}$ and $\delta=c(1+2c\log(K)\Nstar(2))^{-1}\,.$ Then, letting $\hardmi(H)$ be as in \eqref{eq:k-arm-lb-pm}, we have,
	\begin{align*}
		\sup_{\nu\in\cM_H}\riskone(\pi_{\rm \algname},\nu) \leq 760\log(K)\log\left(\frac{K\log(K)}{ec^2}\right)\hardmi(H)+(K+1)c\,,
	\end{align*}
	which is $\in\bigO(\polylog(K,c^{-1})\hardmi(H))$.
\end{theorem}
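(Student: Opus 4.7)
I would decompose $\riskone(\pi_{\rm \algname},\nu) = \Prob(\muone \neq \muIhat) + c\,\E[\tau]$ into contributions on a ``clean'' concentration event $\mathcal{E}$ and its complement, and show that on $\mathcal{E}$ both terms are charged to $\hardmi(H)$ up to the stated polylog factors. Define $\mathcal{E} := \{|\muhat_k(n) - \mu_k| \leq e_n/2 \text{ for all } k \in [K] \text{ and all } n \leq \Nstar(2)\}$. Sub-Gaussian concentration combined with a union bound over $k$ and $n$ (leveraging the specific form of $e_n = \sqrt{4\sigma^2 n^{-1}\log(Kn/\delta)}$) yields $\Prob(\mathcal{E}^c) = O(\delta)$, so the choice $\delta = c(1 + 2c\log(K)\Nstar(2))^{-1}$ makes all bad-event contributions $O(c)$.

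\textbf{Deterministic structure on $\mathcal{E}$.} Two consequences follow from the triangle inequality. First, $\max_\ell \muhat_\ell(n) - \muhat_1(n) \leq e_n$ always, so arm $1$ is never eliminated and $1 \in S_n$ at every epoch $n$. Second, $\max_\ell \muhat_\ell(n) - \muhat_k(n) \geq \Delta_k - e_n$, so any arm $k$ with $\Delta_k > 2e_n$ has been eliminated by epoch $n$; this gives a deterministic ceiling $T_k = O(\sigma^2 \log(K/(c\delta))/\Delta_k^2)$ on the number of pulls of arm $k$, and implies that every surviving arm at the stopping epoch $n^*$ satisfies $\Delta_k \leq 2 e_{n^*}$. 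A sharper version of the same inequality shows that if $\muIhat \neq \mu_1$ on $\mathcal{E}$, then the output arm $\hat{I}$ must satisfy $\Delta_{\hat{I}} \leq e_{n^*}$.

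\textbf{Bounding the two risk terms.} For the cost, I would write $\tau = \sum_{j=2}^{K} j \cdot \#\{\text{epochs with }|S_n| = j\}$. In the easy regime $H \leq 1/(c\sigma^2)$, every arm's pull count is controlled by $T_k$, giving $c\,\E[\tau \mathbf{1}_{\mathcal{E}}] \lesssim c \sigma^2 H \log(K/(c\delta))$, which matches $\hardmi(H)$ up to the claimed $\log$ factor. In the hard regime $H > 1/(c\sigma^2)$ the adaptive budget $\Nstar(j) = (jec)^{-1}$ telescopes into a harmonic sum, so $c\,\E[\tau \mathbf{1}_{\mathcal{E}}] \leq (1/e)\log K$ deterministically, which is absorbed by the leading $\log(K) \cdot \hardmi(H) = \log(K)/4$ term. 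The misidentification analysis is analogous: on $\mathcal{E}$, $\muIhat \neq \mu_1$ forces $|S_{n^*}| \geq 2$ with some $a$ satisfying $\Delta_a \leq e_{n^*}$; the easy regime's bound on $H$ shows this event is suitably rare, while the hard regime dominates it by $\hardmi(H) = 1/4$. Adding the $O(c)$ complement contributions and a trivial $(K+1)c$ slack accounting for the extra pull per arm at termination yields the claimed inequality.

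\textbf{Main obstacle.} The main technical challenge is coordinating the adaptive budget $\Nstar(|S|)$ with the gap-dependent elimination times $T_k$ so that the leading factor scales only as $\polylog(K, c^{-1})$. The naive bound $\tau \leq K \Nstar(2) = K/(2ec)$ would give $c\,\E[\tau] = O(K)$ in the hard regime, which would violate the theorem for large $K$; the essential step is showing that the harmonic telescoping induced by $\Nstar(j) = (jec)^{-1}$ replaces this $K$ by a $\log K$, precisely the factor that lets $\hardmi(H) = 1/4$ absorb the cost when $H$ is large. A secondary subtlety is the concentration bookkeeping: $e_n$ involves $\log(Kn/\delta)$ with $n$ ranging up to a horizon that itself depends on $\delta$, which drives the particular algebraic choice $\delta = c(1 + 2c\log(K)\Nstar(2))^{-1}$ in the statement.
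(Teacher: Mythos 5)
Your overall architecture is the same as the paper's: a good concentration event with $\Prob(G^c)\lesssim\delta$, the observations that arm $1$ is never eliminated and that any surviving arm at epoch $n$ has $\Delta_k\le 2e_n$, the harmonic telescoping of $\Nstar(j)=(jec)^{-1}$ into a $\frac{2\log K}{e}$ worst-case cost, and the case split on $H\lessgtr(\sigma^2c)^{-1}$. Your cost bound in the easy regime is in fact a clean variant of the paper's: since survivors at the final epoch also satisfy $n^*\le T_k$, every arm's pull count (including arm $1$'s, which equals $n^*$) is bounded by $\max_k T_k+\sum_k T_k\lesssim\sigma^2H\log(K/(c\delta))$, which is within the stated polylog factor of $\hardmi(H)$.

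The genuine gap is in the misidentification term. The statement that ``the easy regime's bound on $H$ shows this event is suitably rare'' is false, and the inequality points the wrong way: $H\le(\sigma^2c)^{-1}$ is an \emph{upper} bound on $H$, whereas absorbing the misidentification probability requires a \emph{lower} bound on $\hardmi(H)$, which is increasing in $H$ on that range. Concretely, take $K=2$ and $\Delta_2=\sqrt{\sigma^2c}$, so $H=(\sigma^2c)^{-1}$ and you are still in the easy regime. The elimination threshold at the full budget $\Nstar(2)=(2ec)^{-1}$ is of order $\sqrt{\sigma^2c\log(1/c)}\gg\Delta_2$, so the algorithm runs to its budget and guesses, and the misidentification probability is $\approx\exp\left(-\Nstar(2)\Delta_2^2/(4\sigma^2)\right)=\exp(-1/(8e))\approx0.95$ --- not rare at all. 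The bound still holds, but only because $\hardmi(H)=\nicefrac14$ here; more generally, whenever some suboptimal arms cannot be reliably eliminated within their budgets, it is precisely the smallness of those gaps that forces $H$ (and hence $\hardmi(H)$) to be large. The paper makes this quantitative by introducing $M$, the smallest index such that all arms $k>M$ satisfy $\Delta_k>\sqrt{16\sigma^2\log(K\Nstar(k)/\delta)/\Nstar(k)}$, and proving
\begin{align*}
\hardmi(H)\;\ge\;\frac{M-1}{64eM\log(K\Nstar(M)/\delta)}+\frac{\sigma^2c}{4}\sum_{k=M+1}^{K}\frac{1}{\Delta_k^2}\,,
\end{align*}
so that when $M\ge2$ the prefactor $760\log(K)\log(K\log(K)/(ec^2))$ times the first term dominates the $O(1)$ misidentification probability plus the $\frac{2\log M}{e}$ budgeted cost. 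Without some version of this step, your plan cannot close the easy-regime case. A secondary point: your event $\mathcal{E}$ with per-arm width $e_n/2$ does not give $\Prob(\mathcal{E}^c)=O(\delta)$ by a naive union bound (the per-$(k,n)$ failure probability $\sqrt{\delta/(Kn)}$ summed over $n\le\Nstar(2)$ is far larger than $\delta$); the paper instead invokes anytime confidence-interval results for the gap estimates, and you would need to do the same or pay an extra $\log(1/c)$ in $\delta$.
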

As in the 2-arm case, we see in Theorem~\ref{thm:k-arm-racing-pm} that our policy is still
able to achieve performance within a polylogarithmic factor of the lower bound, with the
addition of the $\log(K)$ factor being due to the worst-case impact of our adaptive budget
updating.

\insertAlgoMain

\subsection{Simple regret in the K-arm setting}
\label{ss:k-arm-sr}
\parahead{Lower bound}
We now present our second lower bound, for $\risktwo$ in the general $K$-arm setting.
\begin{theorem}
	\label{thm:k-arm-lb-sr}
	Fix a complexity $H>0\,,$ a smallest gap $\Delta_2\geq0\,,$ and a cost per arm pull $c>0\,$. Then, for any policy $\pi\,,$ we have,
	\begin{align}
		\label{eq:k-arm-lb-sr}
		\sup_{\nu\in\cM_H}\risktwo(\pi,\nu) \geq \hardsr(H) = \begin{cases}
			\frac{c\sigma^2H}{4}\log\left(\frac{e\Deltatwo}{\sigma^2cH}\right)\,, & \text{if }\; H\Delta_2^{-1}\leq(\sigma^2c)^{-1} \\
			\nicefrac{\Deltatwo}{4}\,, & \text{if }\;H\Delta_2^{-1}>(\sigma^2c)^{-1}
		\end{cases}
	\end{align}
	Additionally, taking the worst case over all problem instances, we have, for any policy $\pi\,$,
	\begin{align}
		\label{eq:k-arm-lb-mm}
		\sup_{\nu\in\cM}\risktwo(\pi,\nu)\geq\hardsrstar = \frac{3}{8}\left(\frac{(K-1)\sigma^2c}{e}\right)^{\nicefrac{1}{3}}
	\end{align}
\end{theorem}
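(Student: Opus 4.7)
The plan is to adapt the change-of-measure argument behind Theorem~\ref{thm:k-arm-lb-pm} to the simple-regret penalty, exploiting that under $\risktwo$ the misidentification cost is already weighted by the gap of the chosen arm. I would construct a base instance $\nu \in \cM_H$ in which arm $1$ is optimal and $K'-1$ suboptimal arms share a common gap $\Delta_2$, padding the remaining arms with additional suboptimal arms of negligible contribution so that $K'-1 \leq H\Delta_2^2$ keeps the complexity at most $H$. For each $j \in \{2,\ldots,K'\}$, define an alternative $\nu^{(j)}$ by lifting $\mu_j$ to $\mu_1 + x$, where $x > 0$ is a free shift parameter to be optimized at the end; under $\nu^{(j)}$, arm $j$ becomes uniquely best with gap $x$ to the old champion.

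Next, I would apply the Bretagnolle--Huber inequality to the event $\{\Ihat = j\}$ between $\nu$ and $\nu^{(j)}$. Since only arm $j$ differs and rewards are $\sigma$-sub-Gaussian, this gives
\begin{equation*}
\Pnupi(\Ihat = j) + \mathbb{P}_{\nu^{(j)},\pi}(\Ihat \neq j) \;\geq\; \tfrac{1}{2}\exp\!\left(-\Enupi[N_j(\tau)]\,(\Delta_2+x)^2/(2\sigma^2)\right).
\end{equation*}
Setting $R \defeq \sup_{\nu' \in \cM_H}\risktwo(\pi,\nu')$, three elementary risk bounds follow immediately from the definition of $\risktwo$: $\Pnupi(\Ihat = j) \leq R/\Delta_2$, $\mathbb{P}_{\nu^{(j)},\pi}(\Ihat \neq j) \leq R/x$, and $\Enupi[\tau] \leq R/c$. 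Summing the Bretagnolle--Huber inequality over $j$, applying Jensen's inequality to the convex map $n \mapsto \exp(-an)$, and using $\sum_{j \geq 2}\Enupi[N_j(\tau)] \leq \Enupi[\tau] \leq R/c$ yields an implicit inequality
\begin{equation*}
\frac{R}{\Delta_2} + \frac{(K'-1)R}{x} \;\geq\; \frac{K'-1}{2}\exp\!\left(-\frac{R(\Delta_2+x)^2}{2(K'-1)c\sigma^2}\right).
\end{equation*}

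Viewed as a constraint on $R$, this takes the form $f(x;R) \geq 0$ with $f$ convex in $x$ after taking logarithms, so minimizing over $x > 0$ produces the strongest bound. The interior minimum occurs at $x \asymp \Delta_2$, and substituting $K'-1 \asymp H\Delta_2^2$ recovers $R \gtrsim (c\sigma^2 H/4)\log(e\Delta_2/(c\sigma^2 H))$ in the regime $H\Delta_2^{-1} \leq (c\sigma^2)^{-1}$, where the log argument is at least $e$. In the complementary regime the implicit inequality is vacuous for $R < \Delta_2/4$, and the fallback bound $R \geq \Delta_2/4$ follows from a standard two-point argument on the pair $(\nu,\nu^{(2)})$: any policy must incur expected simple regret at least $\Delta_2/2$ on one of the two instances. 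This reproduces the phase transition in $\hardsr(H)$. The minimax bound \eqref{eq:k-arm-lb-mm} then follows by setting all gaps equal to a common $\Delta$, so that $H = (K-1)/\Delta^2$, and maximizing the instance bound over $\Delta$---routine calculus locates the maximum at $\Delta^\star = (e^{1/2}(K-1)c\sigma^2)^{1/3}$ and gives the $(3/8)((K-1)c\sigma^2/e)^{1/3}$ value.

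The main obstacle I anticipate is the heterogeneous-gap bookkeeping. To obtain the full $H = \sum_k \Delta_k^{-2}$ scaling rather than the cruder $(K-1)/\Delta_2^2$, one should work with arm-specific shifts $x_j$ proportional to $\Delta_j$ and show that the minimum of the resulting sum-of-exponentials bound is attained at the equal-gap configuration, thereby justifying the reduction to the hard instance above. A secondary issue is verifying positivity of the log arguments throughout the parameter range of interest, since this is precisely where the phase transition emerges naturally in the convex minimization.
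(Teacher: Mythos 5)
Your overall strategy---change of measure against shifted alternatives, followed by optimizing a free parameter whose constrained minimization reveals the phase transition---is the right one, and your minimax argument (equal gaps, $H=(K-1)\Delta^{-2}$, optimize over $\Delta$) is essentially identical to the paper's. But your route to the instance-dependent bound differs structurally from the paper's, and two of your steps need repair. The paper does \emph{not} sum a Bretagnolle--Huber inequality over all alternatives: it uses a pigeonhole argument to select a single arm $a$ with $\Enupi[N_a(\tau)]\leq \Enupi[\tau]/(\Delta_a^2 H)$, shifts only that arm by $2\Delta_a$ (which keeps the complexity from increasing, so membership in $\cM_H$ is automatic), lower-bounds both simple regrets by $\Delta_2$ times the corresponding misidentification probabilities, and arrives at the explicit expression $\frac{\Delta_2}{4}\exp(-2\Enupi[\tau]/(\sigma^2H))+\frac{c}{2}(\Enupi[\tau]+\E_{\nu^\prime,\pi}[\tau])$, which it then minimizes in closed form over $\Enupi[\tau]\geq 0$. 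That direct convex minimization is what produces the exact constant $\nicefrac{1}{4}$ and the exact argument $e\Deltatwo/(\sigma^2cH)$ of the logarithm, and it makes the phase transition appear as the truncation $\max\{0,\cdot\}$ of the minimizer. Your implicit inequality in $R$ instead requires solving $ye^y=z$, which only recovers the bound up to constant factors and a $\log\log$ correction; as written it does not prove the theorem with its stated constants. It also conveniently sidesteps your worry about ``heterogeneous-gap bookkeeping'': since the claim is a supremum over $\cM_H$, a single equal-gap hard instance suffices, so the arm-specific shifts $x_j\propto\Delta_j$ you anticipate needing are not required.

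The one step that is actually wrong is your fallback in the regime $H\Delta_2^{-1}>(\sigma^2c)^{-1}$: it is false that ``any policy must incur expected simple regret at least $\Delta_2/2$ on one of the two instances''---a policy that samples heavily drives the simple regret to zero on both, and only the \emph{risk} (regret plus sampling cost) is bounded below by $\nicefrac{\Deltatwo}{4}$. The correct statement in this regime again comes out of the same change-of-measure expression, where the optimal $\Enupi[\tau]$ in the lower bound is $0$ and the exponential evaluates to $1$. (Your own implicit inequality does in fact deliver an $\Omega(\Delta_2)$ bound here, so the machinery is salvageable, but the two-point justification you give for it is not.) Two smaller points: you should verify that each alternative $\nu^{(j)}$ lies in $\cM_H$ before invoking $\Prob_{\nu^{(j)},\pi}(\Ihat\neq j)\leq R/x$, which constrains the range of $x$ in your optimization (it holds at $x=\Delta_2$ but not for arbitrarily small $x$); and the version of Bretagnolle--Huber with $\Enupi[N_j(\tau)]$ in the exponent at a random stopping time is precisely what the paper's Lemma~\ref{lem:our-com} is established to justify, so it should be cited rather than assumed.
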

Looking at the bound presented in~\eqref{eq:k-arm-lb-sr}, we see that the phase transition
in this lower bound now jointly involves the total problem complexity and the smallest
gap.

\parahead{A policy for $\risktwo$}
Following the same intuition as in the probability of misidentification case, we again wish to allow $\Nstar$ to adapt to the problem complexity and increase as the surviving set of arms shrinks. Observing the minimax lower bound presented in~\ref{eq:k-arm-lb-mm}, though, we see that the maximum problem difficulty scales with $K^{\nicefrac{1}{3}}$, unlike the constant scaling in the case of $\riskone$. Thus, we wish for $\Nstar$ to scale with $K^{-\nicefrac{2}{3}}$ instead of $K^{-1}$, and so we choose $\Nstar(k)=(\nicefrac{3}{2e})\sigma^{\nicefrac{2}{3}}((k-1)c)^{-\nicefrac{2}{3}}\,$. Then, controlling for the worst-case performance again, we choose $\delta=c(B+eK^{\nicefrac{1}{3}}\log(K)\Nstar(2))^{-1}$.
\begin{theorem}
	\label{thm:k-arm-racing-sr}
	Let $\pi_{\rm \algname}$ be the policy defined in Algorithm~\ref{alg:our-alg} using $\Nstar(k)=(\nicefrac{3}{2e})\sigma^{\nicefrac{2}{3}}((k-1)c)^{-\nicefrac{2}{3}}$ and $\delta=c(B+eK^{\nicefrac{1}{3}}\log(K)\Nstar(2))^{-1}$. Then, letting $\hardsr$ be as in~\eqref{eq:k-arm-lb-sr}, when $H\Delta_2^{-1}\leq(\sigma^2c)^{-1}$,
	\begin{align*}
		\sup_{\nu\in\cM_H}\risktwo(\pi_{\rm \algname},\nu) \leq
		550\log(K)\log\left(\frac{K\log(K)B\sigma^{\nicefrac{4}{3}}}{c^{\nicefrac{5}{3}}}\right)\hardsr(H)+(K+1)c\,,
	\end{align*}
	which is $\in\bigO(\polylog(B,K,c^{-1})\hardsr(H))$.
	When $H\Delta_2^{-1}>(\sigma^2c)^{-1}$, we instead have,
	\begin{align*}
		\hardsr(H) + 4\log(K)(K\sigma^2c)^{\nicefrac{1}{3}} + (K+1)c \in \bigO(\hardsr(H)+\log(K)\poly(K,c))
	\end{align*}
	Finally, letting $\hardsrstar$ be as in~\eqref{eq:k-arm-lb-mm}, we have,
	\begin{align*}
		\sup_{\nu\in\cM}\risktwo(\pi_{\rm \algname},\nu) \leq 20\log(K)\hardsrstar + (K+1)c \in \bigO(\log(K)\hardsrstar)
	\end{align*}
\end{theorem}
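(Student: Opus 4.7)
The plan is to mirror the proof of Theorem~\ref{thm:k-arm-racing-pm}, decomposing the risk as
\begin{align*}
\risktwo(\pi_{\rm \algname},\nu) \;=\; \Enupi[\muone - \muIhat] \;+\; c\,\Enupi[\tau]
\end{align*}
and bounding each term separately on a high-probability ``good event'' $G$ and on its complement. Define $G$ as the event that for every surviving arm $k$ and every epoch $n$ reached by the algorithm, $|\muhat_k(n)-\mu_k|\leq e_n/2$. A union bound over the at most $K\Nstar(2)$ relevant (arm, epoch) pairs, combined with the sub-Gaussian concentration used in defining $e_n$ in Algorithm~\ref{alg:our-alg}, gives $\Pnupi(G^c)\leq\delta$.

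On $G$, arm $1$ is never eliminated. Any suboptimal arm $k$ must be eliminated by the first epoch $n_k^\dagger \asymp \sigma^2\Delta_k^{-2}\log(K\Nstar(2)/\delta)$ at which $e_n < \Delta_k/2$, provided the active budget $\Nstar(|S|)$ has not been exhausted first. Consequently, if the algorithm terminates with $|S|=1$, then $\Ihat = 1$ and $\muone-\muIhat=0$; otherwise it halts because the budget is exhausted, in which case each surviving arm $k$ satisfies $\Delta_k \lesssim \sqrt{\sigma^2 \Nstar(|S|)^{-1}\log(K\Nstar(2)/\delta)}$, directly bounding $\muone-\muIhat$. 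For $\Enupi[\tau\mid G]$, order arms by gap: at the moment the surviving set has shrunk to the top $k$ arms (in $\Delta$-order), each remaining arm has been pulled at most $\min\{n_k^\dagger,\Nstar(k)\}$ times, since $\Nstar(|S|)$ is then the active per-arm budget. Summing the pull counts layer-by-layer yields
\begin{align*}
\Enupi[\tau\mid G] \;\leq\; \sum_{k=2}^{K} \min\bigl\{n_k^\dagger,\,\Nstar(k)\bigr\}.
\end{align*}

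On $G^c$, the crude bounds $\muone-\muIhat \leq B$ and $\tau \leq eK^{1/3}\log(K)\Nstar(2)$ apply, and the specific choice $\delta = c(B+eK^{1/3}\log(K)\Nstar(2))^{-1}$ makes the $G^c$-contribution at most the stated trailing $(K+1)c$. Combining the two contributions, the three claimed bounds follow by case analysis. When $H\Delta_2^{-1}\leq(\sigma^2c)^{-1}$, all gaps are large enough that $n_k^\dagger \leq \Nstar(k)$ for every $k$, so the sum collapses to $c\sigma^2 H\log(e\Delta_2/(\sigma^2 cH))$ inflated by a $\polylog(B,K,c^{-1})$ factor matching $\hardsr(H)$, and the $\muone-\muIhat$ contribution vanishes on $G$ because $\Ihat=1$ whenever the algorithm terminates early. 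When $H\Delta_2^{-1}>(\sigma^2c)^{-1}$, the budget $\Nstar(k)$ becomes the active cap, $c\sum_{k=2}^K\Nstar(k)=O((K\sigma^2c)^{1/3})$, and the trivial bound $\muone-\muIhat\leq\Delta_2 = 4\hardsr(H)$ yields the second bound. For the minimax statement, the worst case over $\Delta_2$ of the gap-dependent bounds is attained near $\Delta_2\asymp(K\sigma^2c)^{1/3}$, matching $\hardsrstar$ up to a $\log(K)$ factor.

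The main obstacle is the accounting of $\Enupi[\tau\mid G]$ under the adaptive budget $\Nstar(|S|)$: because $\Nstar(k)$ grows as the surviving set shrinks, one must carefully argue that at the instant $|S|=k$ no arm has already been pulled more than $\Nstar(k)$ times, so that the $\min\{n_k^\dagger,\Nstar(k)\}$ bookkeeping is sound, and that the ``early termination implies $\Ihat=1$'' claim survives across all epochs. The specific calibration $\Nstar(k)\asymp\sigma^{2/3}((k-1)c)^{-2/3}$ is chosen precisely so that $c\sum_{k=2}^K\Nstar(k)$ attains the minimax rate $(K\sigma^2c)^{1/3}$ rather than $K(c\sigma^2)^{1/3}$, while the $\log(K)$ prefactor in the first bound arises from the way the Hoeffding confidence widths aggregate across the varying sizes of $S$ over the course of the run.
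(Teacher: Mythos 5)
Your overall architecture (good event $G$ with $\Pnupi(G^c)\leq\delta$, elimination thresholds $n_k^\dagger$, the pull-count bound $\Enupi[\tau\1_G]\leq\sum_k\min\{n_k^\dagger,\Nstar(k)\}$, and the $\delta$-calibration absorbing the $G^c$ contribution into the $(K+1)c$ term) matches the paper's Lemmas~\ref{lem:alg-pulls}--\ref{lem:alg-prob}. However, your case analysis for the regime $H\Delta_2^{-1}\leq(\sigma^2c)^{-1}$ contains a genuine gap: you assert that in this regime ``all gaps are large enough that $n_k^\dagger\leq\Nstar(k)$ for every $k$,'' so that the algorithm always terminates with $\Ihat=1$ on $G$ and the simple-regret term vanishes. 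This is false. The condition $H\Delta_2^{-1}\leq(\sigma^2c)^{-1}$ only forces $\Delta_2\geq(\sigma^2c)^{1/3}$ (and analogous aggregate constraints), whereas resolving arm $k$ within its budget requires $\Delta_k\gtrsim((k-1)\sigma^2c)^{1/3}\sqrt{\log(K\Nstar(k)/\delta)}$ --- note both the extra $\sqrt{\log}$ factor and the fact that an aggregate bound on $H$ does not control individual gaps. There are instances squarely inside regime 1 where several near-optimal arms survive until their budgets are exhausted, and the recommended arm may then be suboptimal.

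Handling exactly these unresolved arms is the crux of the paper's proof. It introduces $M$, the number of arms whose gaps fall below the resolution threshold, and then needs two ingredients you are missing: (i) Lemma~\ref{lem:alg-sr}, which bounds the residual regret among the surviving arms by $\sqrt{4\sigma^2/(\sqrt{e}\Nstar(M))}\asymp((M-1)\sigma^2c)^{1/3}$ via a geometric-series argument over the events $\{\muhat_k>\muhat_\ell\}$ followed by maximizing $\Delta_2\exp(-\Nstar(M)\Delta_2^2/4\sigma^2)$ over $\Delta_2$ (your deterministic fallback $\Delta_k\lesssim\sqrt{\sigma^2\log(\cdot)/\Nstar(|S|)}$ is weaker by a $\sqrt{\log}$ factor and would not yield the stated constants); and (ii) the matching lower-bound inequality~\eqref{eq:pf-sr-lb}, which uses the fact that unresolved arms satisfy $\Delta_k^{-2}\gtrsim((k-1)\sigma^2c)^{-2/3}/\log(\cdot)$ to show $\hardsr(H)\gtrsim(M-1)^{1/3}(\sigma^2c)^{1/3}/\log(\cdot)$, so that both the residual regret and the cost $c\sum_{k\leq M}\Nstar(k)$ of the unresolved layer can be charged to $\hardsr(H)$ up to the $\log(K)\log(\cdot)$ prefactor. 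Without this charging argument the first claimed bound does not follow; your remaining two cases are essentially sound in outline.
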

In Theorem~\ref{thm:k-arm-racing-sr}, we see similar performance of \algname{} compared to the lower bound as in the two-arm case: we are able to achieve performance within polylogarithmic factors when the complexity is relatively low, and we incur an additive logarithmic and polynomial factor in $K$ and $c$ when the complexity is prohibitively high. Observing our comparison to the minimax bound, we see that our policy is still minimax-optimal, being only a logarithmic factor in $K$ beyond the lower bound.

\section{Numerical Experiments}
\label{sec:experiments}
\parahead{Simulation Studies}
We now empirically compare our method against traditional fixed budget and fixed
confidence methods to demonstrate the ability of \algname{} to perform well across all
problem instances. We study the performance across a range of suboptimality gaps $\Delta$
for Gaussian and Bernoulli rewards in the two-arm setting using the cost $c=10^{-4}$.
In the Gaussian
setting, the arms have variance $\sigma^2=1$ with means $\pm\nicefrac{\Delta}{2}$, for
$\Delta\in[0.05,2]$; for Bernoulli arms, the means are $0.5\pm\nicefrac{\Delta}{2}$, for
$\Delta\in[0.01,0.95]$. Results are averaged across $10^5$ runs each with different random
seeds. We compare to Sequential Halving for fixed budget and elimination procedures using the optimized stopping rules of~\citep{Kaufmann2016ComplexityBAI} for fixed confidence.
We use budgets $T=10$ and $T=500$ and confidences of
$\delta=0.1$ and $\delta=0.01$ for comparison against relatively low and high confidence/budget choices.
We also include the oracular policies of \S~\ref{sec:2-arm} to provide a baseline of good performance.
As we can see in Fig~\ref{fig:exp-two-arm}, the fixed budget and confidence algorithms necessarily have some region of gaps where they perform sub-optimally: for the small budget, it is moderate $\Delta$ values, for the large budget, it is large $\Delta$ values, and for both confidences, it is small $\Delta$ values. On the contrary, our proposed algorithm exhibits uniformly good performance across all $\Delta$ values, which is preferable when $\Delta$ is unknown. In Appendix~\ref{app:experiments}, we provide further simulations in the $K$-arm setting.
\insertMainSimulation

\parahead{Drug Discovery Experiment}
To demonstrate the efficacy of our approach on a problem in practice, we present the results of a real data experiment on a drug discovery dataset. For this experiment, we take the results from Table 2 of~\citet{GENOVESE2013} on the efficacy of the drug secukinumab in patients with rheumatoid arthritis. They report outcomes for 237 patients assigned to one of 5 treatment groups (arms) and report the drug efficacy according to the American College of Rheumatology criteria ACR20, ACR50, and ACR70. We consider this data under 2 settings: 1.) a binary efficacy outcome, being whether a patient achieved at least ACR20 (1) or not (0), as this was the primary goal of~\citep{GENOVESE2013}; and 2.) a ``leveled'' efficacy outcome, where no improvement results in an outcome of 0, and ACR20, ACR50, and ACR70 are outcomes of 0.2, 0.5, and 0.7, respectively, approximating a continuous efficacy metric. We treat the proportions of patients reported in each category in Table 2 of~\citep{GENOVESE2013} as population proportions, and evaluate \algname, Sequential Halving, and an elimination procedure with confidence bounds $\sqrt{4\sigma^2n^{-1}\log(Kn\delta^{-1})}\,$ on $10^4$ runs in each setting, each with different random seeds. For the binary outcome setting, the means were $\mu=(0.537,0.469,0.465,0.360,0.340)$, and for the leveled outcome setting, the means were $\mu=(0.230,0.227,0.200,0.196,0.102)$, each presented in decreasing order (order was randomized during data generation). Because $K$ and the means are fixed in this setting, we choose to evaluate our performance across a range of values for $c\in[10^{-3},10^{-5}]$. In Fig~\ref{fig:exp-drug}, we can see that no other method uniformly outperforms \algname{} across all choices of $c$, again highlighting the ability of our method to adapt to the problem setting at hand.
\insertDrugDiscovery
\section{Conclusion}
\label{sec:conclusion}
We propose a novel framework for studying best arm identification. In many practical settings, the traditional fixed budget and confidence regimes do not nicely align with the objectives of practitioners. To fill this gap, our setting explicitly balances sampling costs and performance on the fly by minimizing a risk functional. We prove hardness results for this problem and provide an algorithm, \algname, which achieves near-optimal performance on nearly all problem instances.

\parahead{Future directions}
We believe our lower bound analysis for simple regret in the $K$-arm setting can be improved.
 Though our bounds are tight when suboptimality gaps are similar,
we believe the bounds can be tighter when they are different.
We also conjecture that the additive gap we observe in the simple regret setting is
unavoidable for algorithms which achieve the minimax risk.

\ack
This work was supported in part by NSF Awards IIS-2441796 and DMS-2023239.

\bibliography{bib_penalty_bai}

\appendix
\section{Results from Prior Works}
\label{app:prior-works}
\begin{lemma}[Lemma 18 of \cite{Kaufmann2016ComplexityBAI}]
	\label{lem:kauffman-com}
	Let $\nu$ and $\nu^\prime$ be two bandit models, and let $\tau$ be any stopping time with respect to $\cF_t\,,$ where $\cF_t=\sigma(A_1,X_1,\ldots,A_t,X_t)$ is the sigma-algebra generated by all bandit interactions. For every event $\cE\in\cF_\tau$ (i.e., $\cE$ such that $\cE\cap\{\tau=t\}\in\cF_t$),
	\begin{align*}
		\Prob_{\nu^\prime}(\cE)=\E_\nu[\1_{\cE}\exp(-L_\tau)]\,,
	\end{align*}
	where,
	\begin{align*}
		L_\tau = \sum_{a=1}^K\sum_{s=1}^{N_a(\tau)}\log\left(\frac{f_a(Y_{a,s})}{f_a^\prime(Y_{a,s})}\right)\,,
	\end{align*}
	where $Y_{a,s}$ is the $s$-th i.i.d.\ observation of the $a$-th arm and $f_a$ and $f_a^\prime$ are the distribution functions of the $a$-th arm under $\nu$ and $\nu^\prime\,,$ respectively.
\end{lemma}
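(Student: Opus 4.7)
The plan is to derive the identity as a standard change-of-measure argument built from a likelihood ratio that decomposes pathwise. First I would fix an arbitrary deterministic horizon $t\in\mathbb{N}$ and write out the joint law of the interaction trajectory $(A_1,X_1,\dots,A_t,X_t)$ under each bandit model. The key structural observation is that the policy $\pi$ picks $A_s$ using only the history $\mathcal{F}_{s-1}=\sigma(A_1,X_1,\dots,A_{s-1},X_{s-1})$ and possibly its own internal randomness; crucially, the conditional law of $A_s$ given $\mathcal{F}_{s-1}$ is \emph{the same function of the history} under both $\nu$ and $\nu'$, because the policy does not know the underlying model. Hence, letting $\kappa_s(\cdot\mid\mathcal{F}_{s-1})$ denote that common conditional policy kernel, the joint density (with respect to a common dominating measure such as counting measure on $[K]$ tensored with the reference measure of the $f_a,f_a'$) factorizes as
\begin{align*}
p_\nu^{(t)} = \prod_{s=1}^{t}\kappa_s(A_s\mid\mathcal{F}_{s-1})\,f_{A_s}(X_s), \qquad
p_{\nu'}^{(t)} = \prod_{s=1}^{t}\kappa_s(A_s\mid\mathcal{F}_{s-1})\,f'_{A_s}(X_s).
\end{align*}

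Second, I would take the ratio $p_{\nu'}^{(t)}/p_\nu^{(t)}$ to cancel the policy kernels and obtain $\prod_{s=1}^{t}f'_{A_s}(X_s)/f_{A_s}(X_s)=\exp(-L_t)$, where $L_t=\sum_{a}\sum_{s\leq N_a(t)}\log(f_a(Y_{a,s})/f'_a(Y_{a,s}))$ is exactly the statistic in the lemma (grouping the $t$ one-step log-ratios by arm). This establishes that, on $\mathcal{F}_t$, $\exp(-L_t)$ is the Radon--Nikodym derivative $d\Prob_{\nu'}/d\Prob_\nu$; equivalently, for every $\mathcal{F}_t$-measurable event $B$, $\Prob_{\nu'}(B)=\E_\nu[\mathbf{1}_B\exp(-L_t)]$. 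A careful induction on $t$ would make the factorization rigorous, using that $\kappa_s$ depends only on $\mathcal{F}_{s-1}$ so that iterated conditional expectations telescope.

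Third, I would lift this deterministic-time identity to the stopping time $\tau$ via the standard decomposition $\mathcal{E}=\bigsqcup_{t\geq 0}(\mathcal{E}\cap\{\tau=t\})$, which is a disjoint union since $\tau$ takes values in $\mathbb{N}$. By definition of $\mathcal{F}_\tau$, each piece $\mathcal{E}\cap\{\tau=t\}$ lies in $\mathcal{F}_t$, so the fixed-horizon identity gives
\begin{align*}
\Prob_{\nu'}(\mathcal{E})=\sum_{t\geq 0}\Prob_{\nu'}\!\left(\mathcal{E}\cap\{\tau=t\}\right)=\sum_{t\geq 0}\E_\nu\!\left[\mathbf{1}_{\mathcal{E}\cap\{\tau=t\}}\exp(-L_t)\right]=\E_\nu\!\left[\mathbf{1}_\mathcal{E}\exp(-L_\tau)\right],
\end{align*}
where the last equality uses that $L_\tau=L_t$ on $\{\tau=t\}$ and Tonelli/monotone convergence to swap sum and expectation (the integrands are nonnegative). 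If $\tau$ may be infinite we restrict to $\{\tau<\infty\}$; on $\{\tau=\infty\}$ the contribution is zero on both sides under the usual convention.

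The main obstacle, and where I would spend most of the care, is the first step: justifying the density factorization and the pathwise cancellation of the policy kernel in a measure-theoretically clean way, particularly when $\kappa_s$ may involve external randomization and when the reward distributions are supported on general Polish spaces. Once the one-step RN derivative $f'_{A_s}(X_s)/f_{A_s}(X_s)$ is identified on $\mathcal{F}_s$ given $\mathcal{F}_{s-1}$, the rest is a telescoping product and the optional-sampling-style decomposition over $\{\tau=t\}$, both of which are routine.
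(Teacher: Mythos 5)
This lemma is imported verbatim from \cite{BAILB_Kaufmann2016} and the paper gives no proof of it; your argument is the standard change-of-measure derivation and is essentially the same as the one in the original reference (factorize the trajectory density, cancel the common policy kernel to identify $\exp(-L_t)$ as the Radon--Nikodym derivative on $\cF_t$, then decompose $\cE$ over $\{\tau=t\}$ and apply Tonelli). The only point worth recording is that the identity as written presupposes $\nu'_a\ll\nu_a$ for every arm so that $f'_a/f_a$ is a genuine density ratio; this is implicit in your factorization and in the lemma's statement, and is harmless here since the paper only applies it to Gaussian models with a common dominating measure.
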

\begin{lemma}[Lemma 4 of \cite{Bubeck2013BoundedRegret}]
	\label{lem:testing-lb}
	Let $\rho_0,\rho_1$ be two probability distributions supported on some set $\cX\,,$ with $\rho_1\ll\rho_0\,.$ Then, for any measurable function $\phi:\cX\to\{0,1\}\,,$ one has
	\begin{align*}
		\Prob_{X\sim\rho_0}(\phi(X)=1) + \Prob_{X\sim\rho_1}(\phi(X)=0)\geq\frac{1}{2}\exp(-\KL(\rho_0\miid\rho_1))\,.
	\end{align*}
\end{lemma}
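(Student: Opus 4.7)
The plan is to prove this Bretagnolle--Huber-style two-hypothesis testing lower bound in three stages: reduce to a Le Cam-type overlap bound, lower bound the overlap by the Hellinger affinity, and finally bound the affinity below using the KL divergence via Jensen's inequality.

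\textbf{Step 1 (Le Cam reduction).} Let $\mu$ be a common dominating measure (for instance $\mu = \rho_0 + \rho_1$), and write $f = d\rho_0/d\mu$, $g = d\rho_1/d\mu$. Setting $A = \{\phi = 1\}$ and writing out the two error probabilities as integrals against $\mu$ gives
\begin{equation*}
	\Prob_{\rho_0}(\phi=1) + \Prob_{\rho_1}(\phi=0) \;=\; \int_{A} f \, d\mu + \int_{A^c} g \, d\mu \;\geq\; \int \min(f,g) \, d\mu,
\end{equation*}
since on each point either $f \geq \min(f,g)$ or $g \geq \min(f,g)$ contributes. This reduces the problem to lower bounding the overlap $m \defeq \int \min(f,g)\,d\mu$ uniformly in the choice of test.

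\textbf{Step 2 (Hellinger affinity).} I would use the pointwise identity $\min(f,g)\max(f,g) = fg$ together with Cauchy--Schwarz applied to $\sqrt{\min(f,g)}\sqrt{\max(f,g)}$ to obtain
\begin{equation*}
	\left(\int \sqrt{fg}\,d\mu\right)^2 \;\leq\; \left(\int \min(f,g)\,d\mu\right)\left(\int \max(f,g)\,d\mu\right) \;=\; m\,(2-m),
\end{equation*}
where the last step uses $\min + \max = f+g$ and $\int(f+g)\,d\mu = 2$. Letting $B = \int \sqrt{fg}\,d\mu$ denote the Bhattacharyya (Hellinger) affinity, solving the quadratic inequality $m(2-m) \geq B^2$ in $m \in [0,1]$ gives $m \geq 1 - \sqrt{1 - B^2}$.

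\textbf{Step 3 (Jensen for KL) and finish.} Rewriting $B = \E_{\rho_0}[\sqrt{g/f}]$ on the support of $\rho_0$ (which is valid because $\rho_1 \ll \rho_0$), Jensen's inequality applied to the concave function $\log$, or equivalently to $\exp(\frac{1}{2}\log(\cdot))$, yields
\begin{equation*}
	B \;=\; \E_{\rho_0}\!\left[\exp\!\left(\tfrac{1}{2}\log(g/f)\right)\right] \;\geq\; \exp\!\left(\tfrac{1}{2}\,\E_{\rho_0}[\log(g/f)]\right) \;=\; \exp\!\left(-\tfrac{1}{2}\KL(\rho_0\,\|\,\rho_1)\right).
\end{equation*}
Combining this with Step~2 gives $m \geq 1 - \sqrt{1 - \exp(-\KL(\rho_0\,\|\,\rho_1))}$. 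To finish, I would use the elementary inequality $1 - \sqrt{1-x} \geq x/2$ for $x \in [0,1]$ (verified by squaring $1 - x/2 \geq \sqrt{1-x}$ to get $x^2/4 \geq 0$) to conclude $m \geq \tfrac{1}{2}\exp(-\KL(\rho_0\,\|\,\rho_1))$, which, chained with Step~1, is exactly the claim.

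\textbf{Main obstacle.} The conceptual work is entirely in Step~2: the identity $\min \cdot \max = fg$ plus Cauchy--Schwarz to turn the overlap into the Hellinger affinity is the non-obvious step. Steps~1 and~3 are essentially bookkeeping (handling the dominating measure and applying Jensen to the log-likelihood ratio, which is where the absolute-continuity hypothesis $\rho_1 \ll \rho_0$ is used so that $\log(g/f)$ is $\rho_0$-a.s.\ well-defined and the KL is finite--or the bound is vacuous). A minor care point is that the inequality $1 - \sqrt{1-x} \geq x/2$ must be used rather than the looser $1 - \sqrt{1-x} \geq x$, so that the constant $\tfrac{1}{2}$ on the right-hand side of the claim is correct.
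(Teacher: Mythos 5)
Your proof is correct. Note that the paper does not prove this statement at all: it is imported verbatim as Lemma 4 of the cited work and placed in the ``Results from Prior Works'' appendix, so there is no in-paper argument to compare against. What you have written is the standard Bretagnolle--Huber argument, and it is essentially the proof given in the cited source: reduce the sum of errors to the overlap $\int\min(f,g)\,d\mu$, lower bound the overlap by the squared Hellinger affinity via $\min\cdot\max=fg$ and Cauchy--Schwarz, and bound the affinity by $\exp(-\tfrac12\KL)$ via Jensen. Your Step 2 is in fact marginally sharper than the usual presentation (you keep $\int\max=2-m$ and solve the quadratic to get $m\geq 1-\sqrt{1-B^2}$, whereas the cited proof just uses $\int\max\leq 2$ to get $m\geq B^2/2$ directly), but both yield the same constant $\tfrac12$ after your elementary inequality $1-\sqrt{1-x}\geq x/2$. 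One small remark: the hypothesis $\rho_1\ll\rho_0$ is not actually what makes Step 3 go through --- the identity $\int\sqrt{fg}\,d\mu=\E_{\rho_0}[\sqrt{g/f}]$ holds regardless since $\sqrt{fg}$ vanishes where $f=0$, and if $\KL(\rho_0\,\|\,\rho_1)=+\infty$ the bound is vacuous anyway; the absolute-continuity condition is carried in the lemma statement because of how it is invoked inside the paper's Lemma~\ref{lem:our-com}, not because the testing inequality itself needs it.
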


\section{Proof of Lower Bounds}
\label{app:low-bounds}
We begin with a Lemma that will be central to all of our lower bounds, which builds upon the work of~\cite{Kaufmann2016ComplexityBAI} to achieve a bound with the form of their Lemma 15 which admits a random stopping time.
\begin{lemma}
	\label{lem:our-com}
	Let $\nu$ and $\nu^\prime$ be two $K$-arm bandit models. Let $\pi$ be any policy with associated stopping time $\tau$ such that $\Prob(\tau<\infty)=1$ which outputs an arm $\Ihat\in[K]$ at time $\tau\,.$ Then for any $a\in[K]\,,$
	\begin{align*}
		\Prob_{\nu,\pi}(\Ihat\neq a)+\Prob_{\nu^\prime,\pi}(\Ihat=a) \geq
		\frac{1}{2}\exp\left(-\sum_{a=1}^K\E_{\nu,\pi}[N_a(\tau)]\KL(\nu_a\miid\nu_a^\prime)\right)\,,
	\end{align*}
	where $\Pnupi$ is the probability with respect to all randomness incurred by $\pi$ interacting with the bandit model $\nu$ and $N_a(t)=\sum_{s=1}^t\1_{\{a\}}(A_s)$ is the number of times arm $a$ has been pulled up to round $t\,.$
\end{lemma}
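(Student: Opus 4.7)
The plan is to combine the Bretagnolle-Huber-type testing inequality (Lemma~\ref{lem:testing-lb}) with the change-of-measure identity in Lemma~\ref{lem:kauffman-com}, bridging the two via the data processing inequality and a Wald-type identity to reduce the KL divergence on $\cF_\tau$ to a weighted sum of per-arm KL divergences.

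First, since $\Ihat$ is $\cF_\tau$-measurable, I would apply Lemma~\ref{lem:testing-lb} with $\rho_0$ and $\rho_1$ being the laws of $\Ihat$ under $(\nu,\pi)$ and $(\nu',\pi)$ respectively, together with the test $\phi(i) = \1\{i \neq a\}$. This immediately yields
\[
\Prob_{\nu,\pi}(\Ihat \neq a) + \Prob_{\nu',\pi}(\Ihat = a) \geq \tfrac{1}{2}\exp\!\big(-\KL(\rho_0 \miid \rho_1)\big).
\]
The data processing inequality, applied to the $\cF_\tau$-measurable map $\omega \mapsto \Ihat(\omega)$, gives $\KL(\rho_0 \miid \rho_1) \leq \KL(\Prob_{\nu,\pi}|_{\cF_\tau} \miid \Prob_{\nu',\pi}|_{\cF_\tau})$, so it is enough to show that this latter divergence equals $\sum_a \E_{\nu,\pi}[N_a(\tau)] \KL(\nu_a \miid \nu_a')$.

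For the KL identity, I would invoke Lemma~\ref{lem:kauffman-com} to recognize $\exp(-L_\tau)$ as the Radon-Nikodym derivative $d\Prob_{\nu',\pi}/d\Prob_{\nu,\pi}$ restricted to $\cF_\tau$, yielding $\KL(\Prob_{\nu,\pi}|_{\cF_\tau} \miid \Prob_{\nu',\pi}|_{\cF_\tau}) = \E_{\nu,\pi}[L_\tau]$. I would then rewrite $L_\tau = \sum_{t=1}^{\tau} \log\big(f_{A_t}(X_t)/f'_{A_t}(X_t)\big)$. Because $\tau$ is a stopping time with respect to $\cF_t$, the event $\{t \leq \tau\}$ is $\cF_{t-1}$-measurable, so conditioning on $\cF_{t-1}$ and using $\E_{\nu_a}[\log(f_a(X)/f'_a(X))] = \KL(\nu_a \miid \nu_a')$ gives
\[
\E_{\nu,\pi}[L_\tau] \;=\; \sum_{t=1}^{\infty} \sum_{a=1}^K \Prob_{\nu,\pi}(A_t=a,\,t\leq\tau)\,\KL(\nu_a \miid \nu_a') \;=\; \sum_{a=1}^K \E_{\nu,\pi}[N_a(\tau)]\,\KL(\nu_a \miid \nu_a'),
\]
which, combined with the first display, establishes the claim.

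The main technical subtlety is justifying the Fubini exchange in the Wald-type computation for the \emph{random} stopping time $\tau$; this needs either integrability of $L_\tau$ or a monotone/dominated convergence argument via $\tau \wedge n$. If any $\KL(\nu_a \miid \nu_a') = \infty$ with $\E_{\nu,\pi}[N_a(\tau)] > 0$, the right-hand side of the lemma is zero and the bound is vacuous, so one may assume each per-arm KL is finite. Under this assumption, together with $\Prob(\tau < \infty) = 1$ and the sub-Gaussian tails of the rewards, a standard optional-stopping truncation argument yields the needed integrability and lets one pass to the limit $n \to \infty$ cleanly, completing the proof.
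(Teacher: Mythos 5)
Your proposal is correct and follows essentially the same route as the paper's proof: both apply the testing lower bound (Lemma~\ref{lem:testing-lb}) to the law of $\Ihat$, bound the resulting KL term by $\E_{\nu,\pi}[L_\tau]$ (the paper's conditional-Jensen argument over the events $\{\Ihat=k\}$ is exactly the data-processing inequality you invoke), and evaluate $\E_{\nu,\pi}[L_\tau]$ via Wald's identity. The only differences are matters of emphasis: the paper is more explicit about verifying the absolute continuity hypothesis of Lemma~\ref{lem:testing-lb} (using $\Prob(\tau<\infty)=1$ and $\exp(-L_\tau)>0$), while you are more explicit about the Fubini/integrability details in the Wald computation.
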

\begin{proof}
	Fix a policy $\pi\,.$ For ease of notation, we suppress the $\pi$ in the probabilities and expectations throughout the proof. Now, we begin by using Lemma~\ref{lem:kauffman-com} to prove that the distributions of $\Ihat$ under each bandit model are absolutely continuous with one another. Consider an event $\cE\in\cF_\tau$ such that $\Prob_{\nu^\prime}(\cE)=0\,.$ Then, because $e^{-x}>0$ for all $x\in\R\,,$ we must have $\Prob_\nu(\1_{\cE}\exp(-L_\tau)=0)=1$ by Lemma~\ref{lem:kauffman-com}. Further, by our supposition that $\Prob_\nu(\tau<\infty)=1\,,$ we must have $\Prob_\nu(\exp(-L_\tau)>0)=1\,,$ and so we have $\Prob_{\nu^\prime}(\cE)=0\implies\Prob_\nu(\cE)=0\,,$ and we achieve the reverse implication by symmetry. Now, consider the fact that we necessarily have $\{\Ihat=a\}\in\cF_\tau$ for all $a\in[K]$ by construction, and so if we denote by $\cL(\Ihat)$ and $\cL^\prime(\Ihat)$ the distributions of $\Ihat$ under $\nu$ and $\nu^\prime\,,$ respectively, then clearly $\cL^\prime(\Ihat)\ll\cL(\Ihat)\,.$ Thus, we can apply Lemma~\ref{lem:testing-lb} to show,
	\begin{align*}
		\Prob_\nu(\Ihat\neq a) + \Prob_{\nu^\prime}(\Ihat=a) \geq \frac{1}{2}\exp\left(-\KL(\cL(\Ihat)\miid\cL^\prime(\Ihat))\right)
	\end{align*}
	To conclude the proof, we need only upper bound $\KL(\cL(\Ihat)\miid\cL^\prime(\Ihat))$ by $\E_\nu[L_\tau]\,,$ which we know is equal to $\sum_{a=1}^K\E_\nu[N_a(\tau)]\KL(\nu_a\miid\nu^\prime_a)\,$ by an application of Wald's Lemma~\cite{WaldLemma}. By applying the conditional Jensen inequality to the statement of Lemma~\ref{lem:kauffman-com} and rearranging the terms, we know for any $\cE\in\cF_\tau\,,$ we have $\E_\nu[L_\tau\mid\cE]\geq\log\frac{\Prob_\nu(\cE)}{\Prob_{\nu^\prime}(\cE)}\,.$ Thus, letting $\cI=\{k\in[K]:\Prob_\nu(\Ihat=k)\neq0\}\,,$ we can write,
	\begin{align*}
		\E_\nu[L_\tau] 
			& = \sum_{k\in\cI}\E_\nu[L_\tau\mid\Ihat=k]\Prob_\nu(\Ihat=k) \\
			& \geq \sum_{k\in\cI}\log\left(\frac{\Prob_\nu(\Ihat=k)}{\Prob_{\nu^\prime=k}(\Ihat=k)}\right)\Prob_\nu(\Ihat=k) \\
			& = \KL(\cL(\Ihat)\miid\cL^\prime(\Ihat))\,,
	\end{align*}
	which concludes the proof.
\end{proof}
We now employ Lemma~\ref{lem:our-com} to prove Theorems~\ref{thm:k-arm-lb-pm} and~\ref{thm:k-arm-lb-sr} and their associated corollaries.
\begin{proof}[Proof of Theorem~\ref{thm:k-arm-lb-pm}]
	Fix $H>0\,,\sigma^2>0\,,c>0\,$, and a policy $\pi\,.$ Let $\nu$ be a Gaussian $K$-arm bandit model with means $\mu_1>\mu_2\geq\cdots\geq\mu_K$ and common variance $\sigma^2$ satisfying $\cH(\nu)=H\,.$ Then, it is easy to show by contradiction that there must exist some arm $a\in\{2,\ldots,K\}$ such that $\E_{\pi,\nu}[N_a(\tau)]\leq\frac{\E_{\pi,\nu}[\tau]}{\Delta_a^2H(\nu)}\,.$ Let $\nu^\prime$ be an alternative model with Gaussian arms having the same common variance $\sigma^2\,,$ where $\mu_k(\nu)=\mu_k(\nu^\prime)$ for all $k\neq a$ and $\mu_a(\nu^\prime)=\mu_a(\nu)+2\Delta_a$ so that arm $a$ is now the optimal arm. Clearly $\cH(\nu^\prime)\leq \cH(\nu)\,,$ and so we have $\nu,\nu^\prime\in\cM_H\,.$ Then, we can apply Lemma~\ref{lem:our-com} to show,
	\begin{align*}\label{eq:proof-mi-lb}
		\sup_{\nu\in\cM_H}\riskone(\pi,\nu)
		& \geq \max\left\{\riskone(\pi,\nu),\riskone(\pi,\nu^\prime)\right\} \\
		& \geq \frac{1}{2}\left(\riskone(\pi,\nu)+\riskone(\pi,\nu^\prime)\right) \\
		& = \frac{1}{2}\left(\Prob_{\nu,\pi}(\Ihat\neq1) + \Prob_{\nu^\prime,\pi}(\Ihat\neq a)\right) + 
		\frac{c}{2}\left(\E_{\nu,\pi}[\tau]+\E_{\nu^\prime,\pi}[\tau]\right) \\
		& \geq \frac{1}{2}\left(\Prob_{\nu,\pi}(\Ihat\neq1) + \Prob_{\nu^\prime,\pi}(\Ihat=1)\right) + 
		\frac{c}{2}\left(\E_{\nu,\pi}[\tau]+\E_{\nu^\prime,\pi}[\tau]\right) \\
		& \geq \frac{1}{4}\exp\left(-\sum_{k=1}^K\E_{\nu,\pi}[N_k(\tau)]\KL(\nu_k\miid\nu_k^\prime)\right) +
		\frac{c}{2}\left(\E_{\nu,\pi}[\tau]+\E_{\nu^\prime,\pi}[\tau]\right) \\
		\numberthis
		& \geq \frac{1}{4}\exp\left(-\frac{2\E_{\nu,\pi}[\tau]}{\sigma^2H}\right) +
		\frac{c}{2}\left(\E_{\nu,\pi}[\tau]+\E_{\nu^\prime,\pi}[\tau]\right)
	\end{align*}
	Here, we recognize the fact that \eqref{eq:proof-mi-lb} is convex in $\E_{\nu,\pi}[\tau]\,,$ and thus we provide a $\pi$-free lower bound by minimizing over $\Enupi[\tau],\E_{\nu^\prime,\pi}[\tau]\geq0\,,$ which is achieved by $\Enupi[\tau]=\max\{0,\frac{\sigma^2H}{2}\log(\nicefrac{1}{\sigma^2cH})\}$ and $\E_{\nu^\prime,\pi}[\tau]=0\,.$ Plugging in these values completes the proof.
\end{proof}

\begin{proof}[Proof of Theorem~\ref{thm:k-arm-lb-sr}]
	This proof proceeds nearly identically to the proof above. Again fix $H>0,\sigma^2>0,c>0$ and any policy $\pi\,$. Then, let $\nu$ and $\nu^\prime$ be the same Gaussian $K$-arm bandit models as in the previous proof, with optimal arms 1 and $a\in\{2,\ldots,K\}$, respectively. For notational clarity, let the suboptimality gaps $\Delta_2,\ldots,\Delta_K$ be with respect to $\nu$ and let $\Delta_1\equiv0$, so that $\mu_a(\nu^\prime)-\mu_k(\nu^\prime)=\Delta_a+\Delta_k$ for $k\neq a$. Then, again using Lemma 3, we have,
	\begin{align*}\label{eq:proof-sr-lb}
		\sup_{\nu\in\cM_H}\risktwo(\pi,\nu) 
			& \geq \max\left\{\risktwo(\pi,\nu),\risktwo(\pi,\nu^\prime)\right\} \\
			& \geq \frac{1}{2}\left(\risktwo(\pi,\nu),\risktwo(\pi,\nu^\prime)\right) \\
			& = \frac{1}{2}\left(\Enupi[\mu_1-\muIhat] + \E_{\nu^\prime,\pi}[\mu_a-\muIhat]\right) +
				\frac{c}{2}\left(\E_{\nu,\pi}[\tau]+\E_{\nu^\prime,\pi}[\tau]\right) \\
			& = \frac{1}{2}\left(\sum_{i=2}^K\Delta_i\Pnupi(\Ihat=i) + \sum_{j\neq a}(\Delta_a+\Delta_j)\Prob_{\nu^\prime,\pi}(\Ihat=j)\right)\\
				& \qquad +	\frac{c}{2}\left(\E_{\nu,\pi}[\tau]+\E_{\nu^\prime,\pi}[\tau]\right) \\
			& \geq \frac{\Delta_2}{2}\left(\Pnupi(\Ihat\neq1) + \Prob_{\nu^\prime,\pi}(\Ihat\neq a)\right) + 
				\frac{c}{2}\left(\E_{\nu,\pi}[\tau]+\E_{\nu^\prime,\pi}[\tau]\right) \\
			& \geq \frac{\Delta_2}{2}\left(\Pnupi(\Ihat\neq1) + \Prob_{\nu^\prime,\pi}(\Ihat=1)\right) +
				\frac{c}{2}\left(\E_{\nu,\pi}[\tau]+\E_{\nu^\prime,\pi}[\tau]\right) \\
			& \geq \frac{\Delta_2}{4}\exp\left(-\sum_{k=1}^K\E_{\nu,\pi}[N_k(\tau)]\KL(\nu_k\miid\nu_k^\prime)\right) +
				\frac{c}{2}\left(\E_{\nu,\pi}[\tau]+\E_{\nu^\prime,\pi}[\tau]\right) \\
			\numberthis
			& \geq \frac{\Delta_2}{4}\exp\left(-\frac{2\E_{\nu,\pi}[\tau]}{\sigma^2H}\right) +
				\frac{c}{2}\left(\E_{\nu,\pi}[\tau]+\E_{\nu^\prime,\pi}[\tau]\right)
	\end{align*}
	Once again,~\eqref{eq:proof-sr-lb} is convex in $\Enupi[\tau]$, so we can further lower bound~\eqref{eq:proof-sr-lb} by setting $\Enupi[\tau]=\max\{0,\frac{\sigma^2H}{2}\log(\Delta_2(\sigma^2cH)^{-1})\}$ and $\E_{\nu^\prime,\pi}[\tau]=0$, completing the proof of~\eqref{eq:k-arm-lb-sr}.
	
	To prove~\eqref{eq:k-arm-lb-mm}, we can consider a specific set of means satisfying this problem instance. Consider the instance where $\mu_1=\Delta$ and $\mu_2=\cdots=\mu_K=0$, so that $\Delta_2=\cdots=\Delta_K=\Delta$ for some $\Delta>0$ that we will specify later. With these means, we have $H=(K-1)\Delta^{-2}$. Then, using~\eqref{eq:k-arm-lb-sr}, we can write,
	\begin{align*}
		\sup_{\nu\in\cM_{(K-1)\Delta^{-2}}}\risktwo(\pi,\nu)
		& \geq\hardsr((K-1)\Delta^{-2}) \\
		& = \begin{cases}
			\frac{(K-1)\sigma^2c}{4\Delta^2}\log\left(\frac{e\Delta^3}{(K-1)\sigma^2c}\right)\,, 
				& \text{if } \Delta\geq((K-1)\sigma^2c)^{\nicefrac{1}{3}} \\
			\nicefrac{\Delta}{4}\,, & \text{if }\Delta<((K-1)\sigma^2c)^{\nicefrac{1}{3}}
		\end{cases}
	\end{align*}
	We can then find the $\Delta$ which maximizes this function, which occurs at $\Delta^\star=(\sqrt{e}(K-1)\sigma^2c)^{\nicefrac{1}{3}}$, which gives,
	\begin{align*}
		\sup_{\nu\in\cM}\risktwo(\pi,\nu)\geq\sup_{\nu\in\cM_{(K-1)(\Delta^\star)^{-2}}}\risktwo(\pi,\nu) \geq\hardsr((K-1)(\Delta^\star)^{-2}) = 
			\frac{3}{8}\left(\frac{(K-1)\sigma^2c}{e}\right)^{\nicefrac{1}{3}}
	\end{align*}
\end{proof}

\begin{proof}[Proof of Corollaries~\ref{cor:2-arm-lb-pm} and~\ref{cor:2-arm-lb-sr}]
	Recall that, when $K=2,\, H=\Delta^{-2}$ and $\Delta_2=\Delta\,$. The conclusions then follow directly from Theorems~\ref{thm:k-arm-lb-pm} and~\ref{thm:k-arm-lb-sr}.
\end{proof}

\section{Oracular Policy Proofs}\label{app:oracle-pf}
\begin{proof}[Proof of Proposition~\ref{prop:2-arm-oracle-pm}]
	Fix a gap $\Delta>0\,.$ Because samples from each arm are i.i.d.\ $\sigma$-sub-Gaussian, by equally sampling the arms, we have i.i.d. $\sqrt{2}\sigma$-sub-Gaussian observations of the gap $\Delta\,$. By a Hoeffding confidence bound, if $\pi_T$ pulls each arm a fixed number of times $\ceil{T}$ and outputs the empirically largest arm, then we have
	\begin{align*}
		\riskone(\pi_T,\nu) = \Prob(\hat{\Delta}_{\ceil{T}}<0) + 2c\ceil{T} \leq \exp\left(-\frac{T\Delta^2}{4\sigma^2}\right) + 2c(T+1)
	\end{align*}
	Plugging in the proposed number of pulls, we get $\riskone(\piDelta,\nu)\leq \frac{8\sigma^2c}{\Delta^2}\log\left(\frac{e\Delta^2}{8\sigma^2c}\right)+2c$ when $\Delta\geq\sqrt{8\sigma^2c}\,,$ and exactly $\riskone(\piDelta,\nu)=\nicefrac{1}{2}$ otherwise, as then the policy guesses the optimal arm uniformly at random. Multiplying \eqref{eq:2-arm-lb-pm} by 32 and adding $2c$ then clearly upper bounds this quantity.
\end{proof}
\begin{proof}[Proof of Proposition~\ref{prop:2-arm-oracle-sr}]
	This proof proceeds nearly identically to the previous. Again fix a gap $\Delta>0$, and consider that we can write,
	\begin{align*}
		\risktwo(\pi_T,\nu) = \Delta\Prob(\hat{\Delta}_{\ceil{T}}<0) + 2c\ceil{T} \leq \Delta\exp\left(-\frac{T\Delta^2}{4\sigma^2}\right) + 2c(T+1)
	\end{align*}
	Then, plugging in the proposed number of pulls, we get $\risktwo(\pistar,\nu)\leq\frac{8\sigma^2c}{\Delta^2}\log\left(\frac{e\Delta^3}{8\sigma^2c}\right)+2c$ when $\Delta\geq(8\sigma^2c)^{\nicefrac{1}{3}}$ and exactly $\risktwo(\pistar,\nu)=\nicefrac{\Delta}{2}$ otherwise, as then the policy guesses the optimal arm uniformly at random. Multiplying~\eqref{eq:2-arm-lb-sr} by 32 and adding $2c$ then clearly upper bounds this quantity. Further, maximizing this upper bound in terms of $\Delta$ (occurring at $\Delta=(8\sqrt{e}\sigma^2c)^{\nicefrac{1}{3}}$) yields,
	\begin{align*}
		\sup_{\nu\in\cM}\risktwo(\pistar,\nu) = \sup_{\Delta}\sup_{\nu\in\cM_\Delta}\risktwo(\pistar,\nu) \leq 3\left(\frac{\sigma^2c}{e}\right)^{\nicefrac{1}{3}} + 2c = 8\hardsrstar + 2c
	\end{align*}
\end{proof}

\section{Upper Bounds for \algname}
\label{app:alg-ub}
We begin by presenting a number of technical lemmas allowing us to control the behavior of \algname{} and prove our desired upper bounds on its performance.
\begin{lemma}[Bound on total number of pulls]
	\label{lem:alg-pulls}
	For any bandit instance $\nu\,$, using $\Nstar(k) = (kec)^{-1}$ and  $\Nstar(k)=(\nicefrac{3}{2e})\sigma^{\nicefrac{2}{3}}((k-1)c)^{-\nicefrac{2}{3}}\,$, \algname{} satisfies,
	\begin{align*}
		\Enupi[\tau] \leq \frac{2\log(K)}{ec}\,, \qquad \Enupi[\tau] \leq \frac{3\log(K)(K\sigma^2)^{\nicefrac{1}{3}}}{2c^{\nicefrac{2}{3}}}\,,
	\end{align*}
	respectively.
\end{lemma}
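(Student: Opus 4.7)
The plan starts by exploiting the fact that the surviving set $S$ can only shrink during execution (arms are eliminated but never re-added), so $|S|$ is non-increasing and the run naturally partitions into \emph{phases} of constant $|S|$. Denote the distinct sizes visited by $k_1 = K > k_2 > \cdots > k_m \geq 2$, the number of epochs in phase $i$ by $N_i$, and the cumulative epoch count through the end of phase $i$ by $M_i = \sum_{j \leq i} N_j$. The while-loop check $n \leq \Nstar(|S|)$ (performed before incrementing $n$) forces $M_i \leq \Nstar(k_i) + 1$ for each $i$; the additive $+1$ is negligible and will be absorbed into the final constants.

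The total number of pulls then satisfies $\tau = \sum_{i=1}^m k_i N_i$, and Abel summation rewrites this as $k_m M_m + \sum_{i<m}(k_i - k_{i+1})M_i$. Substituting $M_i \leq \Nstar(k_i)$ gives the deterministic bound
\begin{align*}
    \tau \;\leq\; k_m \Nstar(k_m) + \sum_{i=1}^{m-1} (k_i - k_{i+1})\Nstar(k_i).
\end{align*}
A short exchange argument, using only that $\Nstar$ is decreasing, will show that this right-hand side is maximized over all admissible sequences when $k_i - k_{i+1} = 1$ and $k_m = 2$: splitting any gap $k_i - k_{i+1} \geq 2$ by inserting an intermediate size strictly increases the sum (since the inserted term has a larger $\Nstar$ value), and shrinking $k_m$ toward $2$ likewise. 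This will yield the almost-sure bound $\tau \leq \sum_{k=2}^K \Nstar(k) + \Nstar(2)$, which immediately transfers to $\Enupi[\tau]$ since the right-hand side is non-random.

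The remaining step is to evaluate this sum for each choice of $\Nstar$. For $\Nstar(k) = (kec)^{-1}$, the plan is to use the harmonic-sum bound $\sum_{k=2}^K k^{-1} \leq \log K$ to obtain $\Enupi[\tau] \leq \log(K)/(ec) + 1/(2ec) \leq 2\log(K)/(ec)$ for $K \geq 2$. For $\Nstar(k) = (3/(2e))\sigma^{2/3}((k-1)c)^{-2/3}$, an integral comparison for the decreasing $p$-series with $p = 2/3$ gives $\sum_{j=1}^{K-1} j^{-2/3} \leq 3(K-1)^{1/3}$, hence $\Enupi[\tau] \leq (9/(2e))(K-1)^{1/3}\sigma^{2/3}c^{-2/3}$; matching the stated target then reduces to the elementary inequality $9/(2e) \leq (3/2)\log(K)(K/(K-1))^{1/3}$, which I will verify by direct computation at $K \in \{2,3\}$ and by monotonicity for $K \geq 4$. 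The main (modest) obstacle will be the exchange argument that pins down the extremal phase structure, together with the small-$K$ constant check in the second case; everything else is essentially bookkeeping.
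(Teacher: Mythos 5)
Your argument lands, after the phase decomposition, Abel summation, and the exchange argument, on exactly the intermediate bound the paper uses, namely $\tau \leq \Nstar(2) + \sum_{k=2}^{K}\Nstar(k)$; the paper reaches it in one line by noting that the $k$-th arm to be eliminated survives only while $\abs{S}\geq K-k+1$, so it is pulled at most $\Nstar(K-k+1)$ times, and the recommended arm at most $\Nstar(2)$ times. Your phase/Abel-summation machinery is correct but heavier than necessary for the same deterministic conclusion, and your treatment of the off-by-one from the while-loop check matches the paper's (which also defers the resulting $\bigO(Kc)$ slack to the theorem proofs rather than absorbing it into this lemma).

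There is one concrete step that fails as written: for the simple-regret budget, your proposed chain gives $\Enupi[\tau]\leq \frac{9}{2e}(K-1)^{\nicefrac{1}{3}}\sigma^{\nicefrac{2}{3}}c^{-\nicefrac{2}{3}}$, and the ``elementary inequality'' $\frac{9}{2e}\leq\frac{3}{2}\log(K)\left(\frac{K}{K-1}\right)^{\nicefrac{1}{3}}$ that you plan to verify is false at $K=2$: the left side is $\approx 1.655$ while the right side is $\frac{3}{2}\log(2)\cdot 2^{\nicefrac{1}{3}}\approx 1.310$. The culprit is the integral comparison $\sum_{j=1}^{K-1}j^{-\nicefrac{2}{3}}\leq 3(K-1)^{\nicefrac{1}{3}}$, which is far too loose when $K-1=1$. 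The lemma itself is still true at $K=2$: evaluating the sum exactly gives $\Enupi[\tau]\leq 2\Nstar(2)=\frac{3}{e}\sigma^{\nicefrac{2}{3}}c^{-\nicefrac{2}{3}}\approx 1.104\,\sigma^{\nicefrac{2}{3}}c^{-\nicefrac{2}{3}}$, which is below the target $\approx 1.310\,\sigma^{\nicefrac{2}{3}}c^{-\nicefrac{2}{3}}$. So you need to treat $K=2$ by direct evaluation (or use the tighter bound $1+\sum_{j=1}^{K-1}j^{-\nicefrac{2}{3}}\leq eK^{\nicefrac{1}{3}}\log(K)$, which is what the paper invokes and which holds for all $K\geq2$); your inequality as stated does hold for $K\geq3$. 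Everything else in the proposal, including the first bound via the harmonic sum, checks out.
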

\begin{proof}
	Let $\khat$ denote the index of the $k$-th arm eliminated by the algorithm. Then by construction, $\Enupi[N_{\khat}(\tau)] \leq \Nstar(K-k+1)\,$. Further, $\Enupi[N_{\Ihat}(\tau)] \leq \Nstar(2)\,$. Thus,
	\begin{align*}
		\Enupi[\tau] = \sum_{a=1}^K\Enupi[N_a(\tau)] \leq \Nstar(2)+\sum_{k=2}^K\Nstar(k)
	\end{align*}
	Then, apply the fact that $\nicefrac{1}{2}+\sum_{k=2}^Kk^{-1}\leq 2\log(K)$ and $1+\sum_{k=2}^K(k-1)^{-\nicefrac{2}{3}}\leq eK^{\nicefrac{1}{3}}\log(K)$ to prove the statements.
\end{proof}
\begin{lemma}[Elimination behavior]
	\label{lem:conf-elim}
	Consider a bandit instance $\nu$ satisfying, WLOG, $\muone\geq\mutwo\geq\cdots\geq\muK\,$. Let $n(t)$ be the epoch associated with time $t\,$. Define the good event,
	\begin{align*}
		G=\bigcap_{n(t)\leq n(\tau)}\bigcap_{k\in S\setminus\{1\}}\{\Delta_k\in(\muhat_1(n(t))-\muhat_k(n(t))-e_{n(t)},\muhat_1(n(t))-\muhat_k(n(t))+e_{n(t)})\}\,.
	\end{align*}
	Then,
	\begin{enumerate}
		\item $\Pnupi(G^c)\leq\delta$
		\item On $G\,$, $1\in S\;\forall\;n(t)\leq n(\tau)$ (\ie the optimal arm is never eliminated)
		\item On $G$, if $\Delta_k>\sqrt{\frac{16\sigma^2\log(\nicefrac{K\Nstar(k)}{\delta})}{\Nstar(k)}}$ for all $k\geq\ell\in\{2,\ldots,K\}$ and $\Nstar$ decreasing in $k\,$,
		\begin{align*}
			N_k(\tau) \leq \frac{16\sigma^2\log(\nicefrac{K\Nstar(k)}{\delta})}{\Delta_k^2} < \Nstar(k)\;\forall\;k\geq \ell
		\end{align*}
	\end{enumerate}
\end{lemma}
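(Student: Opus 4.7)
The plan is to prove the three parts in order: part (1) via a uniform sub-Gaussian concentration argument, part (2) via a direct manipulation of the good event, and part (3) via a backwards induction on the arm index $k$ that carefully uses the monotonicity of $\Nstar$.

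For part (1), I would fix an epoch $n$ and an arm $k \geq 2$ and observe that $\muhat_1(n) - \muhat_k(n) - \Delta_k$, being the difference of two independent sample means of $\sigma$-sub-Gaussian samples over $n$ observations each, is $\sigma\sqrt{2/n}$-sub-Gaussian. A two-sided Hoeffding tail bound then gives $\Pnupi(|\muhat_1(n) - \muhat_k(n) - \Delta_k| \geq e_n) \leq 2\exp(-n e_n^2/(4\sigma^2)) = 2\delta/(Kn)$ by the algorithm's choice of $e_n$. A union bound over $k \in \{2,\ldots,K\}$ and over $n \leq \Nstar(2)$ (the largest possible epoch count, since $\Nstar$ is decreasing in its argument) controls $\Pnupi(G^c)$ at the stated level.

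Part (2) is essentially immediate from the definition of $G$. On $G$, at every relevant epoch $n$ and for every $k \in S \setminus \{1\}$, the lower side of the confidence interval gives $\muhat_1(n) - \muhat_k(n) > \Delta_k - e_n \geq -e_n$, which rearranges to $\muhat_k(n) - \muhat_1(n) < e_n$. Taking the maximum over $k \in S$ yields $\max_{\ell \in S} \muhat_\ell(n) - \muhat_1(n) < e_n$, so the elimination rule in Algorithm~\ref{alg:our-alg} never removes arm $1$.

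For part (3), define $n_k^\star := \lceil 16 \sigma^2 \Delta_k^{-2} \log(K \Nstar(k)/\delta) \rceil$; the hypothesis on $\Delta_k$ implies $n_k^\star \leq \Nstar(k)$ and $\Delta_k > 2 e_{n_k^\star}$. I would prove by backwards induction on $k$ from $K$ down to $\ell$ that arm $k$ is eliminated by epoch $n_k^\star$, i.e.\ $N_k(\tau) \leq n_k^\star$. The key inductive observation is that at epoch $n_k^\star$, arms $k+1, \ldots, K$ have already been eliminated (in epochs $n_j^\star \leq n_k^\star$ for $j > k$, since $\Delta_j \geq \Delta_k$ and $\Nstar(j) \leq \Nstar(k)$), so $|S| \leq k$ and hence $\Nstar(|S|) \geq \Nstar(k) > n_k^\star$, meaning the algorithm is still running. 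By part (2), arm $1$ is in $S$, and on $G$ we have $\muhat_1(n_k^\star) - \muhat_k(n_k^\star) \geq \Delta_k - e_{n_k^\star} > e_{n_k^\star}$, so arm $k$ triggers the elimination condition at the end of epoch $n_k^\star$.

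The main obstacle is interlocking the dynamic budget with the induction in part (3): each inductive step requires confirming that the algorithm has not terminated by epoch $n_k^\star$ due to its budget cap, and this is exactly where the monotonicity of $\Nstar$ is essential, guaranteeing $\Nstar(|S|) \geq \Nstar(k) > n_k^\star$ once arms $k+1,\ldots,K$ are gone. A secondary point is that the union bound in part (1) picks up a factor of $\sum_{n \leq \Nstar(2)} 1/n \in \bigO(\log(1/c))$, which must either be tightened by a peeling or stitching argument or absorbed into the downstream choice of $\delta$ in Theorems~\ref{thm:k-arm-racing-pm} and~\ref{thm:k-arm-racing-sr}; this is consistent with the polylog factors that appear in those theorems.
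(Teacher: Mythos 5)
Your proposal follows essentially the same route as the paper's proof: part (2) is the identical one-line manipulation of the good event against the elimination rule, and part (3) is the same argument the paper gives (start from arm $K$, show it is eliminated by epoch $16\sigma^2\Delta_K^{-2}\log(K\Nstar(K)/\delta) < \Nstar(K)$ on $G$, then peel off arms $K-1,\dots,\ell$ in turn, using monotonicity of $\Nstar$ to rule out premature termination by the budget cap); your backwards induction with the explicit check $|S|\leq k \implies \Nstar(|S|)\geq\Nstar(k)>n_k^\star$ is a cleaner formalization of what the paper states somewhat informally, and your observation that $n_j^\star\leq n_k^\star$ for $j>k$ is exactly the ordering the paper relies on implicitly. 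The one substantive divergence is part (1). The paper does not union-bound over epochs at all: it asserts, by citation to the anytime confidence-interval literature \citep{jamieson2014lil,BAILB_Kaufmann2016}, that $e_n=\sqrt{4\sigma^2 n^{-1}\log(Kn\delta^{-1})}$ is a $\nicefrac{\delta}{K}$-correct confidence width \emph{uniformly over $n$} for each gap, and then union-bounds only over the $K-1$ arms. Your fixed-$n$ Hoeffding bound plus a union over $n\leq\Nstar(2)$ yields $\Pnupi(G^c)\lesssim\delta\log(\Nstar(2))$, which does not establish the stated $\Pnupi(G^c)\leq\delta$; you correctly identify this and name the two standard repairs (a peeling/stitching argument exploiting the $\log(n)$ inside $e_n$, or absorbing the harmonic factor into the downstream choice of $\delta$). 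So there is no missing idea, but as written your part (1) proves a weaker statement than the lemma claims, and closing that gap requires actually carrying out the peeling argument that the paper outsources to its references rather than leaving it as a remark.
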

\begin{proof}
	\phantom{x} \\
	\parahead{Part 1} 
		Letting $Y_{a,s}$ denote the $s$-th i.i.d.\ observation from arm $a\,$, by assumption, $Y_{1,s}-Y_{k,s}$ are $\sqrt{2}\sigma$-sub-Gaussian random variables with mean $\Delta_k\,$. Thus, $\sqrt{\frac{4\sigma^2\log(\nicefrac{n}{\delta})}{n}}$ is a $\delta$-correct confidence interval width for $\Delta_k$ after $n$ observations using $\widehat{\Delta}_{k,n}=\muhat_1(n)-\muhat_k(n)$ as the point estimate~\citep{jamieson2014lil,Kaufmann2016ComplexityBAI}. Replacing $\delta$ by $\nicefrac{\delta}{K}$ and taking a union bound across all $k\in S\setminus\{1\}$ then proves 1.
	\parahead{Part 2}
		Consider that on $G\,$, $\muhat_k(n)-\muhat_1(n) \leq e_n-\Delta_k \leq e_n$ for all $k\neq 1\,$, which proves 2.
	\parahead{Part 3}
		We begin with arm $K$. By the supposition, on $G\,$, there exists $n<\Nstar(k)$ such that $\muhat_1(n)-\muhat_K(n)-e_n\geq\Delta_K-2e_n>0\,$, and thus $K\notin S$ for all $m>n\,$. Further, we can upper bound the $n$ at which this is true by $\frac{16\sigma^2\log(\nicefrac{K\Nstar(k)}{\delta})}{\Delta_k^2}$ by construction of $e_n\,$, and this quantity less than $\Nstar(K)$ by the supposition. Then, because $K\notin S$ at time $\Nstar(K)\,$, if $\Nstar$ is decreasing in $k\,$, the algorithm will not be forced to terminate at time $\Nstar(K)$ by number of epochs, only if all arms other than 1 have already been eliminated, under which the statement would hold anyway. We can then use the same construction for each $k=K-1,\ldots,\ell$, proving 3.
\end{proof}
\begin{lemma}[Bound on probability of misidentification on the good event]
	\label{lem:alg-prob}
	For any bandit instance $\nu$ satisfying $\muone\geq\mutwo\geq\cdots\geq\muK\,$, and $\Nstar$ decreasing in $k$, if $M\in\{2,\ldots,K\}$ is the smallest value such that for each $k=M+1,\ldots,K\,$, $\Delta_k>\sqrt{\frac{16\sigma^2\log(\nicefrac{K\Nstar(k)}{\delta})}{\Nstar(k)}}$ (if no $\Delta_k$ satisfy this, $M=K$), then $\Pnupi(\{\Ihat=j\}\cap G)=0$ if $j>M$ and $\Pnupi(\{\Ihat=j\}\cap G)\leq\exp(-\frac{\Nstar(M)\Delta_j^2}{4\sigma^2})$ otherwise.
\end{lemma}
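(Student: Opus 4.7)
The plan is to split into two cases according to whether $j > M$ or $j \in \{2,\ldots,M\}$ (the case $j=1$ being vacuous), leveraging Lemma~\ref{lem:conf-elim} for the structural facts about \algname{} on the good event $G$.

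For $j > M$, the statement is essentially immediate: by definition of $M$, every arm $k > M$ satisfies the gap condition $\Delta_k > \sqrt{16\sigma^2\log(K\Nstar(k)/\delta)/\Nstar(k)}$, so Part~3 of Lemma~\ref{lem:conf-elim} guarantees that on $G$, arm $k$ is eliminated strictly before its per-arm budget is exhausted. Hence $j\notin S$ at termination, so $\{\Ihat=j\}\cap G=\emptyset$.

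For $j\in\{2,\ldots,M\}$, I would first argue that $n_{\text{final}}\geq \Nstar(M)$ on $\{\Ihat=j\}\cap G$. By the first case above, all arms $k>M$ have been eliminated before epoch $\Nstar(M+1)\leq \Nstar(M)$ (using that $\Nstar$ is decreasing in $k$), so by epoch $\Nstar(M)$ we have $|S|\leq M$ and the dynamic budget is $\Nstar(|S|)\geq \Nstar(M)$. Since $\Ihat=j\neq 1$ and arm $1$ never leaves $S$ on $G$ (Part~2 of Lemma~\ref{lem:conf-elim}), the algorithm cannot have exited via the $|S|=1$ branch, so it exited due to budget exhaustion, yielding $n_{\text{final}}\geq\Nstar(M)$. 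Moreover, $\Ihat=j$ forces $\muhat_j(n_{\text{final}})\geq \muhat_1(n_{\text{final}})$.

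The key remaining step, and the main obstacle, is controlling the probability that $\muhat_j(n)\geq\muhat_1(n)$ at some stopping time $n\geq\Nstar(M)$. Writing $W_s=(Y_{j,s}-Y_{1,s})+\Delta_j$, the $W_s$ are i.i.d.\ mean-zero and $2\sigma^2$-sub-Gaussian (since arms are pulled independently), and the event in question is $S_n\defeq\sum_{s=1}^n W_s\geq n\Delta_j$ for some $n\geq \Nstar(M)$. A naive union bound over epochs would lose a geometric-sum factor, so I would instead use a time-uniform Ville/Doob argument: the process $\exp(\lambda S_n - n\lambda^2\sigma^2)$ is a nonnegative supermartingale with mean at most $1$, and optimizing $\lambda=\Delta_j/(2\sigma^2)$ gives
\begin{align*}
\Pnupi\!\left(\exists\,n\geq \Nstar(M)\colon S_n\geq n\Delta_j\right)
\;\leq\; \exp\!\left(-\Nstar(M)\Delta_j^{\,2}/(4\sigma^2)\right),
\end{align*}
by applying Ville's maximal inequality to the supermartingale and noting that $S_n\geq n\Delta_j$ with $n\geq \Nstar(M)$ implies the supermartingale exceeds $\exp(\Nstar(M)\Delta_j^{\,2}/(4\sigma^2))$. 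Combining with the containment $\{\Ihat=j\}\cap G\subseteq\{\exists\,n\geq\Nstar(M)\colon \muhat_j(n)\geq \muhat_1(n)\}$ yields the claim. The rest is routine bookkeeping; the only conceptually nontrivial piece is upgrading Hoeffding at a fixed $n$ to a time-uniform statement that can be evaluated at the random $n_{\text{final}}$.
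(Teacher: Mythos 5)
Your proposal is correct and reaches the same bound, but it handles the one delicate step differently from the paper, and arguably more carefully. The decomposition is identical: for $j>M$ both you and the paper invoke Part~3 of Lemma~\ref{lem:conf-elim} to conclude arms beyond $M$ are gone on $G$, and for $j\leq M$ both reduce to the event that arm $j$ beats arm $1$ empirically at termination, after observing that on $G$ the algorithm cannot have stopped via $|S|=1$ and hence ran at least $\Nstar(M)$ epochs. Where you diverge is the probabilistic core: the paper replaces the random epoch $n(\tau)$ by the fixed epoch $\Nstar(M)$, writing $\Pnupi(\{\muhat_j(n(\tau))>\muhat_1(n(\tau))\}\cap\cdots)\leq\Pnupi(\{\muhat_j(\Nstar(M))>\muhat_1(\Nstar(M))\}\cap\cdots)$ and then applying a fixed-sample Hoeffding bound; as written this inequality is not a set inclusion (the empirical ordering at a later random time does not imply the same ordering at epoch $\Nstar(M)$), so the paper's step requires additional justification. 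Your time-uniform Ville/supermartingale argument over $\{\exists\,n\geq\Nstar(M):\;\sum_{s\le n}(Y_{j,s}-Y_{1,s})\geq 0\}$ sidesteps this entirely and, with the optimized $\lambda=\Delta_j/(2\sigma^2)$, recovers exactly the same constant $\exp(-\Nstar(M)\Delta_j^2/(4\sigma^2))$ with no geometric-sum loss. In short: same skeleton and same final bound, but your treatment of the random stopping time is a genuine improvement in rigor over the paper's fixed-time substitution, at the cost of invoking a maximal inequality rather than plain Hoeffding.
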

\begin{proof}
	We begin with the case $j>M\,$. By Lemma~\ref{lem:conf-elim}, arm 1 is never eliminated by the algorithm and arms $j,j+1,\ldots,K$ are eliminated before round $\Nstar(j)$. Then, on $G$, \algname{} only terminates because either $S=\{1\}$ or $n(\tau)=\Nstar(\abs{S})\geq\Nstar(j)$, making $\Pnupi(\{\Ihat=j\}\cap G)=0\,$. Now consider $j\leq M$. Then,
	\begin{align*}
		\Pnupi(\{\Ihat=j\}\cap G)
			& = \Pnupi(\{\Ihat=j\}\cap G\cap\{j\in S\text{ at }\tau\}) + 
				\Pnupi(\{\Ihat=j\}\cap G\cap\{j\notin S\text{ at }\tau\}) \\
			& = \Pnupi(\{\Ihat=j\}\cap G\cap\{j\in S\text{ at }\tau\}) \\
			& = \Pnupi(\{\muhat_j(n(\tau))>\muhat_1(n(\tau))\}\cap G\cap\{j\in S\text{ at }\tau\}) \\
			& \leq \Pnupi(\{\muhat_j(\Nstar(M))>\muhat_1(\Nstar(M))\}\cap G\cap\{j\in S\text{ at }\tau\}) \\
			& = \Pnupi\left(\left\{\frac{1}{\Nstar(M)}\sum_{n=1}^{\Nstar(M)}(Y_{j,n}-Y_{1,n})>0\right\}
				\cap G\cap\{j\in S\text{ at }\tau\}\right) \\
			& \leq \Pnupi\left(\frac{1}{\Nstar(M)}\sum_{n=1}^{\Nstar(M)}(Y_{j,n}-Y_{1,n})>0\right) \\
			& \leq \exp\left(-\frac{\Nstar(M)\Delta_j^2}{4\sigma^2}\right)
	\end{align*}
\end{proof}
\begin{lemma}[Bound on simple regret on the good event]
	\label{lem:alg-sr}
	For any bandit instance $\nu$ satisfying $\muone\geq\mutwo\geq\cdots\geq\muK\,$, and $\Nstar$ decreasing in $k$, if $M\in\{2,\ldots,K\}$ is the smallest value such that for each $k=M+1,\ldots,K\,$, $\Delta_k>\sqrt{\frac{16\sigma^2\log(\nicefrac{K\Nstar(k)}{\delta})}{\Nstar(k)}}$ (if no $\Delta_k$ satisfy this, $M=K$), then,
	\begin{align*}
	\Enupi[(\muone-\muIhat)\1_G]\leq\sqrt{\frac{4\sigma^2}{\sqrt{e}\Nstar(M)}}
	\end{align*}
\end{lemma}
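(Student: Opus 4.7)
The plan is to combine the per-arm probability bounds from Lemma~\ref{lem:alg-prob} with a second-moment argument. I would begin by using the fact that, on $G$, $\Ihat \in \{1, 2, \ldots, M\}$ (part 1 of Lemma~\ref{lem:alg-prob}) to write the expected regret as
\[
\Enupi[(\muone-\muIhat)\1_G] = \sum_{j=2}^M \Delta_j \,\Pnupi(\{\Ihat=j\}\cap G) \leq \sum_{j=2}^M \Delta_j \exp\!\left(-\tfrac{\Nstar(M)\Delta_j^2}{4\sigma^2}\right),
\]
after invoking the tail bound from part 2 of Lemma~\ref{lem:alg-prob}. The task then reduces to controlling the sum $\sum_j \Delta_j e^{-a\Delta_j^2}$ with $a \defeq \Nstar(M)/(4\sigma^2)$, uniformly in the number of surviving arms $M$.

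To avoid a naive $(M-1)$-factor from bounding each term by the pointwise maximum $\max_x x e^{-ax^2} = 1/\sqrt{2ae}$, I would lift to the second moment via Cauchy--Schwarz:
\[
\Enupi[(\muone-\muIhat)\1_G]^2 \leq \Enupi[(\muone-\muIhat)^2\1_G]\cdot \Pnupi(G) \leq \sum_{j=2}^M \Delta_j^2\, P_j,
\]
where $P_j \defeq \Pnupi(\{\Ihat=j\}\cap G)$. The $P_j$'s satisfy two natural constraints: the per-arm exponential tail $P_j \leq e^{-a\Delta_j^2}$, and the total-mass budget $\sum_j P_j \leq \Pnupi(G) \leq 1$. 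I would combine these to bound the second-moment sum by $1/(a\sqrt{e}) = 4\sigma^2/(\sqrt{e}\Nstar(M))$, which after taking the square root yields the claim. Elementary calculus gives the pointwise bound $\Delta_j^2 e^{-a\Delta_j^2} \leq 1/(ae)$, attained at $\Delta_j = 1/\sqrt{a}$, which will be a key input.

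The main obstacle is the joint use of the two constraints on the $P_j$'s: using only the pointwise bound and summing introduces an extraneous $M-1$, while using only $\sum_j P_j \leq 1$ gives $\Delta_M^2$ without exploiting the tail inequality. I would resolve this with a case split on the size of $\Delta_M$ relative to the critical scale $1/\sqrt{a}$: when $\Delta_M$ is small, the trivial bound $\sum_j \Delta_j^2 P_j \leq \Delta_M^2 \sum_j P_j \leq \Delta_M^2$ already suffices; when $\Delta_M$ is large, the tail $e^{-a\Delta_j^2}$ forces the sum to concentrate near a single index where $\Delta_j \approx 1/\sqrt{a}$, giving the $\sqrt{e}$ improvement over the per-term maximum $1/(ae)$ and recovering the target bound $\sqrt{4\sigma^2/(\sqrt{e}\Nstar(M))}$.
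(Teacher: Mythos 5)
Your opening reduction coincides with the paper's---write the regret on $G$ as $\sum_{j=2}^{M}\Delta_j\Pnupi(\{\Ihat=j\}\cap G)$ and invoke Lemma~\ref{lem:alg-prob}---and your Cauchy--Schwarz step is valid, but the final optimization has a genuine gap. The relaxation to the two marginal constraints $P_j\leq e^{-a\Delta_j^2}$ and $\sum_j P_j\leq 1$ cannot yield an $M$-independent bound: the linear program $\max\sum_j\Delta_j^2P_j$ under only these constraints has value $\log(M-1)/a$, not $\nicefrac{1}{(a\sqrt{e})}$. Concretely, take $M-1$ suboptimal arms all at the common gap $\Delta^2=\log(M-1)/a$ and set $P_j=\nicefrac{1}{(M-1)}=e^{-a\Delta^2}$; both constraints hold, yet $\sum_j\Delta_j^2P_j=\Delta^2=\log(M-1)/a$, which exceeds your target already for $M\geq3$ and grows without bound in $M$. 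This also defeats the case split you propose: in the ``large $\Delta_M$'' branch the sum does \emph{not} concentrate on a single index near $\nicefrac{1}{\sqrt{a}}$---in the example above all $M-1$ indices contribute equally---so the $\sqrt{e}$ improvement over the per-term maximum is unavailable, and even the cruder target $\nicefrac{1}{a}$ is reached (all gaps equal to $\nicefrac{1}{\sqrt{a}}$, $P_j=\nicefrac{1}{(M-1)}\leq e^{-1}$ for $M\geq4$). The failure is not in lifting to the second moment; it is that the marginal tail plus total-mass budget is strictly weaker information than what the bound requires.

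The paper closes this gap by exploiting correlation structure that your relaxation discards: on $G$, $\{\Ihat=k\}$ is contained in $\bigcap_{\ell=1}^{k-1}\{\muhat_k(n(\tau))>\muhat_\ell(n(\tau))\}$, i.e., arm $k$ must empirically beat \emph{every} better-or-equal arm, and each comparison against an arm $\ell$ with $\mu_\ell\geq\mu_k$ contributes a factor of at most $\nicefrac{1}{2}$. This damps the $k$-th term geometrically, so $\sum_{k=2}^{M}\Delta_k\Pnupi(\{\Ihat=k\}\cap G)$ collapses to $2\Delta_2\Pnupi(\{\muhat_2(n(\tau))>\muhat_1(n(\tau))\}\cap G)\leq 2\Delta_2\exp(-a\Delta_2^2)$, and maximizing the single remaining term over $\Delta_2$ gives the stated $M$-free constant. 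If you want to keep your second-moment route, you must import some version of this geometric decay as an additional constraint on the $P_j$ (something like $P_j\leq 2^{-(j-2)}e^{-a\Delta_2^2}$); without it, the relaxed problem genuinely carries a $\log M$ penalty.
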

\begin{proof}
	We begin by relating the simple regret with the probability of misidentification by applying Lemma~\ref{lem:alg-prob},
	\begin{align*}
		\Enupi[(\muone-\muIhat)\1_G]
		& = \sum_{i=2}^K\Delta_k\Pnupi(\{\Ihat=k\}\cap G) \\
		& \leq \sum_{k=2}^M\Delta_k\Pnupi\left(\bigcap_{\ell=1}^{k-1}\{\muhat_k(n(\tau))>\muhat_\ell(n(\tau))\}\cap G\right)		
	\end{align*}
	Now, consider that for $k>\ell\geq2, \Pnupi(\{\muhat_k(n(\tau))>\muhat_\ell(n(\tau))\}\cap G)$ is maximized when $\mu_k=\mu_\ell$ and is equal to $\nicefrac{1}{2}$ when this is the case. Thus, again applying Lemma~\ref{lem:alg-prob}, we can write,
	\begin{align*}
		\Enupi[(\muone-\muIhat)\1_G] 
		& \leq \Delta_2 \sum_{k=2}^M\frac{\Pnupi(\{\muhat_2(n(\tau))>\muhat_1(n(\tau))\}\cap G)}{2^{k-1}} \\
		& \leq 2\Delta_2\Pnupi(\{\muhat_2(n(\tau))>\muhat_1(n(\tau))\}\cap G) \\
		& \leq 2\Delta_2\exp\left(-\frac{\Nstar(M)\Delta_2^2}{4\sigma^2}\right)
	\end{align*}
	Maximizing in terms of $\Delta_2$ then proves the statement.
\end{proof}
With this collection of technical lemmas providing control on the behavior of \algname, we are ready to prove Theorems~\ref{thm:k-arm-racing-pm} and~\ref{thm:k-arm-racing-sr}.
\begin{proof}[Proof of Theorem~\ref{thm:k-arm-racing-pm}]
	We break this proof into two cases. First, consider problems of complexity $H\leq(\sigma^2c)^{-1}$ with $\muone\geq\mutwo\geq\cdots\geq\muK$. Further, let $M\in\{1,\ldots,K\}$ be the smallest value such that for each $k=M+1,\ldots,K\,$, $\Delta_k>\sqrt{16ek\sigma^2c\log(\nicefrac{K\Nstar(k)}{\delta})}$ (if no $\Delta_k$ satisfy this, $M=K$). Then, by the definition of $H$, we can write
	\begin{align}
		\label{eq:pf-mi-lb}
		\hardmi(H) = \frac{\sigma^2cH}{4}\log\left(\frac{e}{\sigma^2cH}\right) \geq
		\frac{M-1}{64eM\log(\nicefrac{K\Nstar(M)}{\delta})} + \frac{\sigma^2c}{4}\sum_{k=M+1}^K\frac{1}{\Delta_k^2}
	\end{align}
	Now, if $M\geq2$, we apply Lemmas~\ref{lem:alg-pulls} and~\ref{lem:conf-elim} to show the following:
	\begin{align*}
		\label{eq:pf-mi-M-ub}
		\Pnupi(\{\Ihat\neq 1\}\cap G) + c\Enupi[\tau\1_G]
		& = \Pnupi(\{\Ihat\neq 1\}\cap G) + c\sum_{k=1}^K\Enupi[N_k(\tau)\1_G] \\
		\numberthis
		& \leq 1 + \frac{2\log(M)}{e} + 16\sigma^2c\sum_{k=M+1}^K\frac{\log(\nicefrac{K\Nstar(k)}{\delta})}{\Delta_k^2}
	\end{align*}
	If, in fact, $M=1\,$, then combining the results of Lemmas~\ref{lem:alg-pulls},~\ref{lem:conf-elim}, and~\ref{lem:alg-prob}, we can write
	\begin{align}
		\label{eq:pf-mi-M1-ub}
		\Pnupi(\{\Ihat\neq 1\}\cap G) + c\Enupi[\tau\1_G] \leq 16\sigma^2c\sum_{k=M+1}^K\frac{\log(\nicefrac{K\Nstar(k)}{\delta})}{\Delta_k^2}
	\end{align}
	Then, multiplying~\eqref{eq:pf-mi-lb} by $760\log(K)\log(\nicefrac{K\log(K)}{ec^2})$ and adding $Kc$ (to account for non-integer pulls) then upper bounds both~\eqref{eq:pf-mi-M-ub} and~\eqref{eq:pf-mi-M1-ub}. Now, consider the case where $H>(\sigma^2c)^{-1}$. Then $\hardmi(H)=\nicefrac{1}{4}$. Directly applying Lemma~\ref{lem:alg-pulls} gives us for all $H$,
	\begin{align*}
		\Pnupi(\{\Ihat\neq 1\}\cap G) + c\Enupi[\tau\1_G] \leq 1 + \frac{2\log(K)}{e}\leq 760\log(K)\log\left(\frac{K\log(K)}{ec^2}\right)\left(\frac{1}{4}\right)
	\end{align*}
	Finally, consider that by our choice of $\delta\,$, using Lemmas~\ref{lem:alg-pulls} and~\ref{lem:conf-elim}, regardless of the value of $H$, we have
	\begin{align*}
		\Pnupi(G^c)\left(\Pnupi(\Ihat\neq1\mid G^c)+c\Enupi[\tau\mid G^c]\right) \leq
		\delta\left(1+\frac{2\log(K)}{e}\right) \leq c
	\end{align*}
	This then proves the statement.
\end{proof}

\begin{proof}[Proof of Theorem~\ref{thm:k-arm-racing-sr}]
	This proof largely mirrors that of~\ref{thm:k-arm-racing-pm}. Again, first consider problems satisfying $H\Delta_2^{-1}\leq(\sigma^2c)^{-1}$ with $\muone\geq\mutwo\geq\cdots\geq\muK$, and let $M\in\{1,\ldots,K\}$ be the smallest value such that for each $k=M+1,\ldots,K\,$, $\Delta_k>\sqrt{(\nicefrac{32e}{3})((k-1)\sigma^2c)^{\nicefrac{2}{3}}\log(\nicefrac{K\Nstar(k)}{\delta})}$ (if no $\Delta_k$ satisfy this, $M=K$). Then,
	\begin{align}
		\label{eq:pf-sr-lb}
		\hardsr(H)\geq\frac{3(M-1)^{\nicefrac{1}{3}}(\sigma^2c)^{\nicefrac{1}{3}}}{128e\log(K\Nstar(M)\delta^{-1})} + \frac{\sigma^2c}{4}\sum_{k=M+1}^K\frac{1}{\Delta_k^2}
	\end{align}
	If $M\geq 2$, we apply Lemmas~\ref{lem:alg-pulls},~\ref{lem:conf-elim}, and~\ref{lem:alg-sr} to show
	\begin{align}
		\label{eq:pf-sr-M-ub}
		\begin{split}
		\Enupi[(\muone-\muIhat)\1_G] + c\Enupi[\tau\1_G] & \leq \sqrt{\frac{8\sqrt{e}}{3}}((M-1)\sigma^2c)^{\nicefrac{1}{3}}+\frac{3\log(M)}{2}(M\sigma^2c)^{\nicefrac{1}{3}} \\ & \qquad + 16\sigma^2c\sum_{k=2}^K\frac{\log(K\Nstar(k)\delta^{-1})}{\Delta_k^2}
		\end{split}
	\end{align}
	Additionally, if $M=1$, then,
	\begin{align}
		\label{eq:pf-sr-M1-ub}
		\Enupi[(\muone-\muIhat)\1_G] + c\Enupi[\tau\1_G] \leq 32\sigma^2c\sum_{k=2}^K\frac{\log(K\Nstar(k)\delta^{-1})}{\Delta_k^2}
	\end{align}
	Then, multiplying~\eqref{eq:pf-sr-lb} by $575\log(K)\log(K\log(K)B\sigma^{\nicefrac{5}{3}}c^{-\nicefrac{4}{3}})$ upper bounds both~\eqref{eq:pf-sr-M-ub} and~\eqref{eq:pf-sr-M1-ub}. Now, for the case where $H\Delta_2^{-1}>(\sigma^2c)^{-1}$ and for the worst-case comparison, we apply Lemmas~\ref{lem:alg-pulls} and~\ref{lem:alg-sr} to show for all $H$,
	\begin{align}
		\label{eq:pf-sr-max}
		\Enupi[(\muone-\muIhat)\1_G] + c\Enupi[\tau\1_G] \leq \sqrt{\frac{8\sqrt{e}}{3}}((K-1)\sigma^2c)^{\nicefrac{1}{3}}+\frac{3\log(K)}{2}(K\sigma^2c)^{\nicefrac{1}{3}}
	\end{align}
	We then have~\eqref{eq:pf-sr-max} upper bounded by $4\log(K)(K\sigma^2c)^{\nicefrac{1}{3}}$, and $\hardsr(H)\geq0$. We can also upper bound~\eqref{eq:pf-sr-max} by $20\log(K)\hardsrstar$. Finally, we never incur more than an additional $Kc$ risk due to integer pulls, and by choice of $\delta$,
	\begin{align*}
		\Pnupi(G^c)\left(\Enupi(\muone-\muIhat\mid G^c)+c\Enupi[\tau\mid G^c]\right) \leq
		\delta\left(B+\frac{3c\log(K)\sigma^{\nicefrac{2}{3}}}{ec^{\nicefrac{2}{3}}}\right) \leq c
	\end{align*}
	which proves all statements.
\end{proof}

Now, despite our 2-arm results being corollaries of their more general $K$-arm counterparts, we are able to provide tighter constants in Corollaries~\ref{cor:2-arm-racing-pm} and~\ref{cor:2-arm-racing-sr} by utilizing some more precise techniques that are not generally applicable in the $K$-arm case. For both cases, we apply Lemma~\ref{lem:conf-elim} in the 2-arm case to identify a $\Deltastar$ such that, for all $\Delta>\Deltastar$, the algorithm is guaranteed to identify the optimal arm before reaching $\Nstar$ samples per arm on the good event $G$. We then show that we simply need to find a multiplier which makes the lower bound larger than the upper bound at $\Deltastar$, and this multiplier will work for all other $\Delta$.
\begin{proof}[Proof of Corollary~\ref{cor:2-arm-racing-pm}]
	We begin by using Lemma~\ref{lem:conf-elim} to identify $\Deltastar=\sqrt{32e\sigma^2c\log\left(\frac{e+1}{(ec)^2}\right)}$, which, combined with Lemma~\ref{lem:alg-prob}, allows us to write,
	\begin{align}
		\label{eq:proof-2-arm-mi-ub}
		\sup_{\nu\in\cM_\Delta}\riskone(\pi,\nu) \leq \riskoneub(\Delta)\defeq\begin{cases}
			\exp\left(-\frac{\Delta^2}{8e\sigma^2c}\right) + \frac{1}{e} + 3c\,, & \text{if }\Delta\leq\Deltastar \\
			\frac{32\sigma^2c\log\left(\frac{e+1}{(ec)^2}\right)}{\Delta^2} + 3c\,, & \text{if }\Delta>\Deltastar
		\end{cases}
	\end{align}
	where the additive $3c$ term is to account for integer pulls for each of the 2 arms and an additional $c$ bound for the expected risk on $G^c$. Clearly, for any $a\geq128\log\left(\frac{e+1}{(ec)^2}\right)$,~\eqref{eq:proof-2-arm-mi-ub} is upper bounded by $a\hardmi(\Delta)+3c$ for all $\Delta>\Deltastar$. We then divide our analysis for the remaining $\Delta$ into two cases: when $\Delta\leq\sqrt{e\sigma^2c}$ and otherwise. First, when $\Delta\leq\sqrt{e\sigma^2c}$, 
	\begin{align*}
		\riskoneub(\Delta)\leq \frac{e+1}{e}+3c\,, \qquad \hardmi(\Delta)\geq \frac{1}{2e}\,,
	\end{align*}
	and so $\riskoneub(\Delta)\leq 8\hardmi(\Delta)$ for $\Delta\leq\sqrt{e\sigma^2c}$. Finally, we must consider $\Delta\in(\sqrt{e\sigma^2c},\Deltastar]$. We begin by comparing~\eqref{eq:proof-2-arm-mi-ub} and~\eqref{eq:2-arm-lb-pm} at $\Deltastar$, then we prove that this is sufficient. This gives us,
	\begin{align*}
		\riskoneub(\Deltastar) = \left(\frac{(ec)^2}{e+1}\right)^4 + \frac{1}{e}+3c\,, \qquad 
		\hardmi(\Deltastar) = \frac{\log\left(32e^2\log\left(\frac{e+1}{(ec)^2}\right)\right)}{128e\log\left(\frac{e+1}{(ec)^2}\right)}
	\end{align*}
	Supposing $c<\nicefrac{1}{4}$,\footnote{%
	Previously, we have not put any restriction on the value of $c$, but we have implicitly assumed $c\ll1$ by the construction of our problem setting. Consider that, under $\riskone$, if $c\geq\nicefrac{1}{4}$, one will perform uniformly best on all instances by simply guessing the optimal arm uniformly at random. We do not explicitly account for this behavior in our algorithm construction for simplicity, but it is unrealistic to let $c\geq\nicefrac{1}{4}$ in practical settings.}
	we can see that $\riskoneub(\Deltastar)\leq128\log\left(\frac{e+1}{(ec)^2}\right)\hardmi(\Deltastar)$. Finally, we conclude that this is sufficient to prove the statement by showing that $128\log\left(\frac{e+1}{(ec)^2}\right)\hardmi(\Delta)-\riskoneub(\Delta)$ is decreasing for $\Delta\in(\sqrt{e\sigma^2c},\Deltastar]$. We show this here:
	\begin{align*}
		\frac{\partial}{\partial\Delta} a\hardmi(\Delta)-\riskoneub(\Delta)
			& = -\frac{a\sigma^2c}{2\Delta^3}\log\left(\frac{\Delta^2}{\sigma^2c}\right) +
				\frac{\Delta}{4e\sigma^2c}\exp\left(-\frac{\Delta^3}{8e\sigma^2c}\right) \\
			& \leq -\frac{a\sigma^2c}{2\Delta^3} + \frac{\Delta}{4e\sigma^2c}\left(\frac{8e\sigma^2c}{\Delta^2}\right)^2 \\
			& = -\frac{a\sigma^2c}{2\Delta^3} + \frac{16e\sigma^2c}{\Delta^3}\,,
	\end{align*}
	which is $<0$ when $a>32e\,,$ which is true for $a=128\log\left(\frac{e+1}{(ec)^2}\right)$. Thus, we have proven $\forall\;\Delta$,
	\begin{align*}
		128\log\left(\frac{e+1}{(ec)^2}\right)\hardmi(\Delta) \geq \riskoneub(\Delta) \geq \sup_{\nu\in\cM_\Delta}\riskone(\pi,\nu)
	\end{align*}
\end{proof}
\begin{proof}[Proof of Corollary~\ref{cor:2-arm-racing-sr}]
	We follow the same general proof strategy as in the previous proof. We again apply Lemma~\ref{lem:conf-elim} to identify $\Deltastar = (\sigma^2c)^{\nicefrac{1}{3}}\sqrt{\nicefrac{(8e)}{3}\log(\nicefrac{2\Nstar}{\delta})}$ and combine it with Lemma~\ref{lem:alg-prob} to write,
	\begin{align}
		\label{eq:proof-2-arm-sr-ub}
		\sup_{\nu\in\cM_\Delta}\risktwo(\pi,\nu) \leq \risktwoub(\Delta)\defeq \begin{cases}
			\Delta\exp\left(-\frac{3\Delta^2}{8e(\sigma^2c)^{\nicefrac{2}{3}}}\right) + \frac{3}{e}(\sigma^2c)^{\nicefrac{1}{3}} + 3c\,, &
				\text{if }\Delta\leq\Deltastar \\
			\frac{32\sigma^2c\log(\nicefrac{2\Nstar}{\delta})}{\Delta^2} + 3c\,, & \text{if }\Delta>\Deltastar
		\end{cases}
	\end{align}
	First, when $\Delta<(\sigma^2c)^{\nicefrac{1}{3}}$, clearly $\risktwoub(\Delta)\leq 4\hardmi(\Delta)+2(\sigma^2c)^{\nicefrac{1}{3}}+3c$ by~\eqref{eq:proof-2-arm-sr-ub}. Then, noting that $\frac{3B\sigma^{\nicefrac{4}{3}}}{c^{\nicefrac{5}{3}}}\geq\frac{2\Nstar}{\delta}$, we can clearly see that $\risktwoub(\Delta)\leq128\log\left(\frac{3B\sigma^{\nicefrac{4}{3}}}{c^{\nicefrac{5}{3}}}\right)\hardsr(\Delta)+3c$ for $\Delta>\Deltastar$. To prove this same bound for $\Delta\in[(\sigma^2c)^{\nicefrac{1}{3}},\Deltastar]$, we follow the same technique as in the previous proof. First, when $\Delta\in[(\sigma^2c)^{\nicefrac{1}{3}},(\sqrt{e}\sigma^2c)^{\nicefrac{1}{3}}]$,
	\begin{align*}
		\risktwoub(\Delta)\leq (\sigma^2c)^{\nicefrac{1}{3}}\left[\exp\left(\frac{1}{6}-\frac{3}{8e^{\nicefrac{2}{3}}}\right) + \frac{3}{e}\right]+3c\,,
		\qquad \hardsr(\Delta) \geq \frac{(\sigma^2c)^{\nicefrac{1}{3}}}{4}\,,
	\end{align*}
	and thus $\risktwoub(\Delta)\leq 9\hardsr(\Delta)+3c$ for $\Delta\in[(\sigma^2c)^{\nicefrac{1}{3}},(\sqrt{e}\sigma^2c)^{\nicefrac{1}{3}}]$. To prove the bound for $\Delta\in((\sqrt{e}\sigma^2c)^{\nicefrac{1}{3}},\Deltastar]$, we again compare the two at $\Deltastar$ and then show that the difference between the functions is decreasing in this range of $\Delta$, and thus this is sufficient. At $\Deltastar$, we have,
	\begin{align*}
		\risktwoub(\Deltastar) & = \frac{(\sigma^2c)^{\nicefrac{1}{3}}\sqrt{\frac{32}{3e}\log(\nicefrac{2\Nstar}{\delta})}}{(\nicefrac{2\Nstar}{\delta})^4} + \frac{3}{e}(\sigma^2c)^{\nicefrac{1}{3}}+3c \leq \frac{5}{e}(\sigma^2c)^{\nicefrac{1}{3}} +3c \\
		\hardsr(\Deltastar) & = \frac{3(\sigma^2c)^{\nicefrac{1}{3}}}{128e\log(\nicefrac{2\Nstar}{\delta})}
			\log\left(\frac{32e^{\nicefrac{5}{2}}}{3^{\nicefrac{3}{2}}}\log^{\nicefrac{3}{2}}(\nicefrac{2\Nstar}{\delta})\right) \geq
			\frac{9(\sigma^2c)^{\nicefrac{1}{3}}}{128e\log(\nicefrac{2\Nstar}{\delta})}
	\end{align*}
	Thus, we have $\risktwoub(\Deltastar)\leq128\log\left(\frac{3B\sigma^{\nicefrac{4}{3}}}{c^{\nicefrac{5}{3}}}\right)\hardsr(\Deltastar)+3c$. We then conclude this portion of the proof by showing $128\log\left(\frac{3B\sigma^{\nicefrac{4}{3}}}{c^{\nicefrac{5}{3}}}\right)\hardsr(\Delta)-\risktwoub(\Delta)$ is decreasing for $\Delta\in((\sqrt{e}\sigma^2c)^{\nicefrac{1}{3}},\Deltastar]$. We show this here:
	\begin{align*}
		\frac{\partial}{\partial\Delta} a\hardsr(\Delta) - \risktwoub(\Delta) 
		= -\frac{a\sigma^2c}{4\Delta^3}\log\left(\frac{\Delta^6}{e(\sigma^2c)^2}\right) - \exp\left(-\frac{3\Delta^2}{8e(\sigma^2c)^{\nicefrac{2}{3}}}\right)\left(1-\frac{3\Delta^2}{4e(\sigma^2c)^{\nicefrac{2}{3}}}\right)
	\end{align*}
	This is $<0$ for any $a\geq0$ when $\Delta\in((\sqrt{e}\sigma^2c)^{\nicefrac{1}{3}},\sqrt{\nicefrac{4e}{3}}(\sigma^2c)^{\nicefrac{1}{3}})$. When $\Delta\in[\sqrt{\nicefrac{4e}{3}}(\sigma^2c)^{\nicefrac{1}{3}},\Deltastar]$,
	\begin{align*}
		\frac{\partial}{\partial\Delta} a\hardsr(\Delta) - \risktwoub(\Delta)
		& \leq -\frac{a\sigma^2c}{2\Delta^3} + \frac{3\Delta^2}{4e(\sigma^2c)^{\nicefrac{2}{3}}}\left(\frac{8e(\sigma^2c)^{\nicefrac{2}{3}}}{3\Delta^2}\right)^{\nicefrac{5}{2}} \\
		& = -\frac{a\sigma^2c}{2\Delta^3} + \frac{\sigma^2c}{4\Delta^3}\left(8^{\nicefrac{5}{2}}\left(\frac{e}{3}\right)^{\nicefrac{3}{2}}\right)\,,
	\end{align*}
	which is $<0$ for any $a>78$, and in particular, $a=128\log\left(\frac{3B\sigma^{\nicefrac{4}{3}}}{c^{\nicefrac{5}{3}}}\right)$. Now, all that is left to prove is the worst-case comparison with $\hardsrstar$. We can show this simply by considering that~\eqref{eq:proof-2-arm-sr-ub} is maximized at $\Delta=\sqrt{\nicefrac{4e}{3}}(\sigma^2c)^{\nicefrac{1}{3}}\,$, where it takes value $(\sqrt{\nicefrac{4}{3}}+\nicefrac{3}{e})(\sigma^2c)^{\nicefrac{1}{3}}+3c$, which is clearly upper bounded by $9\hardsrstar+3c$.
\end{proof}

\section{Additional Experiments}
\label{app:experiments}
Here we provide additional $K$-arm experiments. All experiments were performed using a 3.7GHz AMD Ryzen 9 5900X 12-Core processor with 24 GB of memory. Total runtime across all experiments took approximately 7.5 hours, and safeguards were employed to prevent the fixed confidence algorithms from continuing to sample after already severely underperforming the other methods when the sub-optimality gaps were particularly small ($\nicefrac{10}{c}$ total samples allowed).

\parahead{K-arm simulations}
We now include a number of additional $K$-arm experiments to demonstrate that our algorithm continues to perform well compared to traditional fixed budget and confidence methods when we move beyond the simple 2-arm case. For all of our $K$-arm experiments, we choose to use Gaussian arms with $\sigma^2=1$ for simplicity. We begin with the ``1-sparse'' setting, where $\mu_1=\Delta$ and $\mu_k=0$ for all $k\neq1$, resulting in $H=(K-1)\Delta^{-2}$, for $\Delta\in[0.05,2]$ for the probability of misidentification performance penalty and $\Delta\in[0.05,3]$ for the simple regret performance penalty. We additionally vary $K$ among 8, 16, and 32. For these experiments, we average across $10^4$ runs each with different random seeds. As in \S~\ref{sec:experiments}, we compare to Sequential Halving~\citep{LogLog_Karnin2013} for fixed budget and we use an elimination, or ``racing,'' procedure for fixed confidence, with confidence bounds $\sqrt{4\sigma^2n^{-1}\log(Kn\delta^{-1})}\,$. To extend to the $K$-arm case, our ``low'' budget is now $5K$, and our ``high'' budget is $250K$, which align with our choices of 10 and 500 in the 2-arm case. We still use $\delta=0.1$ and $\delta=0.01$ for our confidences. As we can see in Fig~\ref{fig:exp-k-sparse}, in the 1-sparse setting, \algname{} still enjoys uniformly good performance across the full range of $\Delta$, while the fixed budget and confidence approaches have some region where they perform sub-optimally.
\insertKSparse

To explore the performance of \algname{} and fixed confidence and budget approaches across a variety of problem structures, we additionally considered the ``linear decay'' setting, where we set $\mu_1 = \Delta_2$ and $\mu_k = -\Delta_2(\frac{k-2}{K-2})$ for $k\neq1$ so that the suboptimality gaps linearly increase from $\Delta_2$ to $2\Delta_2$. This results in $H \approx 0.5K\Delta_2^{-2}$. We again let $\Delta_2\in[0.05,2]$ for $\riskone$ and $\Delta_2\in[0.05,3]$ for $\risktwo$, average across $10^4$ runs each with a different random seed, and vary $K$ among 8, 16, and 32. As we can see in Fig~\ref{fig:exp-k-linear}, this setting provides similar results to the 1-sparse and 2-arm settings, with \algname{} performing well across the range of $\Delta_2$ values, while the other methods generally perform sub-optimally for some $\Delta_2$ values. 
\insertKLinear

%%%%%%%%%%%%%%%%%%%%%%%%%%%%%%%%%%%%%%%%%%%%%%%%%%%%%%%%%%%%

\end{document}